\renewcommand\bibentry[1]{\nocite{#1}{\frenchspacing\@nameuse{BR@r@#1\@extra@b@citeb}}}
\newcommand\deepmind{}
\def\eqref#1{equation~\ref{#1}}
\def\1{\bm{1}}
\def\rd{{\textnormal{d}}}
\def\rl{{\textnormal{l}}}
\def\vmu{{\bm{\mu}}}
\def\vtheta{{\bm{\theta}}}
\def\va{{\bm{a}}}
\def\vf{{\bm{f}}}
\def\vr{{\bm{r}}}
\def\vs{{\bm{s}}}
\def\vw{{\bm{w}}}
\def\vx{{\bm{x}}}
\def\vy{{\bm{y}}}
\def\vz{{\bm{z}}}
\def\vphi{{\bm{\phi}}}
\def\mW{{\bm{W}}}
\def\mX{{\bm{X}}}
\def\mY{{\bm{Y}}}
\def\mSigma{{\bm{\Sigma}}}
\DeclareMathAlphabet{\mathsfit}{\encodingdefault}{\sfdefault}{m}{sl}
\SetMathAlphabet{\mathsfit}{bold}{\encodingdefault}{\sfdefault}{bx}{n}
\def\gA{{\mathcal{A}}}
\def\gB{{\mathcal{B}}}
\def\gC{{\mathcal{C}}}
\def\gD{{\mathcal{D}}}
\def\gJ{{\mathcal{J}}}
\def\gK{{\mathcal{K}}}
\def\gL{{\mathcal{L}}}
\def\gN{{\mathcal{N}}}
\def\gP{{\mathcal{P}}}
\def\gQ{{\mathcal{Q}}}
\def\gR{{\mathcal{R}}}
\def\gS{{\mathcal{S}}}
\def\gU{{\mathcal{U}}}
\def\gV{{\mathcal{V}}}
\def\gW{{\mathcal{W}}}
\def\gX{{\mathcal{X}}}
\def\gY{{\mathcal{Y}}}
\def\sD{{\mathbb{D}}}
\def\sH{{\mathbb{H}}}
\def\sN{{\mathbb{N}}}
\def\sR{{\mathbb{R}}}
\def\sZ{{\mathbb{Z}}}
\newcommand{\E}{\mathbb{E}}
\newcommand{\KL}{\mathbb{D}_{\textsc{kl}}}
\newcommand{\tran}{^\top}
\newcommand{\inv}{^{-1}}
\DeclareMathOperator*{\argmax}{arg\,max}
\DeclareMathOperator*{\argmin}{arg\,min}
\newtheorem{assumption}{Assumption}
\newtheorem{definition}{Definition}
\newtheorem{lemma}{Lemma}
\definecolor{mplcyan}{RGB}{24,179,178}
\definecolor{mplmagenta}{RGB}{175, 0, 178}
\newcommand*{\cyancross}{\protect\tikz[scale=2]
    \protect\draw plot[mark=x, mark options={color=mplcyan, line width=1pt}] coordinates {(0,0)};}
\newcommand*{\magcross}{\protect\tikz[scale=2]
    \protect\draw plot[mark=x, mark options={color=mplmagenta, line width=1pt}] coordinates {(0,0)};}
\newcommand*{\blackcross}{\protect\tikz[scale=2]
    \protect\draw plot[mark=x, mark options={color=black, line width=1pt}] coordinates {(0,0)};}
\renewcommand{\paragraph}{%
  \@startsection{paragraph}{4}{\z@}%
                {0ex \@plus 0.5ex \@minus 0.2ex}%
                {-1em}%
                {\normalsize\bf}%
}
\renewcommand{\rl}{\textsc{rl}\xspace}
\newcommand{\il}{\textsc{il}\xspace}
\newcommand{\spi}{\textsc{spi}\xspace}
\newcommand{\mlp}{\textsc{mlp}\xspace}
\newcommand{\kl}{\textsc{kl}\xspace}
\newcommand{\mdp}{\textsc{mdp}\xspace}
\newcommand{\gp}{\textsc{gp}\xspace}
\newcommand{\irl}{\textsc{irl}\xspace}
\newcommand{\girl}{\textsc{girl}\xspace}
\newcommand{\csil}{\textsc{csil}\xspace}
\newcommand{\bc}{\textsc{bc}\xspace}
\newcommand{\ppil}{\textsc{ppil}\xspace}
\newcommand{\iqlearn}{\textsc{iql}earn\xspace}
\newcommand{\dac}{\textsc{dac}\xspace}
\newcommand{\gail}{\textsc{gail}\xspace}
\newcommand{\valuedice}{\textsc{v}alue\textsc{dice}\xspace}
\newcommand{\sac}{\textsc{sac}\xspace}
\newcommand{\sacfd}{\textsc{sac}f\textsc{d}\xspace}
\newcommand{\sqil}{\textsc{sqil}\xspace}
\newcommand{\pwil}{\textsc{pwil}\xspace}
\newcommand{\meirl}{\textsc{me-irl}\xspace}
\newcommand{\airl}{\textsc{airl}\xspace}
\newcommand{\cql}{\textsc{cql}\xspace}
\newcommand{\smodice}{\textsc{smodice}\xspace}
\newcommand{\demodice}{\textsc{d}emo\textsc{dice}\xspace}
\newcommand{\hetstat}{\textsc{hetstat}\xspace}
\newcommand{\gym}{\texttt{Gym}\xspace}
\newcommand{\adroit}{\texttt{Adroit}\xspace}
\newcommand{\robomimic}{\texttt{robomimic}\xspace}
\newcommand{\mujoco}{\texttt{MuJoCo}\xspace}
    \author[1,2,*]{Joe Watson}
    \author[3]{Sandy H. Huang}
    \author[3]{Nicolas Heess}
    \affil[1]{Technical University of Darmstadt, Germany}
    \affil[2]{German Research Center for AI (DFKI), Germany}
    \affil[3]{Google DeepMind}
    \affil[*]{Work done during an internship at DeepMind}
    \keywords{inverse reinforcement learning, behavioral cloning, learning from demonstration} %
\author{%
{Joe Watson\thanks{Corresponding author. Work done during an internship at Google DeepMind.}}$\;\,^{\dagger\ddagger}$
\And
Sandy H. Huang$^{\mathsection}$
\And Nicolas Heess$^{\mathsection}$
\AND
$^{\mathsection}$ Google DeepMind\\
London, United Kingdom\\
{\footnotesize\texttt{\{shhuang,heess\}@google.com}}
\And
$^{\dagger}$TU Darmstadt\\
Darmstadt, Germany\\
{\footnotesize\texttt{joe@robot-learning.de}}
\And
$^\ddagger$Systems AI for Robot Learning\\
German Research Center for AI\\
{\footnotesize\texttt{dfki.de}}
}
\title{Coherent Soft Imitation Learning}
\begin{abstract}
Imitation learning methods seek to learn from an expert either through behavioral cloning (BC) for the policy or inverse reinforcement learning (IRL) for the reward.
Such methods enable agents to learn complex tasks from humans that are difficult to capture with hand-designed reward functions.
Choosing between BC or IRL for imitation depends on the quality and state-action coverage of the demonstrations, as well as additional access to the Markov decision process. 
Hybrid strategies that combine BC and IRL are rare, as initial policy optimization against inaccurate rewards diminishes the benefit of pretraining the policy with BC.
This work derives an imitation method that captures the strengths of both BC and IRL.
In the entropy-regularized (`soft') reinforcement learning setting, we show that the behavioral-cloned policy can be used as both a shaped reward and a critic hypothesis space by inverting the regularized policy update. 
This \emph{coherency} facilitates fine-tuning cloned policies using the reward estimate and additional interactions with the environment.
Our approach conveniently achieves imitation learning through initial behavioral cloning and subsequent refinement via RL with online or offline data sources.
The simplicity of the approach enables graceful scaling to high-dimensional and vision-based tasks, with stable learning and minimal hyperparameter tuning, in contrast to adversarial approaches.
For the open-source implementation and simulation results, see \href{https://joemwatson.github.io/csil/}{joemwatson.github.io/csil}.
\end{abstract}
\begin{document}

\maketitle

\ifdefined\neurips
    
\fi

\begin{figure}[h]
    \vspace{-1em}
    \centering
    \includegraphics[width=\textwidth]{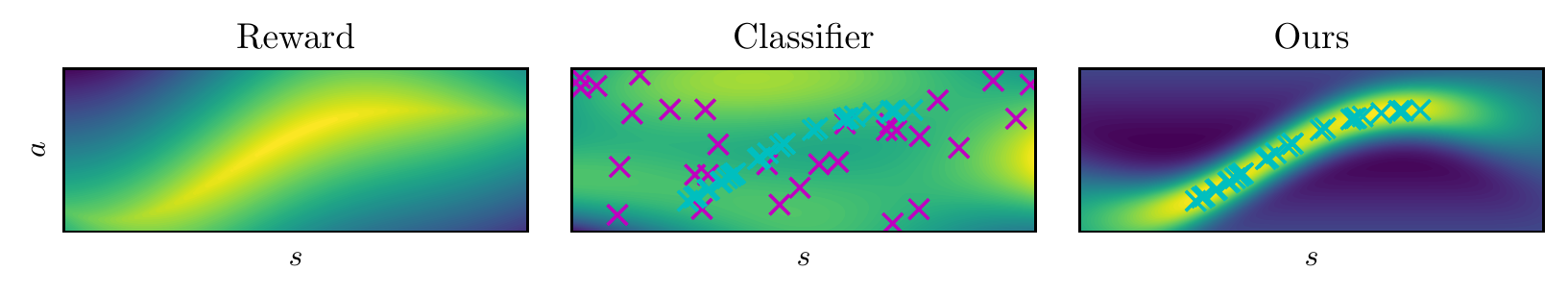}
    \vspace{-2em}
    \caption{By using regression rather than classification, our approach infers a shaped reward from expert data for this contextual bandit problem, whereas classifiers require additional non-expert data and may struggle to resolve the difference between expert (\cyancross) and non-expert (\magcross) samples.}
    \label{fig:pull}
    \vspace{-1em}
\end{figure}

\section{Introduction}
Imitation learning (\il) methods \citep{osa2018algorithmic} provide algorithms for learning skills from expert demonstrations in domains such as robotics \citep{schaal1999imitation}, either through direct mimicry (behavioral cloning, \bc \citep{Pomerleau1991}) or inferring the latent reward (inverse reinforcement learning, \irl \citep{kalman1964linear,Ng2000,Abbeel2004}) assuming a Markov decision process (\mdp) \citep{puterman2014markov, Sutton2018}.
While \bc is straightforward to implement, it is susceptible to covariate shift during evaluation \citep{Ross2011}.
\irl overcomes this problem but tackles the more complex task of jointly learning a reward and policy from interactions. 
So far, these two approaches to imitation have been largely separated~\citep{zheng2022imitation}, as optimizing the policy with an evolving reward estimate counteracts the benefit of pre-training the policy \citep{orsini2021matters}.
Combining the objectives would require careful weighting during learning and lacks convergence guarantees \citep{jena2021augmenting}.
The benefit of combining \bc and \irl is sample-efficient learning that enables any \bc policy to be further improved using additional experience. 

\textbf{Contribution.}
Our \il method naturally combines \bc and \irl by using the perspectives of entropy-regularized reinforcement learning (\rl) \citep{Peters2010REPS,Fox2016,van2017non,abdolmalekimaximum,Haarnoja2018,
fellows2019virel,
bas2021logistic} and gradient-based \irl \citep{pirotta2016inverse,ramponi2020inverse}.
Our approach uses a \bc policy to define an estimate of a shaped reward for which it is optimal, which can then be used to finetune the policy with additional knowledge, such as online interactions, offline data, or a dynamics model.
Using the cloned policy to specify the reward avoids the need for careful regularization and hyperparameter tuning associated with adversarial imitation learning \citep{orsini2021matters,zhang2022improve}, and we estimate a shaped variant of the \emph{true} reward, rather than use a heuristic proxy, e.g., \citep{Reddy2020, Brantley2020DisagreementRegularized}.
In summary,
\begin{enumerate}[noitemsep]
    \item We use entropy-regularized \rl to obtain a reward for which a behavioral-cloned policy is optimal, which we use to improve the policy and overcome the covariate shift problem.
    \item We introduce
    approximate stationary stochastic process policies to implement this approach for continuous control by constructing
    specialized neural network architectures.
    \item We show strong performance on online and offline imitation for high-dimensional and image-based continuous control tasks compared to current state-of-the-art methods.
\end{enumerate}

\section{Background \& related work}
\label{sec:background}

\sloppy A Markov decision process is a tuple $\langle\gS, \gA, \gP, r, \gamma, \mu_0\rangle$, where $\gS$ is the state space, $\gA$ is the action space, $\gP{\;:\;}\gS{\,\times\,}\gA{\,\times\,}\gS{\,\rightarrow\,}\sR^+$ is the transition model, $r{\;:\;}\gS{\,\times\,}\gA{\,\rightarrow\,}\sR$ is the reward function, $\gamma$ is the discount factor, and $\mu_0$ is the initial state distribution
${\vs_0\sim\mu_0(\cdot)}$.
When evaluating a policy $\pi{\,\in\,}\Pi$, we use the occupancy measure
$\rho_\pi(\vs,\va) \triangleq \pi(\va\,|\,\vs)\,\mu_\pi(\vs)$,
$\mu_\pi(\vs) \triangleq \sum_{t=0}^\infty\,\gamma^t\mu_t^\pi(\vs)$
where 
${\mu_{t+1}^\pi(\vs') = \int \gP(\vs'\mid\vs,\va)\,\pi(\va\mid\vs)\,\mu_t^\pi(\vs) \,\rd\vs\,\rd\va}$.
This measure is used to compute infinite-horizon discounted expectations, 
${\E_{\vs,\va\sim \rho_\pi}[f(\vs,\va)] =
\E_{
\vs_{t+1} \sim \gP(\cdot\mid\vs_t,\va_t),\,
\va_t\sim\pi(\cdot\mid\vs_t)}
[
\sum_{t=0}^\infty\gamma^t
f(\vs_t,\va_t)
]}$.
Densities
$d_\pi(\vs,\va) = (1-\gamma)\,\rho_\pi(\vs,\va)$
and
$\nu_\pi(\vs) = (1 - \gamma)\,\mu_\pi(\vs)$
are normalized measures.
When the reward is unknown,
imitation can be performed from
a dataset $\gD$ of transitions $(\vs,\va,\vs')$ as demonstrations,
obtained from the discounted occupancy measure of the expert policy $\pi_*=\argmax_\pi \E_{\vs,\va\sim\rho_\pi}[r(\vs,\va)]$.
The policy could be inferred directly or by inferring a reward and policy jointly, referred to as behavioral cloning and inverse reinforcement learning respectively. 

\paragraph{Behavioral cloning.}
The simplest approach to imitation is to directly mimic the behavior using regression \citep{Pomerleau1991},
$
    \textstyle \min_{\pi\in\Pi} l(\pi, \gD) + \lambda\,\Psi(\pi),
$
with some loss function $l$, hypothesis space $\Pi$ and regularizer $\Psi$. 
This method is effective with sufficient data coverage and an appropriate hypothesis space but suffers from compounding errors and cannot improve unless querying the expert \citep{Ross2011}. 
The \textsc{mimic-exp} algorithm of \citet{rajaraman2020toward}
shows that the optimal no-interaction policy matches \bc.

\paragraph{Inverse reinforcement learning.}
Rather than just inferring the policy, \irl infers the reward and policy jointly and relies on access to the underlying \mdp \citep{russell1998learning,Ng2000}. 
\irl iteratively finds a reward for which the expert policy is optimal compared to the set of possible policies while also finding the policy that maximizes this reward function. 
The learned policy seeks to match or even improve upon the expert using additional data beyond the demonstrations, such as environment interactions \citep{Abbeel2004}. 
To avoid repeatedly solving the inner \rl problem, one can consider the game-theoretic approach \citep{Syed2007}, where the optimization problem is a two-player zero-sum game where the policy and reward converge to the saddle point minimizing the 
`apprenticeship error'
$
\textstyle\min_{\pi\in\Pi} \max_{r\in\gR}\,\E_{\gD}[r]{-} \E_{\rho_\pi}[r].
$
This approach is attractive as it learns the policy and reward concurrently \citep{Syed2008}. 
However, in practice, saddle-point optimization can be challenging when scaling to high dimensions and can exhibit instabilities and hyperparameter sensitivity \citep{orsini2021matters}.  
Compared to \bc, \irl methods in theory scale to longer task horizons by overcoming compounding errors through environment interactions~\citep{xu2020error}. 

\paragraph{Maximum entropy inverse reinforcement learning \& soft policy iteration.}
\irl is an under-specified problem \citep{russell1998learning}, in particular in regions outside of the demonstration distribution.
An effective way to address this is \emph{causal entropy} regularization \citep{ziebart2010modeling}, 
by applying the principle of maximum entropy \citep{jaynes1982rationale} to \irl (\meirl).
Moreover, the entropy-regularized formulation has an elegant closed-form solution \citep{ziebart2010modeling}.
The approach can be expressed as an equivalent constrained minimum relative entropy problem
for policy $q(\va\mid\vs)$ against a uniform prior $p(\va\mid\vs) = \gU_\gA(\va)$ using the Kullback-Leibler (\kl) divergence and a constraint
matching
expected 
features $\vphi$, with Lagrangian
\begin{align}
\textstyle
    \min_{q,\vw}
    \textstyle\int_\gS
    \nu_q(\vs)\,
    \KL[q(\va\mid\vs)\mid\mid p(\va\mid\vs)]\,
    \rd\vs
    +
    \vw\tran(
    \E_{\vs,\va\sim \gD}[
    \vphi(\vs,\va)]
    {\,-\,}
    \E_{\vs,\va\sim \rho_q}[
    \vphi(\vs,\va)]
    ).
    \label{eq:meirl}
\end{align}
The constraint term can be interpreted as the apprenticeship error, where $\vw\tran\vphi(\vs,\va) = r_\vw(\vs,\va)$. 
Solving Equation \ref{eq:meirl} using dynamic programming yields the `soft' Bellman equation \citep{ziebart2010modeling},
{
\begin{align}
\gQ(\vs,\va) 
&{\,=\,}r_\vw(\vs,\va) {\,+\,}
\gamma
\E_{{\vs'\sim\gP(\cdot\mid\vs,\va)}}[
\gV_\alpha(\vs')],
\,
\gV_\alpha(\vs){\,=\,}\alpha\log\!{\int_\gA}\!\!\exp\left(\frac{1}{\alpha}\gQ(\vs,\va)\right){p(\va\mid\vs)}\,\rd\va{,}
\label{eq:soft_bellman_exact}
\intertext{for temperature $\alpha = 1$. Using Jensen's inequality and importance sampling, this target is typically replaced with a lower-bound~\citep{marino2021iterative}, which has the same optimum and samples from the optimized policy rather than the initial policy like many practical deep \rl algorithms \citep{Haarnoja2018}:}
\gQ(\vs,\va) 
&{\,=\,}
r_\vw(\vs,\va)
{\,+}
\gamma\,
\E_{\va'\sim q(\cdot\mid\vs'),\,\vs'\sim \gP(\cdot{\,|\,}\vs,\va)}[\gQ(\vs',\va')
{\,-\,}
\alpha\,(
\log q(\va'{|\,}\vs')
{\,-\,}
\log p(\va'{|\,}\vs'))
].
\label{eq:soft_bellman}
\end{align}
}
The policy update blends the exponentiated advantage function `pseudo-likelihood' with the prior, as a form of regularized Boltzmann policy \citep{Haarnoja2018}
that resembles a Bayes posterior \citep{barber2012bayesian},
\begin{align}
q_\alpha(\va\mid\vs)
\propto
\exp(\alpha\inv(\gQ(\vs,\va)
- \gV_\alpha(\vs)
)\,p(\va\mid\vs).
\label{eq:posteriorpolicy}
\end{align}
These regularized updates can also be used for \rl, where it is the solution to a \kl-regularized \rl objective, $\max_q \E_{\vs,\va \sim \rho_q}[\gQ(\vs,\va)] - \alpha\,\KL[q(\va{\,\mid\,}\vs)\mid\mid p(\va{\,\mid\,}\vs)]$, where the temperature $\alpha$ now controls the strength of the regularization. 
This regularized policy update is known as soft- \citep{Fox2016,Haarnoja2018} or posterior policy iteration \citep{rawlik2013stochastic,watson2023inferring} (\spi, \textsc{ppi}), as it resembles a Bayesian update.
In the function approximation setting, the update is performed in a variational fashion by minimizing the reverse \kl divergence between the parametric policy $q_\vtheta$ and the critic-derived update at sampled states $\vs$ \citep{Haarnoja2018},
\begin{align}
\vtheta_*
&= 
\textstyle\argmin_\vtheta
\textstyle
\E_{\vs\sim\gB}[\KL[
q_\vtheta(\va\mid\vs)
\mid\mid
q_\alpha(\va\mid\vs)]]
=
\textstyle
\argmax_\vtheta \gJ_\pi(\vtheta),\nonumber\\
\gJ_\pi(\vtheta)
&=
\E_{
\va \sim q_\vtheta(\cdot{\,\mid\,}\vs),\,
\vs\sim\gB
}
\left[
\gQ(\vs,\va)
{\,-\,} \alpha\,
(\log q_\vtheta(\va{\,\mid\,}\vs)
{\,-\,}
\log
p(\va\mid\vs))
\right].
\end{align}
The above objective $\gJ_\pi$ can be maximized using reparameterized gradients and minibatches from replay buffer $\gB$ \citep{Haarnoja2018}.
A complete derivation of \spi and \meirl is provided in Appendix~\ref{sec:reps}.

\paragraph{Gradient-based inverse reinforcement learning.}
An alternative \irl strategy is a gradient-based approach (\girl) that avoids saddle-point optimization by learning a reward function such that the \bc policy's policy gradient is zero \citep{pirotta2016inverse},
which satisfies first-order optimality.
However, this approach does not remedy the ill-posed nature of \irl.
Moreover, the Hessian is required for the sufficient condition of optimality \citep{metelli2017compatible}, which is undesirable for policies with many parameters.

\paragraph{Related work.}
Prior state-of-the-art methods have combined the game-theoretic \irl objective \citep{Syed2007} with entropy-regularized \rl.
These methods can be viewed as minimizing a divergence between the expert and policy, and include \gail \citep{Ho2016b}, \airl \citep{fulearning}, $f$-\textsc{max} \citep{ghasemipour2020divergence}, \dac\citep{Kostrikov2019}, \valuedice \citep{Kostrikov2020}, \iqlearn\citep{Garg2021} and proximal point imitation learning (\ppil, \citep{vianoproximal}). 
These methods differ through their choice of on-policy policy gradient (\gail) or off-policy actor-critic (\dac, \iqlearn, \ppil), and also how the minimax optimization is implemented, e.g., using a classifier (\gail, \dac, \airl), implicit reward functions (\valuedice, \iqlearn) or Lagrangian dual objective (\ppil).
Alternative approaches to entropy-regularized \il
use
the Wasserstein metric (\pwil) \citep{dadashi2021primal},
labelling of sparse proxy rewards (\sqil) \citep{Reddy2020},
feature matching \citep{ziebart2010modeling, boularias2011relative, wulfmeier2015maximum, arenz2016optimal}, maximum likelihood \citep{finn2016guided} and matching state marginals \citep{ni2021f} to specify the reward.
Prior works
at the intersection of \irl and \bc
include policy matching \citep{Neu2007}, policy-based \gail classifiers \citep{barde2020adversarial}, annealing between a \bc and \gail policy \citep{jena2021augmenting} and discriminator-weighted \bc \citep{pmlr-v162-xu22l}.  
Our approach is inspired by gradient-based \irl \citep{pirotta2016inverse, metelli2017compatible, ramponi2020inverse}, which avoids the game-theoretic objective and instead estimates the reward by analyzing the policy update steps with respect to the \bc policy.
An entropy-regularized \girl setting was investigated in the context of learning from policy updates \citep{jacq19a}. 
For further discussion on related work, see Appendix \ref{sec:ex_related_work}.

\section{Coherent imitation learning}
For efficient and effective imitation learning, we would like to combine the simplicity of behavioral cloning with the structure of the \mdp, as this would provide a means to initialize and refine the imitating policy through interactions with the environment.
In this section, we propose such a method using a \girl-inspired connection between \bc and \irl in the entropy-regularized \rl setting.
By inverting the entropy-regularized policy update in Equation \ref{eq:posteriorpolicy}, we derive a reward for which the behavioral-cloning policy is optimal, a quality we refer to as \emph{coherence}.
\paragraph{Inverting soft policy iteration and reward shaping.}
The $\alpha$-regularized closed-form policy based on Equation \ref{eq:posteriorpolicy} can be inverted to provide expressions for the critic and reward (Theorem~\ref{th:invert}).

\begin{restatable}{theorem}{inversion}
\label{th:invert}
(KL-regularized policy improvement inversion).
Let $p$ and $q_\alpha$ be the prior and pseudo-posterior policy given by posterior policy iteration (Equation \ref{eq:posteriorpolicy}).
The critic can be expressed as 
\begin{align}
    \gQ(\vs,\va) &= \alpha\log \frac{q_\alpha(\va\mid\vs)}{p(\va\mid\vs)}
    +
    \gV_\alpha(\vs),
    \quad
    \gV_\alpha(\vs) = \alpha\log\int_\gA \exp\left(\frac{1}{\alpha}\gQ(\vs,\va)\right)p(\va{\mid}\vs)\,\rd\va.
    \label{eq:invert_reward}
\end{align}
Substituting into the KL-regularized Bellman equation lower-bound from Equation \ref{eq:soft_bellman_exact},
\begin{align}
    r(\vs,\va)
    &=
    \alpha\log \frac{q_\alpha(\va\mid\vs)}{p(\va\mid\vs)}
    + \gV_\alpha(\vs)
    -
    \gamma\,
    \E_{\vs'\sim \gP(\cdot\mid\vs,\va)}\left[\gV_\alpha(\vs')\right].\label{eq:invert_bellman}
\end{align}
The $\gV_\alpha(\vs)$ term is the `soft' value function.
We assume $q_\alpha(\va\mid\vs){\,=\,}0$ whenever $p(\va\mid\vs){\,=\,}0$.
\end{restatable}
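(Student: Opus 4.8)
The plan is to read Equation~\ref{eq:posteriorpolicy} as an explicit formula for $q_\alpha$ in terms of $\gQ$ and $p$ and then invert it by elementary manipulation, so the whole statement is really an \emph{identity} rather than an approximation. First I would fix the proportionality constant in Equation~\ref{eq:posteriorpolicy}: since $\gV_\alpha(\vs)$ does not depend on $\va$, that equation reads $q_\alpha(\va\mid\vs) = Z(\vs)\inv\exp(\alpha\inv\gQ(\vs,\va))\,p(\va\mid\vs)$, and imposing $\int_\gA q_\alpha(\va\mid\vs)\,\rd\va = 1$ forces $Z(\vs) = \int_\gA \exp(\alpha\inv\gQ(\vs,\va))\,p(\va\mid\vs)\,\rd\va$, i.e. $Z(\vs) = \exp(\alpha\inv\gV_\alpha(\vs))$ for exactly the $\gV_\alpha$ written in Equation~\ref{eq:invert_reward}; this is also where the standing hypothesis $q_\alpha(\va\mid\vs)=0$ whenever $p(\va\mid\vs)=0$ (together with $\alpha>0$) is used, ensuring the ratio $q_\alpha/p$ is well defined $q_\alpha$-almost everywhere so that the logarithm in the next step makes sense.

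Next I would take logarithms of the now fully specified identity $q_\alpha(\va\mid\vs) = \exp(\alpha\inv(\gQ(\vs,\va)-\gV_\alpha(\vs)))\,p(\va\mid\vs)$, giving $\log q_\alpha(\va\mid\vs) - \log p(\va\mid\vs) = \alpha\inv(\gQ(\vs,\va)-\gV_\alpha(\vs))$, and solve for the critic, $\gQ(\vs,\va) = \alpha\log\frac{q_\alpha(\va\mid\vs)}{p(\va\mid\vs)} + \gV_\alpha(\vs)$, which is the first display of Equation~\ref{eq:invert_reward}; the companion formula for $\gV_\alpha$ is just the normalizer identified in the previous step, so nothing more is needed there, and the remark that $\gV_\alpha$ is ``the soft value function'' is immediate from comparison with Equation~\ref{eq:soft_bellman_exact}.

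Finally I would substitute this expression for $\gQ$ into the soft Bellman equation $\gQ(\vs,\va) = r(\vs,\va) + \gamma\,\E_{\vs'\sim\gP(\cdot\mid\vs,\va)}[\gV_\alpha(\vs')]$ of Equation~\ref{eq:soft_bellman_exact} and rearrange for the reward, $r(\vs,\va) = \gQ(\vs,\va) - \gamma\,\E_{\vs'\sim\gP(\cdot\mid\vs,\va)}[\gV_\alpha(\vs')] = \alpha\log\frac{q_\alpha(\va\mid\vs)}{p(\va\mid\vs)} + \gV_\alpha(\vs) - \gamma\,\E_{\vs'\sim\gP(\cdot\mid\vs,\va)}[\gV_\alpha(\vs')]$, which is Equation~\ref{eq:invert_bellman}. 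The same substitution goes through verbatim starting from the sampled lower-bound target of Equation~\ref{eq:soft_bellman}, since at the soft-optimal policy the variational bound is tight, $\E_{\va'\sim q_\alpha(\cdot\mid\vs')}[\gQ(\vs',\va') - \alpha(\log q_\alpha(\va'\mid\vs') - \log p(\va'\mid\vs'))] = \gV_\alpha(\vs')$, so both targets share the same fixed point.

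Because the claim is obtained purely by inverting a closed-form update, there is no genuine analytic obstacle; the only thing to be careful about is bookkeeping around well-definedness — the absolute-continuity condition $\operatorname{supp}(q_\alpha)\subseteq\operatorname{supp}(p)$ and the positivity of $\alpha$ — and being explicit that the Boltzmann normalizing constant is precisely $\exp(\alpha\inv\gV_\alpha(\vs))$, so that $\gV_\alpha$ in Equations~\ref{eq:invert_reward}--\ref{eq:invert_bellman} really is the soft value function and not merely an arbitrary state-dependent offset.
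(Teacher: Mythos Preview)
Your proposal is correct and follows essentially the same approach as the paper: rearrange the posterior policy update of Equation~\ref{eq:posteriorpolicy} to obtain $\gQ(\vs,\va)=\alpha\log\tfrac{q_\alpha(\va\mid\vs)}{p(\va\mid\vs)}+\gV_\alpha(\vs)$, then substitute into the soft Bellman equation and solve for $r$. The only cosmetic difference is that the paper substitutes into the lower-bound form (Equation~\ref{eq:soft_bellman}) and exhibits the cancellation of the $\alpha\log\tfrac{q_\alpha(\va'\mid\vs')}{p(\va'\mid\vs')}$ terms explicitly, whereas you first use the exact target (Equation~\ref{eq:soft_bellman_exact}) and then remark that the bound is tight at $q_\alpha$; both routes are equivalent and you already note this.
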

For the proof, see Appendix \ref{sec:inversion_proof}.
\citet{jacq19a}
and
\citet{cao2021identifiability}
derived a similar inversion but for a maximum entropy policy iteration between two policies, rather than \textsc{ppi} between the prior and posterior. 
These expressions for the reward and critic are useful as they can be viewed as a valid form of shaping
(Lemma \ref{lem:shape})
\citep{cao2021identifiability, jacq19a},
where $\alpha(\log q(\va\mid\vs) - \log p(\va\mid\vs))$ is a shaped `coherent' reward.

\begin{lemma}
\label{lem:shape}
(Reward shaping, \citet{ng99}).
For a reward function $r: \gS \times \gA \rightarrow \sR$ with optimal policy $\pi{\,\in\,}\Pi$, for a bounded state-based potential function $\Psi: \gS \rightarrow \sR$, a reward function $\tilde{r}$ shaped by the potential function satisfying
$r(\vs,\va) = \tilde{r}(\vs,\va) + \Psi(\vs) - \gamma\,\E_{\vs'\sim \gP(\cdot\mid\vs,\va)}[\Psi(\vs')]$ has a shaped critic $\tilde{\gQ}$ satisfying $\gQ(\vs,\va) = \tilde{\gQ}(\vs,\va) + \Psi(\vs)$ and the same optimal policy as the original reward.
\end{lemma}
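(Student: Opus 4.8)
The plan is to establish the stronger, policy-independent fact that the shaping leaves the (soft) value of \emph{every} policy unchanged except for the action-free offset $\Psi$, and then read off both the critic identity and the optimality claim as corollaries. Rearranging the hypothesis gives $\tilde r(\vs,\va) = r(\vs,\va) + \gamma\,\E_{\vs'\sim\gP(\cdot\mid\vs,\va)}[\Psi(\vs')] - \Psi(\vs)$, which is exactly the potential-based shaping of \citet{ng99}; I carry out the argument in the entropy-regularized setting of Equation~\ref{eq:soft_bellman_exact} (the unregularized case is identical after deleting the $\alpha(\log q - \log p)$ term, since that term involves neither $r$ nor $\tilde r$).

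\emph{Step 1 (value shift by telescoping).} Fix any $\pi\in\Pi$ and unroll its soft value under $\tilde r$ along trajectories generated by $\pi$ and $\gP$ from $\vs_0=\vs$:
\begin{align}
\tilde{\gV}^\pi(\vs)
= \E\Big[\textstyle\sum_{t\ge0}\gamma^t\big(\tilde r(\vs_t,\va_t) - \alpha\log\tfrac{\pi(\va_t\mid\vs_t)}{p(\va_t\mid\vs_t)}\big)\Big]. \nonumber
\end{align}
Replacing $\tilde r(\vs_t,\va_t)$ under the expectation by $r(\vs_t,\va_t) + \gamma\Psi(\vs_{t+1}) - \Psi(\vs_t)$, the added terms form the telescoping series $\sum_{t\ge0}\gamma^t\big(\gamma\Psi(\vs_{t+1}) - \Psi(\vs_t)\big) = -\Psi(\vs_0) + \lim_{T\to\infty}\gamma^{T+1}\Psi(\vs_{T+1})$. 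Boundedness of $\Psi$ together with $\gamma<1$ sends the tail term to zero, so the series equals $-\Psi(\vs)$ and hence $\tilde{\gV}^\pi(\vs) = \gV^\pi(\vs) - \Psi(\vs)$ for every $\vs$. This step — the only place the boundedness hypothesis is used, and where one must be slightly careful about rearranging the infinite sum and passing the limit through the expectation — is the part I expect to require the most care; everything else is algebra.

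\emph{Step 2 (critic and policy).} Substituting Step~1 into the one-step soft Bellman consistency for $\pi$, the $\gamma\,\E_{\vs'}[\Psi(\vs')]$ inside $\tilde r$ cancels against the $-\gamma\,\E_{\vs'}[\Psi(\vs')]$ coming from $\tilde{\gV}^\pi$, leaving $\tilde{\gQ}^\pi(\vs,\va) = \gQ^\pi(\vs,\va) - \Psi(\vs)$; equivalently, $\gQ(\vs,\va) - \Psi(\vs)$ is seen to satisfy the shaped soft Bellman \emph{optimality} equation (using $\alpha\log\!\int_\gA\!\exp(\tfrac1\alpha(\gQ(\vs',\va')-\Psi(\vs')))\,p(\va'\mid\vs')\,\rd\va' = \gV_\alpha(\vs') - \Psi(\vs')$) and is its unique solution by the $\gamma$-contraction, giving the stated $\gQ(\vs,\va) = \tilde{\gQ}(\vs,\va) + \Psi(\vs)$. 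Finally, since the offset $-\Psi(\vs)$ is independent of $\va$, the Boltzmann update $\propto \exp(\tfrac1\alpha\tilde{\gQ}(\vs,\cdot))\,p(\cdot\mid\vs)$ equals $\propto \exp(\tfrac1\alpha\gQ(\vs,\cdot))\,p(\cdot\mid\vs)$ at every state (the factor $e^{-\Psi(\vs)/\alpha}$ cancels in the normalizer), and in the hard case $\argmax_\va$ is likewise unchanged; equivalently, Step~1 shows the two MDPs rank all policies identically from every start state, so a policy that is simultaneously optimal from all states under $r$ is so under $\tilde r$. Hence $r$ and $\tilde r$ share the optimal policy $\pi$, which completes the proof.
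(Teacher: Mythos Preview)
Your proof is correct and follows the standard telescoping argument originally given by \citet{ng99}, carefully adapted to the soft/entropy-regularized Bellman equations used in this paper. Note, however, that the paper does not supply its own proof of Lemma~\ref{lem:shape}: it is stated as a classical result with a citation, so there is no in-paper proof to compare against. Your derivation is precisely the expected one --- unroll the value, telescope the potential terms (using boundedness of $\Psi$ and $\gamma<1$ to kill the tail), read off the action-independent shift in $\gQ$, and conclude invariance of the $\argmax$/Boltzmann policy --- so there is nothing to add.
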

\vspace{-1em}
\begin{minipage}[t]{0.55\linewidth}
\begin{definition}
The shaped `coherent' reward and critic are derived from the log policy ratio by
combining Lemma \ref{lem:shape} and Theorem \ref{th:invert}, with value function $\gV_\alpha(\vs)$ as the potential $\Psi(\vs)$.
When policy $q_\alpha(\va|\vs)$ models the data $\gD$ while matching its prior $p$ otherwise, the density ratio should exhibit the following shaping
\begin{align*}
    \tilde{r}(\vs,\va)
    {\,=\,}
    \alpha\log \frac{q_\alpha(\va\mid\vs)}{p(\va\mid\vs)}
    \;
    {\begin{cases}
    \;\geq 0 \text{ if } \vs,\va\in\gD,\\
    \;< 0 \text{ if } \vs\in\gD,\va\notin\gD,\\
    \;= 0 \text{ if } \vs\notin\gD,\forall \va \in \gA.
    \end{cases}}
\end{align*}
In a continuous setting, the learned policy should capture this shaping approximately (Figure \ref{fig:coherent_reward}).
\label{def:coherent_reward}
\end{definition}
\end{minipage}
\hfill
\begin{minipage}[t]{0.4\linewidth}
\vspace{0.5em}
\centering
\includegraphics[width=0.99\linewidth]{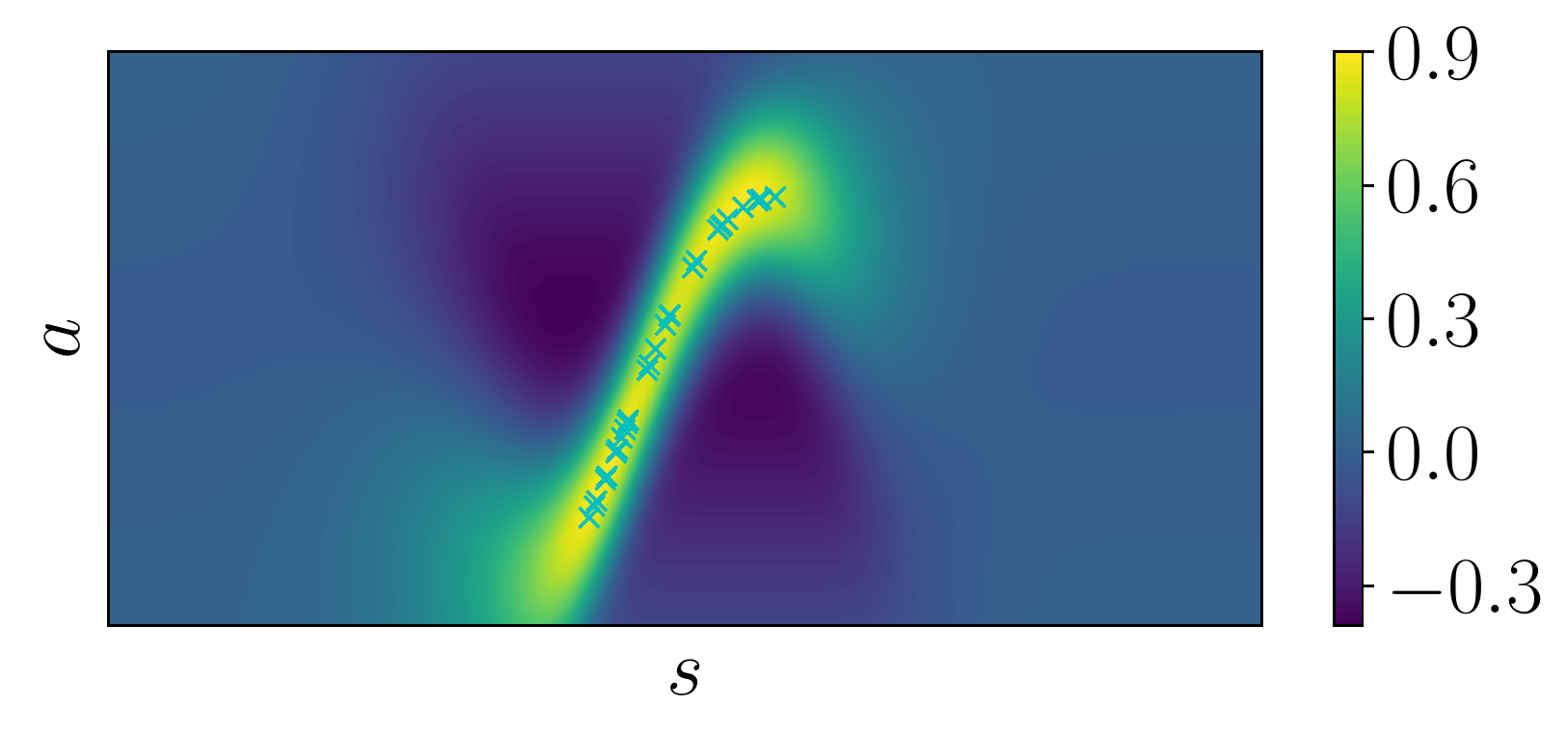}
\captionof{figure}{
The coherent reward from Figure \ref{fig:pull}, made using a stationary Gaussian process policy, extended out-of-distribution. 
The reward approximates the shaping described in Definition \ref{def:coherent_reward}.}
\label{fig:coherent_reward}
\end{minipage}

Coherent soft imitation learning (\csil, Algorithm \ref{alg:csil}) uses the \bc policy to initialize the coherent reward and uses this reward to improve the policy further with additional interactions outside of $\gD$.
However, for the coherent reward to match Definition \ref{def:coherent_reward}, the policy $q(\va{\,|\,}\vs)$ needs to match the policy prior $p(\va{\,|\,}\vs)$ outside of the data distribution.
To achieve this, it is useful to recognize the policy $q(\va{\,|\,}\vs)$ as a `pseudo-posterior' of prior $p(\va{\,|\,}\vs)$ and incorporate this Bayesian view into the \bc step.
\paragraph{Imitation learning with pseudo-posteriors.}
The `pseudo-posteriors' formalism \citep{watson2023inferring, 1056374,guedj2019primer,JMLRAlquier16,knoblauch2022optimization} is a generalized way of viewing the policy derived for \meirl and \kl-regularized \rl in Equation \ref{eq:posteriorpolicy}.
\begin{definition}
\label{def:pseudo}
Pseudo-posteriors are solutions to \kl-constrained or minimum divergence problems with an additional scalar objective or vector-valued constraint term
and Lagrange multipliers $\lambda$,
\begin{align*}
\textstyle
&
\textstyle
\max_{q} & &\E_{\vx\sim q(\cdot)}[f(\vx)]
-
\lambda\,
(\KL[q(\vx)\mid\mid p(\vx)] - \epsilon)
&
&\rightarrow
&&
q_{\lambda}(\vx) \propto \exp(\lambda\inv\, f(\vx))\,p(\vx).
\\
&\textstyle\min_{q} & &\KL[q(\vx)\mid\mid p(\vx)]
-
\bm{\lambda}\tran
(\E_{\vx\sim q(\cdot)}[\vf(\vx)] - \vf^*)
&
&\rightarrow
&&
q_{\bm{\lambda}}(\vx) \propto \exp(\bm{\lambda}\tran\vf(\vx))\,p(\vx).
\end{align*}
The $\exp(\bm{\lambda}\tran\vf(\vx))$ term is an unnormalized `pseudo-likelihood', as it facilities Bayesian inference to solve a regularized optimization problem specified by $\vf$.
\end{definition}
Optimizing the policy distribution, the top objective captures \kl-regularized \rl, where $f$ is the return, and the regularization is implemented as a constraint with bound $\epsilon$ or soft penalty with constant $\lambda$. 
The bottom objective is the form seen in \meirl (Equation \ref{eq:meirl}), where $\vf$ is the feature space that defines the reward model. 
Contrasting these pseudo-posterior policies against \bc with Gaussian processes,
e.g.,~\citep{oh2023bayesian,rudner2021pathologies}, 
we can compare the Bayesian likelihood used for regression with the critic-based pseudo-likelihood obtained from imitation learning.
The pseudo-likelihoods in entropy-regularized \il methods produce an effective imitating policy by incorporating the \mdp and trajectory distribution
because $f$ captures the cumulative reward, compared to just the action prediction error typically captured by \bc regression likelihoods.
This point is expanded in Appendix~\ref{sec:likelihoods}.

\ifdefined\neurips
\newpage
\fi

\paragraph{Behavior cloning with pseudo-posteriors.}
Viewing the policy $q(\va\mid\vs)$ as a pseudo-posterior has three main implications when performing behavior cloning and designing the policy and prior: 
\vspace{-0.5em}
\begin{enumerate}[noitemsep]
    \item We perform conditional density estimation to maximize the posterior predictive likelihood, rather than using a data likelihood, which would require assuming additive noise \citep{gpml, barber2012bayesian}.
    \item We require a hypothesis space $p(\va\mid\vs,\vw)$, e.g., a tabular policy or function approximator, that is used to define both the prior and posterior through weights $\vw$.
    \item We need a fixed weight prior $p(\vw)$ that results in a predictive distribution that matches the desired policy prior 
    $p(\va\mid\vs) = \int p(\va\mid\vs,\vw)p(\vw)\,\rd\vw\;\forall\vs\in\gS$.
\end{enumerate}
\vspace{-0.5em}
These desiderata are straightforward to achieve for maximum entropy priors in the tabular setting,
where count-based conditional density estimation is combined with the prior, e.g., $p(\va\mid\vs) = \gU_\gA(\va)$. 
Unfortunately, point 3 is challenging in the continuous setting when using function approximation, as shown in Figure \ref{fig:hetstat}. 
However, we can adopt ideas from Gaussian process theory to approximate such policies, which is discussed further in Section \ref{sec:continuous}.
The learning objective combines maximizing the likelihood of the demonstrations w.r.t. the predictive distribution, as well as \kl regularization against the prior weight distribution, performing regularized heteroscedastic regression~\citep{hetgp},
\begin{align}
    \textstyle\max_\vtheta \E_{\va,\vs\sim \gD}[\log q_\vtheta(\va\mid\vs)]{\,-\,}\lambda\,\KL[q_\vtheta(\vw)\mid\mid p(\vw)],
    \;
    q_\vtheta(\va\mid\vs) = \int p(\va\mid\vs,\vw) 
    \,
    q_\vtheta(\vw)
    \,\rd\vw.
    \label{eq:regression}
\end{align}
This objective has been used to fit parametric Gaussian processes with a similar motivation \citep{jankowiak2020parametric}.

We now describe how this \bc approach and the coherent reward are used for imitation learning (Algorithm \ref{alg:csil}) when combined with soft policy iteration algorithms, such as \sac.
\begin{figure}
\begin{algorithm}[H]
\KwData{Expert demonstrations $\gD$, initial temperature $\alpha$, refinement temperature, $\beta$,\\
parametric policy class $q_\vtheta(\va\mid\vs)$,
prior policy $p(\va\mid\vs)$, regression regularizer $\Psi$,
iterations $N$}
\KwResult{$q_{\vtheta_N}(\va\mid\vs)$, matching or improving the initial policy $q_{\vtheta_1}(\va\mid\vs)$}
Train initial policy from demonstrations, $\vtheta_1 = \argmax_{\vtheta} \E_{\vs,\va\sim\gD}[\log q_\vtheta(\va\mid\vs) - \Psi(\vtheta)]$\;
Define fixed shaped coherent reward, $\tilde{r}_{\vtheta_1}(\vs,\va) = \alpha (\log q_{\vtheta_1}(\va\mid\vs) - \log p(\va\mid\vs))$\;
Choose reference policy $\pi(\va\mid\vs)$ to be $p(\va\mid\vs)$ or $q_{\vtheta_1}(\va\mid\vs)$ and initialize the shaped critic,
$\tilde{\gQ}_1(\vs,\va) = \tilde{r}_{\vtheta_1}(\vs,\va)  + \gamma\,(\tilde{\gQ}_1(\vs',\va'){-}\alpha\,(\log q_{\vtheta_1}(\va'\mid\vs')-\log\pi(\va'\mid\vs')),\,
\vs,\va,\vs',\va' \sim \gD$\;
\For(\tcp*[f]{finetune policy with reinforcement learning}){$n = 2 \rightarrow N$}
{
    Compute $\tilde{\gQ}_n$ and $q_{\vtheta_n}$ using soft policy iteration (Section \ref{sec:background}), e.g. \sac, with temperature $\beta$.
}
\caption{Coherent soft imitation learning (\csil)}
\label{alg:csil}
\end{algorithm}
\ifdefined\neurips
\vspace{-1.5em}
\fi
\end{figure}

On a high level, Algorithm \ref{alg:csil} can be summarized by the following steps,
\vspace{-0.5em}
\begin{enumerate}[noitemsep]
    \item Perform regularized \bc on the demonstrations with a parametric stochastic policy $q_\vtheta(\va\mid\vs)$.
    \item Define the coherent reward, 
    $r_\vtheta(\vs,\va){\,=\,}\alpha (\log q_\vtheta(\va\mid\vs){-}\log p(\va\mid\vs))$,
    with temperature $\alpha$.
    \item With temperature $\beta < \alpha$, perform \spi to improve on the \bc policy with the coherent reward.
\end{enumerate}
\vspace{-0.5em}
The reduced temperature $\beta$ is required to improve the policy after inverting it with temperature $\alpha$.

This coherent approach contrasts combining \bc and prior actor-critic \irl methods, where learning the reward and critic from scratch can lead to `unlearning' the initial \bc policy \citep{orsini2021matters}.
Moreover, while our method does not seek to learn the true underlying reward, but instead opts for one shaped by the environment used to generate demonstrations, we believe this is a reasonable compromise in practice.
Firstly, \csil uses \irl to tackle the covariate shift problem in \bc, as the reward shaping (Definition~\ref{def:coherent_reward}) encourages returning to the demonstration distribution when outside it, which is also the goal of prior imitation methods, e.g., \citep{Reddy2020,Brantley2020DisagreementRegularized}.
Secondly, in the entropy-regularized setting, an \irl method has the drawback of requiring demonstrations generated from two different environments or discount factors to accurately infer the true reward \citep{cao2021identifiability}, which is often not readily available in practice, e.g., when teaching a single robot a task. 
Finally, additional \mlp{}s could be used to estimate the true (unshaped) reward from data \citep{Fu2018,jacq19a} if desired.
We also provide a divergence minimization perspective of \csil in Section \ref{sec:divergence} of the Appendix.
Another quality of \csil is that it is conceptually simpler than alternative \irl approaches such as \airl.
As we simply use a shaped estimate of the true reward with entropy-regularized \rl, \csil inherits the theoretical properties of these regularized algorithms~\citep{Haarnoja2018,Geist2019} regarding performance.
As a consequence, this means that analysis of the imitation quality requires analyzing the initial behavioral cloning procedure. 

\ifdefined\neurips
\newpage
\fi
\paragraph{Partial expert coverage, minimax optimization, and regularized regression.}
The reward shaping theory from Lemma \ref{lem:shape} shows that an advantage function can be used as a shaped reward function when the state potential is a value function. 
By inverting the soft policy update, the log policy ratio acts as a form of its advantage function.
However, in practice, we do not have access to the complete advantage function but rather an estimate due to finite samples and partial state coverage. 
Suppose the expert demonstrations provide only partial coverage of the state-action space.
In this case, the role of an inverse reinforcement learning algorithm is to use additional knowledge of the \mdp, e.g., online samples, a dynamics model, or an offline dataset, to improve the reward estimate. 
As \csil uses the log policy ratio learned only from demonstration data, how can it be a useful shaped reward estimate?
We show in Theorem \ref{th:csil} that using entropy regularization in the initial behavioral cloning step plays a similar role to the saddle-point optimization in game-theoretic \irl.
\begin{restatable}{theorem}{csiltheorem}
\label{th:csil}
(Coherent inverse reinforcement learning as \kl-regularized behavioral cloning).
A \kl-regularized game-theoretic \irl objective, with
policy $q_\vtheta(\va\mid\vs)$ and coherent reward parameterization $r_\vtheta(\vs,\va) = \alpha (\log q_\vtheta(\va\mid\vs) - \log p(\va\mid\vs))$
where $\alpha \geq 0$,
is lower bounded by a scaled \kl-regularized behavioral cloning objective and a constant term when the optimal policy, posterior, and prior share a hypothesis space $p(\va\mid\vs,\vw)$ and finite parameters $\vw$, 
\begin{align*}
    &\E_{\vs,\va\sim\gD}\left[r_\vtheta(\vs,\va)\right] -
    \E_{\va\sim q_\vtheta(\cdot\mid\vs),\,\vs\sim\mu_{q_\vtheta}(\cdot)}
    \left[
    r_\vtheta(\vs,\va) 
    -
    \beta(\log q_\vtheta(\va\mid\vs) - \log p(\va\mid\vs))
    \right]
    \geq\\
    &\hspace{8em}
    \alpha\,(
    \E_{\vs,\va\sim\gD}\left[\log q_\vtheta(\va\mid\vs) \right] -
    \lambda\,\KL[q_\vtheta(\vw)\mid\mid p(\vw)]
    + 
    \E_{\vs,\va\sim\gD}\left[\log p(\va\mid\vs)\right]).
\end{align*}
where $\lambda = (\alpha-\beta)/\alpha$ and $ \E_{\vs,\va\sim\gD}\left[\log p(\va\mid\vs)\right]$ is constant. 
The regression objective bounds the \irl objective for a worst-case on-policy state distribution $\mu_q(\vs)$, which motivates its scaling through $\beta$.
If $\gD$ has sufficient coverage, no \rl finetuning or \kl regularization is required, so $\beta = \alpha$ and $\lambda = 0$.
If $\gD$ does not have sufficient coverage, then let $\beta<\alpha$ so $\lambda>0$ to regularize the BC fit and finetune the policy with \rl accordingly with additional soft policy iteration steps.
\end{restatable}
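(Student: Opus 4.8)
The plan is to substitute the coherent reward $r_\vtheta(\vs,\va) = \alpha(\log q_\vtheta(\va\mid\vs) - \log p(\va\mid\vs))$ into every occurrence of $r_\vtheta$ in the \kl-regularized game-theoretic \irl objective and then simplify, showing that the objective collapses to a scaled, \kl-regularized \bc objective up to an additive constant. First I would handle the subtracted on-policy term: after substitution its integrand is $r_\vtheta(\vs,\va) - \beta(\log q_\vtheta(\va\mid\vs) - \log p(\va\mid\vs)) = (\alpha-\beta)(\log q_\vtheta(\va\mid\vs) - \log p(\va\mid\vs))$, so using $\rho_{q_\vtheta}(\vs,\va) = q_\vtheta(\va\mid\vs)\,\mu_{q_\vtheta}(\vs)$ its expectation equals $(\alpha-\beta)\int\mu_{q_\vtheta}(\vs)\,\KL[q_\vtheta(\va\mid\vs)\mid\mid p(\va\mid\vs)]\,\rd\vs$, which is $\geq 0$ since $\alpha\geq\beta\geq 0$. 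The expert term becomes $\E_{\vs,\va\sim\gD}[r_\vtheta] = \alpha\,\E_{\vs,\va\sim\gD}[\log q_\vtheta(\va\mid\vs)] - \alpha\,\E_{\vs,\va\sim\gD}[\log p(\va\mid\vs)]$. Collecting terms with $\lambda = (\alpha-\beta)/\alpha$, the \irl objective is then \emph{exactly} $\alpha\big(\E_{\vs,\va\sim\gD}[\log q_\vtheta(\va\mid\vs)] - \lambda\int\mu_{q_\vtheta}(\vs)\,\KL[q_\vtheta(\va\mid\vs)\mid\mid p(\va\mid\vs)]\,\rd\vs\big)$ plus $\pm\alpha\,\E_{\vs,\va\sim\gD}[\log p(\va\mid\vs)]$, which is constant because the prior $p$ is fixed (e.g.\ uniform on $\gA$). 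This already has the form of a scaled, \kl-regularized \bc objective, except that the regularizer is an \emph{on-policy occupancy-weighted predictive} \kl rather than the weight-space \kl of Equation \ref{eq:regression}.

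The remaining step is to bound that predictive \kl by the weight-space \kl. Here I would invoke the shared-hypothesis-space assumption: $q_\vtheta(\va\mid\vs) = \int p(\va\mid\vs,\vw)\,q_\vtheta(\vw)\,\rd\vw$ and $p(\va\mid\vs) = \int p(\va\mid\vs,\vw)\,p(\vw)\,\rd\vw$, so the two action densities are the pushforwards of $q_\vtheta(\vw)$ and $p(\vw)$ through the \emph{same} Markov kernel $\vw\mapsto p(\cdot\mid\vs,\vw)$. The data-processing inequality (equivalently, the \kl chain rule applied to $q_\vtheta(\vw)\,p(\va\mid\vs,\vw)$ versus $p(\vw)\,p(\va\mid\vs,\vw)$) then gives $\KL[q_\vtheta(\va\mid\vs)\mid\mid p(\va\mid\vs)] \leq \KL[q_\vtheta(\vw)\mid\mid p(\vw)]$ for every $\vs$, hence $\int\mu_{q_\vtheta}(\vs)\,\KL[q_\vtheta(\va\mid\vs)\mid\mid p(\va\mid\vs)]\,\rd\vs \leq \KL[q_\vtheta(\vw)\mid\mid p(\vw)]$ for \emph{any} state distribution, in particular the unknown ``worst-case'' on-policy one. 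Since the predictive \kl enters with the non-negative coefficient $\alpha\lambda = \alpha - \beta$, replacing it by the larger weight-space \kl only decreases the objective, giving the claimed lower bound; the limiting cases $\beta = \alpha$ ($\lambda = 0$) and $\beta < \alpha$ ($\lambda > 0$) are then immediate.

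The main obstacle is precisely this passage from the predictive \kl to the weight-space \kl: the substitution in the first step naturally yields a regularizer evaluated at the \emph{on-policy} state distribution $\mu_{q_\vtheta}$, whereas the regression objective that is actually minimized in practice penalizes a fixed, policy-independent weight-space \kl. Bridging the two is exactly where the shared-hypothesis-space and finite-parameter assumptions are used, and it is the formal counterpart of the paper's claim that entropy-regularized \bc plays the role of the saddle-point optimization in game-theoretic \irl, since the weight-space bound holds uniformly over state distributions and so covers the worst case. The rest is bookkeeping: keeping track of the normalization of the state visitation $\mu_{q_\vtheta}$ (treating it as a probability measure, or absorbing a factor $1/(1-\gamma)$ into $\beta$), and observing that $\E_{\vs,\va\sim\gD}[\log p(\va\mid\vs)]$ contributes only a prior-determined constant, which may sit on either side of the inequality.
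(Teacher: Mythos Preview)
Your proposal is correct and follows essentially the same route as the paper: substitute the coherent reward, collapse the on-policy term to $(\alpha-\beta)$ times a state-averaged conditional \kl, then invoke the data-processing inequality through the shared kernel $\vw\mapsto p(\cdot\mid\vs,\vw)$ to bound it by the weight-space \kl $\KL[q_\vtheta(\vw)\mid\mid p(\vw)]$. The paper additionally frames the step $\pi\to q_\vtheta$ via a realizability assumption and a coherent-policy-optimality lemma, but since the theorem statement you were given already has $\va\sim q_\vtheta(\cdot\mid\vs)$ in the on-policy expectation, your more direct treatment is equivalent; your bookkeeping remarks about the normalization of $\mu_{q_\vtheta}$ and the sign of the constant $\E_{\gD}[\log p(\va\mid\vs)]$ are also apt (the paper is loose on both points).
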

The proof is provided in Appendix \ref{sec:csil_proof}.
In the tabular setting, it is straightforward for the cloned policy to reflect the prior distribution outside of the expert's data distribution.
However, this behavior is harder to capture in the continuous setting where function approximation is typically adopted. 
\begin{figure}[t]
    \vspace{-1.5em}
    \centering
    \includegraphics[width=\textwidth]{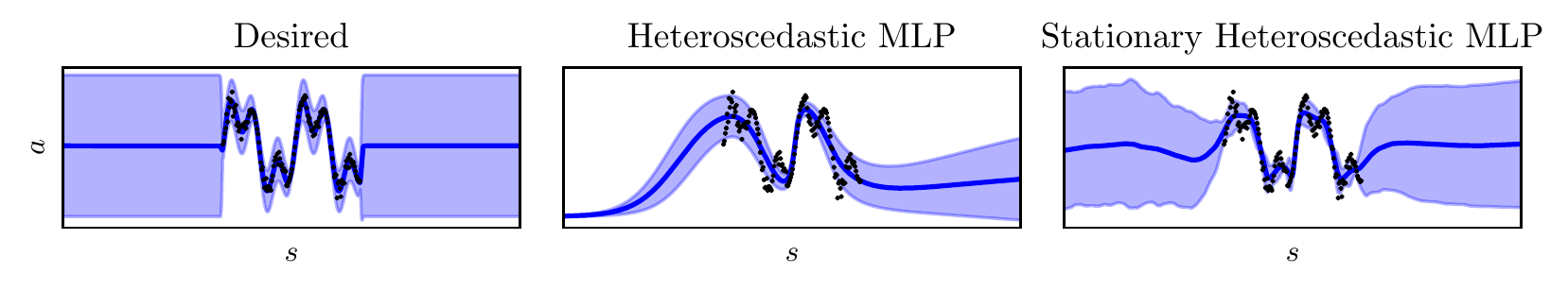}
    \vspace{-2em}
    \caption{For the coherent reward to be effective, the stochastic policy should return to the prior distribution outside of the data distribution (left). 
    The typical heteroscedastic parametric policies have undefined out-of-distribution behavior and typically collapse to the action limits due to the network extrapolation and tanh transformation (middle). 
    By approximating stationary Gaussian processes, we can design policies that exhibit the desired behavior with minimal network modifications (right).
    }
    \label{fig:hetstat}
    \vspace{-1.5em}
\end{figure}
\section{Stationary processes for continuous control policies}
\label{sec:continuous}
Maximum entropy methods used in reinforcement learning can be recast as a minimum relative entropy problem against a regularizing prior policy with a uniform action distribution, i.e., a prior
$
    {p(\va\mid\vs) = \gU_\gA(\va) \; \forall\;\vs \in \gS},
$
where $\gU_\gA$ is the uniform distribution over $\gA$.
Achieving such a policy in the tabular setting is straightforward, as the policy can be updated independently for each state. 
However, such a policy construction is far more difficult for continuous states due to function approximation capturing correlations between states. 
To achieve the desired behavior, we can use stationary process theory (Definition \ref{def:stat}) to construct an appropriate function space (Figure \ref{fig:hetstat}).

\begin{definition}
\label{def:stat}
(Stationary process, \citet{cox1977theory}). 
A process $f: \gX \rightarrow \gY$ is stationary if its joint distribution in $\vy\in\gY$, $p(\vy_1,\dots,\vy_n)$, is constant w.r.t. $\vx_1,\dots,\vx_n \in \gX$ and all $n \in \sN_{>0}$.
\end{definition}
To approximate a stationary policy using function approximation, Gaussian process (\gp) theory provides a means using features in a Gaussian linear model that defines a stationary process \citep{gpml}.
To approximate such feature spaces using neural networks, this can be achieved through a relatively wide final layer with a periodic activation function ($f_{\text{per}}$), which can be shown to satisfy the stationarity property \citep{meronen2021periodic}.
Refer to Appendix \ref{sec:stationary} for technical details.
To reconcile this approach with prior work such as \sac, we use the predictive distribution of our Gaussian process in lieu of a network directly predicting Gaussian moments. 
The policy is defined as
$
\va = \text{tanh}(\vz(\vs)),
$
where $\vz(\vs) = \mW\vphi(\vs)$.
The weights are factorized row-wise
$
\mW = [\vw_1, \dots, \vw_{d_a}]\tran,
\;
\vw_i = \gN(\vmu_i,\mSigma_i)$
to define a \gp with independent actions.
Using change-of-variables
like \sac \citep{Haarnoja2018}, the policy is expressed per-action as
\begin{align*}
q(a_i\mid\vs) = \gN
\left(
z_i;\vmu_i\tran\vphi(\vs),\,\vphi(\vs)\tran\mSigma_i\,\vphi(\vs)
\right)
\cdot
\bigg\rvert\text{det}
\left(\frac{\rd a_i}{\rd z_i}\right)
\bigg\rvert\inv,
\quad
\vphi(\vs) = f_{\text{per}}(\tilde{\mW}\vphi_{\text{mlp}}(\vs)).
\end{align*}
$\tilde{\mW}$ are weights drawn from a distribution, e.g., a Gaussian, that also characterizes the stochastic process and
$\vphi_{\text{mlp}}$
is an arbitrary \mlp.
We refer to this \underline{het}eroscedastic \underline{stat}ionary model as \hetstat.

Function approximation necessitates several additional practical implementation details of \csil.
\paragraph{Approximating and regularizing the critic.}
Theorem \ref{th:invert} and Lemma \ref{lem:shape} show that the log policy ratio is also a shaped critic.
However, we found this model was not expressive enough for further policy evaluation.
Instead, the critic is approximated using as a feedforward network and pre-trained after the policy using SARSA \citep{Sutton2018} and the squared Bellman error.
For coherency, a useful inductive bias is to minimize the critic Jacobian w.r.t. the expert actions to approximate first-order optimality, i.e., $\min_\vphi \E_{\vs,\va\sim\gD}[\nabla_\va \gQ_\vphi(\vs,\va)]$, as an auxiliary critic loss.
For further discussion, see Section \ref{sec:critic_regularization} in the Appendix.
Pre-training and regularization are ablated in Figures \ref{fig:online_csil_critic_pt} and~\ref{fig:online_csil_critic_grad} in the Appendix.
\paragraph{Using the cloned policy as prior.}
While a uniform prior is used in the \csil reward throughout, in practice, we found that finetuning the cloned policy with this prior in the soft Bellman equation (Equation \ref{eq:soft_bellman}) leads to divergent policies.
Replacing the maximum entropy regularization with KL regularization against the cloned policy, i.e., $\KL[q_\vtheta(\va\,|\,\vs) \,||\, q_{\vtheta_1}(\va\,|\,\vs)]$, mitigated divergent behavior.
This regularization still retains a maximum entropy effect if the \bc policy behaves like a stationary process.
This approach matches prior work on 
\kl-regularized \rl from demonstrations, e.g., \citep{siegelkeep,rudner2021pathologies}.
\paragraph{`Faithful' heteroscedastic regression loss.}
To fit a parametric pseudo-posterior to the expert dataset, we use the predictive distribution for conditional density estimation \citep{jankowiak2020parametric} using heteroscedastic regression \citep{hetgp}.
A practical issue with heteroscedastic regression with function approximators is the incorrect modeling of data as noise \citep{seitzer2022on}.
This can be overcome with a `faithful' loss function and modelling construction \citep{stirn2023faithful}, which achieves the desired minimization of the squared error in the mean and fits the predictive variance to model the residual errors.
For more details, see Appendix \ref{sec:faithful}.
\paragraph{Refining the coherent reward.}
The \hetstat network we use still approximates a stationary process, as shown in Figure \ref{fig:hetstat}. %
To ensure spurious reward values from approximation errors are not exploited during learning, it can be beneficial to refine, in a minimax fashion, the coherent reward with the additional data seen during learning.
This minimax refinement both reduces the stationary approximation errors of the policy, while also improving the reward from the game-theoretic \irl perspective.
For further details, intuition and ablations, see Appendix~\ref{sec:kl}, \ref{sec:viz_refine} and \ref{sec:reward_ablation} respectively.

Algorithm \ref{alg:csil} with these additions can be found 
summarized in Algorithm \ref{alg:deep_csil} in Appendix \ref{sec:practical_algorithm}.
An extensive ablation study of these adjustments can be found in Appendix \ref{sec:ablation}.
\vspace{-0.5em}

\section{Experimental results}
\label{sec:experiments}
We evaluate \csil against baseline methods on tabular and continuous state-action environments.
The baselines are popular entropy-regularized imitation learning methods discussed in Section \ref{sec:background}.
Moreover, ablation studies are provided in Appendix \ref{sec:ablation} for the experiments in Section \ref{sec:agent_online} and \ref{sec:robomimic}

\begin{table}[!b]
\vspace{-.75em}
\centering
\begin{tabular}{llll|llllll}
\mdp    &  Variant     & Expert & \bc & Classifier & \meirl & \gail & \iqlearn & \ppil & \csil
\\\toprule
Dense  & Nominal & 0.266 & 0.200 & 0.249 & 0.251 & 0.253 & 0.244 & 0.229 & \textbf{0.257} \\
       & Windy & 0.123 & 0.086 & 0.103 & \textbf{0.111} & 0.105 & 0.104 & 0.104 & 0.107  \\
Sparse & Nominal & 1.237 & 1.237 & \textbf{1.237} & 1.131 & \textbf{1.237} & \textbf{1.237} & \textbf{1.237} & \textbf{1.237} \\
       & Windy &  0.052 & 0.002 & \textbf{0.044} & 0.036 & 0.043 & 0.002 & 0.002 & \textbf{0.044} \\
\bottomrule    
\end{tabular}
\vspace{0.5em}
\caption{
Inverse optimal control, combining \spi and known dynamics, in tabular \mdp{}s.
The `dense' \mdp has a uniform initial state distribution and four goal states. 
The `sparse' \mdp has one initial state, one goal state and one forbidden state.
The agents are trained on the nominal \mdp. 
The `windy' \mdp has a random disturbance across all states to evaluate the robustness of the policy.
\csil performs well across all settings despite being the simpler algorithm relative to the \irl baselines.
}
\label{tab:spi}
\ifdefined\neurips
\fi
\end{table}

\subsection{Tabular inverse optimal control}
We consider inverse optimal control in two tabular environments to examine the performance of \csil without approximations.
One (`dense') has a uniform initial distribution across the state space and several goal states, while the other (`sparse') has one initial state, one goal state, and one forbidden state with a large negative reward.
Table \ref{tab:spi} and Appendix \ref{sec:ioc} shows the results for a nominal and a `windy' environment, which has a fixed dynamics disturbance to test the policy robustness. 
The results show that \csil is effective across all settings, especially the sparse environment where many baselines produce policies that are brittle to disturbances and, therefore, covariate shift.
Moreover, \csil produces value functions similar to \gail (Figures \ref{fig:gail_sparse} and \ref{fig:csil_sparse}) while being a simpler algorithm.

\begin{figure}[!t]
    \ifdefined\neurips
    \vspace{-1em}
    \fi
    \centering
    \includegraphics[width=\textwidth]{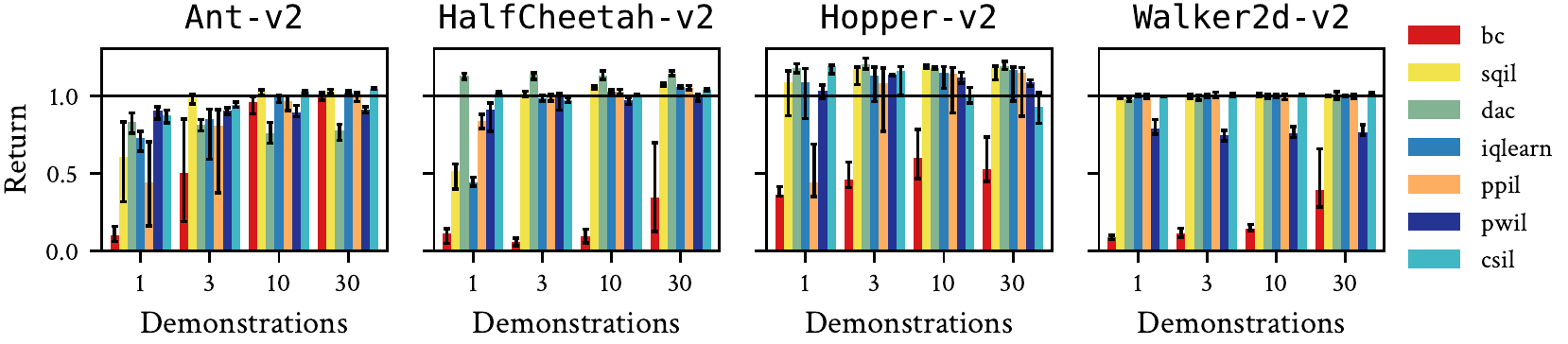}
    \vspace{-1.5em}
    \caption{Normalized performance of \textsc{csil} against baselines for online imitation learning for \mujoco \gym tasks.
    Uncertainty intervals depict quartiles over ten seeds.
    To assess convergence across seeds,
    performance is chosen using the highest 25th percentile of the episode return during learning.
    }
    \label{fig:mujoco_online_demos}
    \vspace{-1em}
\end{figure}

\subsection{Continuous control from agent demonstrations}
\label{sec:agent_online}
A standard benchmark of deep imitation learning is learning \mujoco \citep{mujoco} \gym \citep{gym} and \adroit~\citep{RajeswaranRSS18} tasks from agent demonstrations. 
We evaluate online and offline learning, where a fixed dataset is used in lieu of environment interactions \citep{fu2020d4rl}.
\texttt{Acme} \citep{hoffman2020acme} was used to implement \csil and baselines, and expert data was obtained using \texttt{rlds} \citep{ramos2021rlds}
from existing sources \citep{orsini2021matters,RajeswaranRSS18,fu2020d4rl}.
Returns are normalized with respect to the reported expert and random performance \citep{orsini2021matters} (Table \ref{table:expert_scores}). 

\paragraph{Online imitation learning.}
In this setting, we used \dac (actor-critic \gail), \iqlearn, \ppil, \sqil, and \pwil as entropy-regularized imitation learning baselines.
We evaluate on the standard benchmark of locomotion-based \gym tasks, using the \sac expert data generated by \citet{orsini2021matters}.
In this setting, Figures \ref{fig:mujoco_online_demos} and \ref{fig:mujoco_online_steps} show \csil closely matches the best baseline performance across environments and dataset sizes.
We also evaluate on the \adroit environments, which involve manipulation tasks with a complex 27-dimensional robot hand \citep{RajeswaranRSS18}. 
In this setting, Figures \ref{fig:adroit_online_demos} and \ref{fig:adroit_online_steps} show that saddle-point methods struggle due to the instability of the optimization in high dimensions without careful regularization and hyperparameter selection \citep{orsini2021matters}.
In contrast, \csil is very effective, matching or surpassing \bc, highlighting the benefit of coherency for both policy initialization and improvement.
In the Appendix, Figure \ref{fig:adroit_online_steps} includes \sac from demonstrations (\sacfd) \citep{vecerik2017leveraging,RajeswaranRSS18} as an oracle baseline with access to the true reward.
\csil exhibits greater sample efficiency than \sacfd, presumably due to the \bc initialization and the shaped reward, and often matches final performance.
Figures \ref{fig:mujoco_online_bc_pretraining},
\ref{fig:adroit_online_bc_prior}
and
\ref{fig:mujoco_offline_bc_prior} in the Appendix show an ablation of \iqlearn and \ppil with \bc pre-training.
We observe a fast unlearning of the \bc policy due to the randomly initialized rewards, which was also observed by \citet{orsini2021matters}, so any initial performance improvement is negligible or temporary. 

\begin{figure}[!b]
    \centering
    \includegraphics[width=\textwidth]{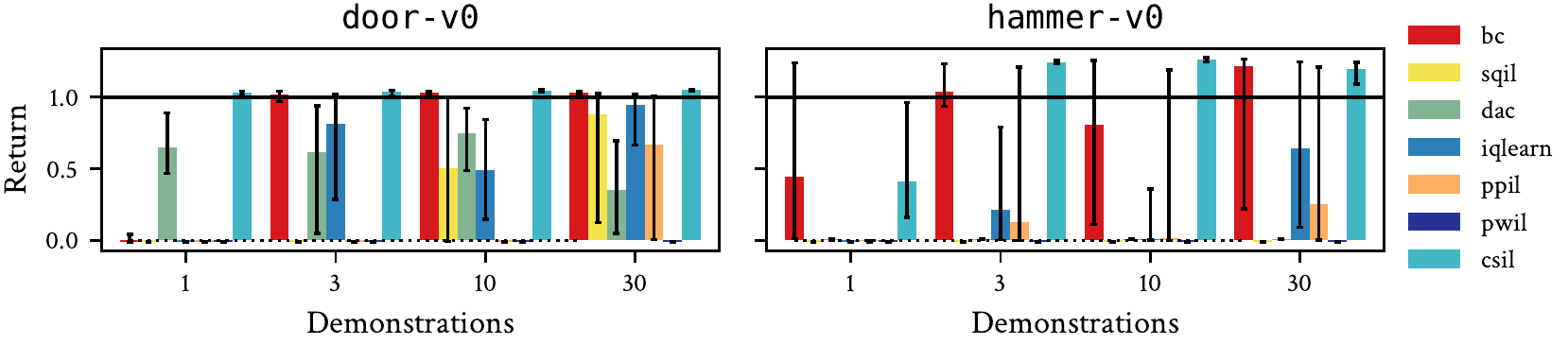}
    \vspace{-1.5em}
    \caption{Normalized performance of \textsc{csil} against baselines for online imitation learning for \texttt{Adroit} tasks.
    Uncertainty intervals depict quartiles over ten seeds.
    To assess convergence across seeds, performance is chosen using the highest filtered 25th percentile of the return during learning.
    }
    \label{fig:adroit_online_demos}
\end{figure}

\paragraph{Offline imitation learning.}
Another challenging setting for deep \rl is \emph{offline} learning, where the online interactions are replaced with a static `supplementary' dataset. %
Note that prior works have used `offline' learning to describe the setting where only the demonstrations are used for learning \citep{Kostrikov2020,arenz2020nonadversarial,Garg2021,vianoproximal}, like in \bc, such that the \il method becomes a form of regularized \bc \citep{li2022rethinkingvaluedice}.
Standard offline learning is challenging primarily because the critic approximation can favor unseen actions, motivating appropriate regularization (e.g., \citep{kumar2020conservative}).
We use the \texttt{full-replay} datasets from the \texttt{d4rl} benchmark \citep{fu2020d4rl} of the \gym locomotion tasks for the offline dataset, so the offline and demonstration data is not the same.
We also evaluate \smodice \citep{ma2022versatile}
and \demodice \citep{kim2022demodice}, two offline imitation learning methods which use state-based value functions, weighted \bc policy updates, and a discriminator-based reward.
We opt not to reproduce them in \texttt{acme} and use the original implementations.
As a consequence, the demonstrations are from a different dataset, but are normalized appropriately.
For further details on \smodice and \demodice, see Appendix \ref{sec:ex_related_work} and \ref{sec:implementation}.

Figures \ref{fig:offline_gym_demos} and \ref{fig:offline_gym_steps} demonstrate that offline learning is significantly harder than online learning, with no method solving the tasks with few demonstrations. 
This can be attributed to the lack of overlap between the offline and demonstration data manifesting as a sparse reward signal.
However, with more demonstrations, \csil can achieve reasonable performance. 
This performance is partly due to the strength of the initial \bc policy, but \csil can still demonstrate policy improvement in some cases (Figure \ref{fig:mujoco_offline_bc_prior}).
Figure \ref{fig:offline_gym_steps} includes \cql \citep{kumar2020conservative} as an oracle baseline, which has access to the true reward function.
For some environments, \csil outperforms \cql with enough demonstrations.
The \demodice and \smodice baselines are both strong, especially for few demonstrations.
However, performance does not increase so much with more demonstrations. Since \csil could also be implemented with a state-based value function and a weighed \bc policy update (e.g., \citep{abdolmalekimaximum}), further work is needed to determine which components matter most for effective offline imitation learning.

\begin{figure}[!t]
    \ifdefined\neurips
    \vspace{-1em}
    \fi
    \centering
    \includegraphics[width=\textwidth]{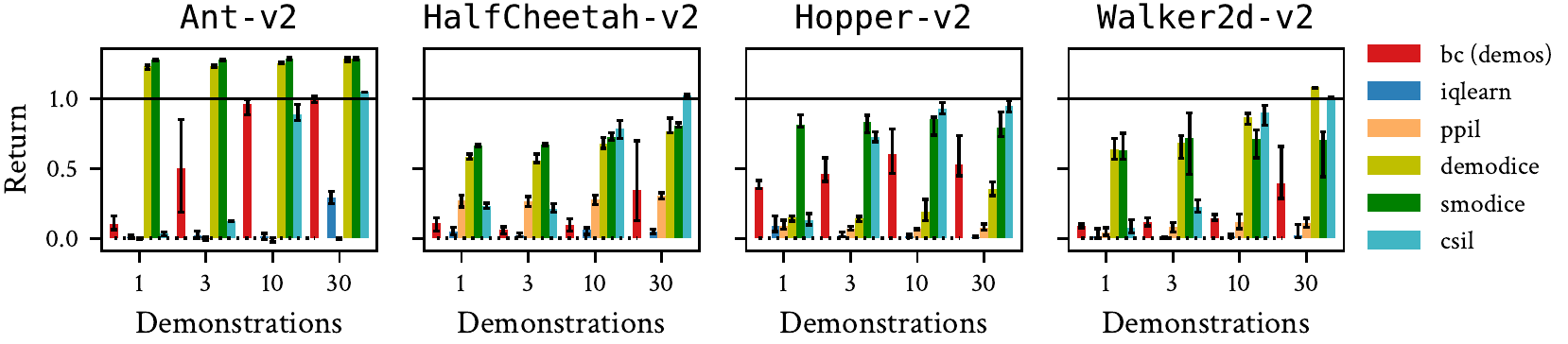}
    \vspace{-1.5em}
    \caption{Normalized performance of \textsc{csil} against baselines for offline imitation learning for \gym tasks.
    Uncertainty intervals depict quartiles over ten seeds.
    To assess convergence across seeds, performance is chosen using the highest filtered 25th percentile of the episode return during learning.
    }
    \label{fig:offline_gym_demos}
\end{figure}

\subsection{Continuous control from human demonstrations from states and images}
\label{sec:robomimic}
\begin{figure}[!b]
    \centering
    \includegraphics[width=\textwidth]{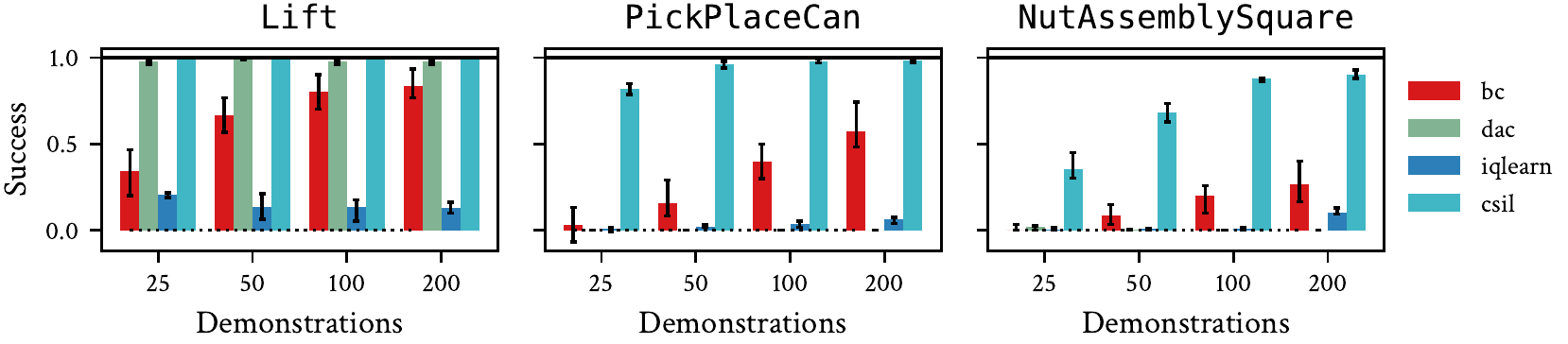}
    \vspace{-1.5em}
    \caption{
    Average success rate over 50 evaluations for online imitation learning for \robomimic tasks.
    Uncertainty intervals depict quartiles over ten seeds.
    To assess convergence across seeds, performance is chosen using the highest 25th percentile of the averaged success during learning.
    }
    \label{fig:online_robomimic_demos}
\end{figure}
As a more realistic evaluation, we consider learning robot manipulation tasks such as picking (\texttt{Lift}), pick-and-place (\texttt{PickPlaceCan}), and insertion (\texttt{NutAssemblySquare}) from random initializations and mixed-quality human demonstrations using the \robomimic datasets \citep{mandlekar2022matters}, which also include image observations.
The investigation of \citet{mandlekar2022matters} considered offline \rl and various forms of \bc with extensive model selection.
Instead, we investigate the applicability of imitation learning, using online learning with \csil as an alternative to \bc model selection.
One aspect of these tasks is that they have a sparse reward based on success, which can be used to define an absorbing state. 
As observed in prior work \citep{Kostrikov2019}, in practice there is a synergy between absorbing states and the sign of the reward, where positive rewards encourage survival and negative rewards encourage minimum-time strategies.
We found that \csil was initially ill-suited to these goal-oriented tasks as the rewards are typically positive.
In the same way that the \airl reward can be designed for a given sign \citep{orsini2021matters}, we can design negative \csil rewards by deriving an upper-bound of the log policy ratio and subtracting it in the reward.
For further details, see Appendix \ref{sec:bound}.
An ablation study in Appendix \ref{sec:constant_reward} shows that the negative \csil reward matches or outperforms a constant negative reward, especially when given fewer demonstrations.
These tasks are also more difficult due to the variance in the initial state, requiring much larger function approximators than in Section \ref{sec:agent_online}.

Figures \ref{fig:online_robomimic_demos} and \ref{fig:online_robomimic_steps} show the performance of \csil and baselines, where up to 200 demonstrations are required to sufficiently solve the tasks.
\csil achieves effective performance and reasonable demonstration sample efficiency across all environments, while baselines such as \dac struggle to solve the harder tasks.
Moreover, Figure \ref{fig:online_robomimic_steps} shows \csil is again more sample efficient than \sacfd.

Figure \ref{fig:image_robomimic_steps} shows the performance on \csil when using image-based observations.
A shared \textsc{cnn} torso is used between the policy, reward and critic, and the convolutional layers are frozen after the \bc stage.
\csil was able to solve the simpler tasks in this setting, matching the model selection strategy of \citet{mandlekar2022matters}, demonstrating its scalability.
In the offline setting, Figure \ref{fig:offline_robomimic_steps} shows state-based results with sub-optimal (`mixed human') demonstrations as supplementary dataset.
While some improvement could be made on simpler tasks, on the whole it appears much harder to learn from suboptimal human demonstrations offline. 
This supports previous observations that human behaviour can be significantly different to that of \rl agents, such that imitation performance is affected \citep{orsini2021matters,mandlekar2022matters}.

\begin{figure}[!t]
    \centering
    \vspace{-1em}
    \includegraphics[width=0.85\textwidth]{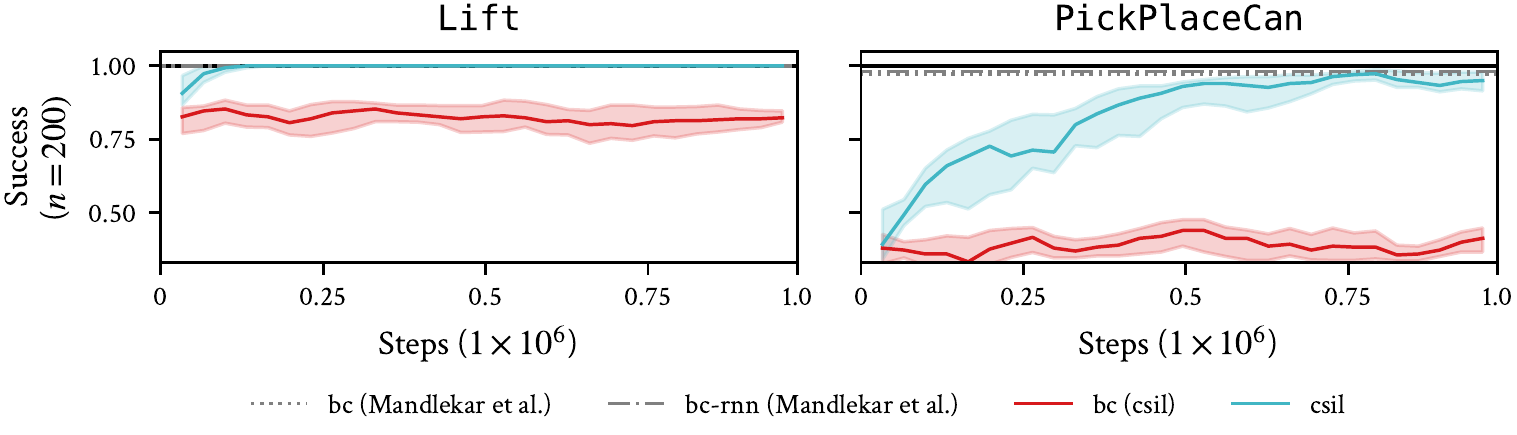}
    \vspace{-.5em}
    \caption{
    Average success rate over 50 evaluations for image-based online imitation learning for \robomimic tasks.
    Uncertainty intervals depict quartiles over five seeds.
    Baseline results obtained from \citet{mandlekar2022matters}.
    \bc (\csil) denotes the performance of \csil's initial policy for comparison.
    }
    \label{fig:image_robomimic_steps}
\end{figure}

\section{Discussion}

We have shown that `coherency' is an effective approach to \il, combining \bc with \irl-based finetuning by using a shaped learned reward for which the \bc policy is optimal.
We have demonstrated the effectiveness of \csil empirically across a range of settings, particularly for high-dimensional tasks and offline learning, due to \csil leveraging \bc for initializing the policy and reward.

In Figure \ref{fig:sil_unstable} of the Appendix, we investigate why baselines \iqlearn and \ppil, both similar to \csil, struggle in the high-dimensional and offline settings.
Firstly, the sensitivity of the saddle-point optimization can be observed in the stability of the expert reward and critic values during learning.
Secondly, we observe that the policy optimization does not effectively minimize the \bc objective by fitting the expert actions.
This suggests that these methods do not necessarily converge to a \bc solution and lack the  `coherence' quality that \csil has that allows it to refine \bc policies.

A current practical limitation of \csil is understanding when the initial \bc policy is viable as a coherent reward.
Our empirical results show that \csil can solve complex control tasks using only one demonstration, where its \bc policy is ineffective.
However,
if a demonstration-rich \bc policy cannot solve the task (e.g.,
image-based \texttt{NutAssemblySquare}),
\csil also appears to be unable to solve the task.
This suggests there is scope to improve the \bc models and learning to make performance more consistent across more complex environments.
An avenue for future work is to investigate the performance of \csil with richer policy classes beyond \mlp{}s, such as recurrent architectures and multi-modal action distributions, to assess the implications of the richer coherent reward
and better model sub-optimal human demonstrations.

\ifdefined\neurips
\newpage
\fi

\ifdefined\deepmind
\section*{Acknowledgements}
We wish to thank
Jost Tobias Springenberg, Markus Wulfmeier, Todor Davchev, Ksenia Konyushkova, Abbas Abdolmaleki and Matthew Hoffman for helpful discussions during the project.
We would also like to thank
Gabriel Dulac-Arnold and Sabela Ramos for help with setting up datasets and experiments, Luca Viano for help reproducing \ppil, and Martin Riedmiller, Markus Wulfmeier, Oleg Arenz, Davide Tateo, Boris Belousov, Michael Lutter and Gokul Swarmy for feedback on previous drafts.
We also thank the wider Google DeepMind research and engineering teams for the technical and intellectual infrastructure upon which this work is
built, in particular the developers of \texttt{acme}.
Some baseline experiments were run on the computer cluster of the Intelligence Autonomous Systems group at TU Darmstadt, which is maintained by Daniel Palenicek and Tim Schneider.
Joe Watson acknowledges the grant “Einrichtung eines Labors des Deutschen Forschungszentrum für Künstliche Intelligenz (DFKI) an der Technischen Universit\"at Darmstadt” of the Hessian Ministry of Science and Art.

\fi

\ifdefined\neurips
\begin{ack}

\end{ack}
\fi

\bibliography{main.bib}

\newpage

\appendix

\section{Extended related work}
\label{sec:ex_related_work}
This section discusses the relevant prior literature in more breadth and depth. 

Generative adversarial imitation learning (\gail)
\citep{Ho2016b} combines the game-theoretic \irl objective with causal entropy regularization and convex regularization of the reward.
This primal optimization problem can be transformed into a dual objective, which has the interpretation of minimizing a divergence between the expert and imitator's stationary distributions, where the divergence corresponds to the choice of convex reward regularization.
Choosing regularization that reduced the reward to a classification problem corresponds to the Jensen-Shannon (JS) divergence.
While \gail used on-policy \rl, \dac \citep{Kostrikov2019} used off-policy actor-critic methods (e.g., \sac)
in an ad hoc fashion for greater sample efficiency.
\textsc{oril} \citep{zolnaoffline} adopts a classification-based reward in the offline setting.
\citet{Kostrikov2019} also note that the \gail classifier can be used in different forms to produce rewards with different forms.
\airl \citep{Fu2018} highlight that the \gail reward is shaped by the dynamics if the \mdp following the shaping theory of \citet{Ng2000}, and use additional approximators that disentangle the shaping terms.
\textsc{nail} \citep{arenz2020nonadversarial} replaces the minimax nature of adversarial learning with a more stable max-max approach using expectation maximization-like optimization on a lower-bound of the \kl divergence between policies.
Its reward function is defined, as with \airl, as the log density ratio of the stationary distributions of the expert and policy,
regularized by the causal entropy.
Therefore, \textsc{nail} and 
\csil have similar reward structures; however, while \csil{}'s reward can be obtained from \bc and prior design,
\textsc{nail} requires iterative density ratio estimation during learning.
\citet{ghasemipour2020divergence} investigate the performance of \airl with different $f$-divergences and corresponding reward regularization beyond JS such as the forward and reverse \kl divergence.
\citet{ghasemipour2020divergence} discuss the connection between the log density ratio, \airl, \gail and their proposed methods.
\citet{ghasemipour2020divergence} also connect \bc to \irl, but from the divergence perspective rather than the likelihood perspective, as \bc can be cast as the conditional \kl divergence between the expert and agent policies.  

A separate line of study uses the convex duality in a slightly different fashion for 
`discounted distribution correction estimation' (\textsc{dice}) methods.
\valuedice uses the 
Donsker-Varadhan representation
to transform the \kl minimization problem 
into an alternative form and a change of variables to represent the log ratio of the stationary distributions as with an implicit representation using the Bellman operator.
However, this alternative form is still a saddle-point problem like \gail, but the final objective is more amenable to off-policy optimization.
Demo\textsc{dice}
\citep{kim2022demodice}
considers the offline setting and
incorporates \kl-regularization into the \textsc{dice} framework, simplififying optimization.
Due to the offline setting, the reward is obtained using a pre-trained classifier.
Policy improvement reduces to weighted \bc, like in many \kl-regularized \rl methods \citep{deisenroth2013survey,abdolmalekimaximum}. 
Smo\textsc{dice}
\citep{ma2022versatile}
is closely related to 
Demo\textsc{dice},
but focuses on matching the state distribution and, therefore, does not rely on expert actions, so the discriminator-based reward depends only on the state.

Inverse soft $Q$-learning (\iqlearn) \citep{Garg2021}
combines the game-theoretic \irl objective, entropy-regularized actor-critic (\sac), convex reward regularization and implicit rewards to perform \irl using only the critic by maximizing the expert's implicit reward.
While \citet{Garg2021} claim their method is non-adversarial because it does not use a classifier, it is still derived from the game-theoretic objective.
For continuous control, the \iqlearn implementation requires shaping the implicit reward using online samples, making learning not so different to \gail-based algorithms.
Moreover, the implementation replaces the minimization of the initial value with the value evaluated for the expert samples. 
For \sac, in which the value function is computed by evaluating the critic with the current policy, this shapes the critic to have a local maximimum around the expert samples. 
We found this implementation detail crucial to reproduce the results for a few demonstrations and discuss further in Appendix \ref{sec:critic_regularization}.

Proximal point imitation learning (\ppil) \citep{vianoproximal}
combines the game-theoretic and linear programming forms of \irl and entropy-regularized actor-critic \rl to derive an effective method with convergence guarantees.
Its construction yields a single objective (the dual) for optimizing the reward and critic jointly in a more stable fashion, rather than alternating updates like in \dac, and unlike \iqlearn, it has an explicit reward model.
For continuous control, its implementation builds on \sac and \iqlearn, including the aforementioned tricks, but with two key differences: an explicit reward function and minimization of the `logistic', rather than squared, Bellman error \citep{bas2021logistic}. 

Compared to the works discussed above, \csil has two main distinctions.
One is the reward hypothesis space in the form of the log policy ratio, derived from \kl-regularized \rl, as opposed to a classifier-based reward or critic-as-classifier implicit reward.
Secondly, the consequence of this coherent reward is that it can be pre-trained using \bc, and the \bc policy can be used as a reference policy.
While \bc pre-training could be incorporated into prior methods in an ad-hoc fashion, their optimization objectives do not guarantee that coherency is maintained. 
This is demonstrated in Figure \ref{fig:sil_unstable}, where \iqlearn and \ppil combined with \bc pre-training and \kl regularization do not converge like \csil.

Concurrent work,
least-squares inverse $Q$-learning (\textsc{ls-iq}) \citep{lsiq},
addresses the undocumented regularization of \iqlearn by minimizing a mixture divergence, which yields the desired regularization terms and better treatment of absorbing states.
The treatment moves away from the entropy-regularized \rl view and instead opts for explicit regularization critics. 

\citet{szot2023bcirl}, in concurrent work, use the bi-level optimization view of \irl to optimize the maximum entropy reward through an upper-level behavioral cloning loss using `meta' gradients.
This approach achieves coherency through the top-level loss while regularizing the learned reward.
However, this approach is restricted to on-policy \rl algorithms that directly use the reward rather than actor-critics methods that use a $Q$ function, making it less sample efficient.

\citet{Brantley2020DisagreementRegularized} propose disagreement-regularized imitation learning, which trains an ensemble of policies via \bc and uses their predictive variance to define a proxy reward.
This work shares the same motivation as \csil in achieving coherency and tackling the covariate shift problem. 
Moreover, using ensembles is similar to \csil's stationary policies, as both models are inspired by Bayesian modeling and provide uncertainty quantification. 
Unlike \csil, the method is not motivated by reward shaping, and the reward is obtained by thresholding the predictive variance to $\pm 1$ rather than assume that the likelihood values are meaningful.

\citet{taranovic2022adversarial} derive an \airl formalation from the \kl divergence between expert and policy that results in a reward consisting of log policy ratios.
However, they use a classifier to estimate these quantities and ignore some policy terms in the implementation.

\citet{swamy2022minimax}
use the \bc policy for replay estimation \citep{rajaraman2020toward} and a membership classifier to augment the expert dataset used during \irl.  

`Noisy' \bc \citep{sasaki2021behavioral} is tangentially related to \csil, as it uses the \bc log-likelihood as a proxy reward to finetune the \bc policy using policy gradient updates when learning from demonstrations that are sub-optimal due to additive noise. 

Soft $Q$-imitation learning (\sqil) \citep{Reddy2020} uses binary rewards for the demonstration and interaction transitions, bypassing the complexity of standard \irl.
The benefit of this approach comes from the critic learning to overcome the covariate shift problem through the credit assignment.
This approach has no theoretical guarantees, and
the simplistic nature of the reward does not encourage stable convergence in practice.

\citet{barde2020adversarial} proposed adversarial soft advantage fitting (\textsc{asaf}), an adversarial approach that constructs the classifier using two policies over trajectories or transitions.
This approach is attractive as the classification step performs the policy update, so no policy evaluation step is required.  
While this approach has clear connections to regularized behavioral cloning, it's uncertain how the policy learns to overcome covariate shift without any credit assignment. 
Moreover, this method has no convergence guarantees. 

\csil adopts a Bayesian-style approach to its initial \bc policy by specifying a prior policy but uses the predictive distribution rather than a likelihood to fit the demonstration data.
This is in contrast to Bayesian \irl \citep{birl}, where the prior is placed over the reward function as a means of reasoning about its ambiguity.
As the reward is an unobserved abstract quantity, the exponentiated critic is adopted as a pseudo-likelihood, and approximate inference is required to estimate the posterior reward.
Using a pseudo-likelihood means that Bayesian \irl and \meirl are not so different in practice, especially when using point estimates of the posterior \citep{choi2011map}.

\ifdefined\neurips
\newpage
\fi

\section{Likelihoods and priors in imitation learning}
\label{sec:likelihoods}
The pseudo-posterior perspective in this work was inspired by \citet{1056374}, who identify that effective likelihoods arise from relevant functions that describe a distribution. 
With this in mind, we believe it is useful to see \irl as extending \bc from a myopic, state-independent likelihood to one that encodes the causal, state-dependent nature of the problem as done \meirl, essentially incorporating the structure of the \mdp into the regression problem \citep{Neu2007}.
This perspective has two key consequences.
One is the importance of the prior, which provides important regularization.
For example,
\citet{li2022rethinkingvaluedice} have previously shown that the \valuedice does not necessarily outperform \bc as previously reported but matches \bc with appropriate regularization. 
Moreover, the \textsc{mimic-exp} formulation of \bc of \citet{rajaraman2020toward} proposes a \bc policy with a stationary prior,
\begin{align*}
    \pi(\va\mid\vs) = 
    \begin{cases}
    p_\gD(\va\mid\vs) 
    \text{ if }
    \vs\in\gD,
    \text{ where $p_\gD$ denotes conditional density estimation of $\gD$,}\\
    \gU_\gA(\va)
    \text{ otherwise.}
    \end{cases}
\end{align*}
This work shows how policies like these can be implemented for continuous control using stationary architectures.

The second consequence is the open question of how to define effective likelihoods for imitation learning.
\meirl and other prior works use feature matching, which, while an appropriate constraint, raises the additional question of which features are needed.
Using a classifier in \airl has proved highly effective, but it is not always straightforward to implement in practice without regularization and hyperparameter tuning. 
Much of \iqlearn's empirical success for continuous tasks arises from shaping the likelihood via the critic through the expert and on-policy regularization terms.

This work addresses the likelihood design problem through coherence and the inversion analysis in Theorem \ref{th:invert}, which informs the function-approximated critic regularization described in Section \ref{sec:continuous}.
An open question is how these design choices could be further improved, as the ablation study in Figure~\ref{fig:online_csil_critic_grad} shows that the benefit of the critic regularization is not evident for all tasks.

\section{Parametric stationary processes}
\label{sec:stationary}
The effectiveness of \csil is down to its stationary policy design. 
This section briefly summarizes the definition of stationary processes and how they can be approximated with \mlp{}s following the work of \citet{meronen2021periodic}.

While typically discussed for temporal stochastic processes whose input is time, stationary processes (Definition \ref{def:stat}) are those whose statistics do not vary with their input. 
In machine learning, these processes are useful as they allow us to design priors that are consistent per input but capture correlations (e.g., smoothness) \emph{across} inputs, which can be used for regularization in regression and classification \citep{gpml}.
In the continuous setting, stationarity can be defined for linear models, i.e., ${y = \vw\tran\vphi(\vx)}$, ${\vw\sim\gN(\vmu,\mSigma)}$, through their kernel function $\gK(\vx,\vx') = \vphi(\vx)\tran\vphi(\vx')$, the feature inner product. 
For stationary kernels, the kernel depends only on a relative shift $\vr \in \sR^d$ between inputs, i.e.
$\gK(\vr) = \vphi(\vx)\tran\vphi(\vx+\vr)$.
By combining the kernel with the weight prior of the linear model, we can define the covariance function, e.g. $\gC(\vr) = \vphi(\vx)\tran\,\mSigma\,\vphi(\vx+\vr)$.
Boschner's theorem \citep{da2014stochastic} states that
the covariance function of a stationary process can be represented as the
Fourier transform of a positive finite measure.
If the measure has a density, it is known as the spectral density $S(\bm{\omega})$, and it is a Fourier dual of the covariance function if the necessary conditions apply, known as the Wiener-Khinchin theorem \citep{chatfield1989analysis},
\begin{align}
    \gC(\vr) =
    \frac{1}{(2\pi)^d}
    \int_{\sR^{d}} S(\bm{\omega})\exp(i\bm{\omega}\tran\vr)\,\rd\vr,
    \quad
    S(\bm{\omega}) = \int_{\sR^d} \gC(\vr) \exp(-i\bm{\omega}\tran\vr)\,\rd\bm{\omega}.
\end{align}
To use this theory to design stationary function approximators, we can use the Wiener-Khintchin theorem to design function approximators that produce a Monte Carlo approximation of stationary kernels. 
Firstly, we consider only the last layer of the \mlp, such that it can be viewed as a linear model.
Secondly, \citet{meronen2021periodic} describe several periodic activation functions that produce products of the form $\exp(i\omega\tran\vr)$, such that the inner product of the whole feature space performs a Monte Carlo approximation of the integral.
We implemented sinusoidal, triangular, and periodic ReLU activations (see Figure 3 of \citet{meronen2021periodic}) as a policy architecture hyperparameter. 
Thirdly, the input into the stationary kernel approximation should be small (e.g., $< 20$),
as the stationary behavior is dictated by $\vr$, and this distance becomes less meaningful in higher dimensions. 
This means the \mlp feature space that feeds into the last layer should be small or have a `bottleneck' architecture to compress the internal representation. 
Finally, we require a density to represent the spectral density, which also defines the nature of the stationary process.
We use a Gaussian, but other distributions such as a Student-$t$ or Cauchy could be used if `rougher' processes are desired.
In summary, the last layer activations take the form
$
f_p(\mW\vphi(\vx)),
$
where $f_p$ is the periodic activation function of choice, $\mW$ are weights sampled from the distribution of choice and $\vphi(\vx)$
is an arbitrary (but bottlenecked) \mlp feature space of choice.

Related work combining stationary kernel approximations with \mlp{}s have constrained the feature space with a residual architecture and spectral normalization to bound the upper and lower Lipschitz constant \citep{liu2020simple}, to prevent these features overfitting. 
We found we did not need this to achieve approximate stationarity (e.g., Figure \ref{fig:hetstat}), which avoided the need for input-dependent policy architectures which may lead to a risk of underfitting.

A crucial practical detail is that the sampled weights $\mW$ should be trained and not kept constant, as the theory suggests.
In practice, we did not observe the statistics of these weights changing significantly during training, while training this weight layer reduced underfitting enough for substantially more effective \bc performance.

\section{Faithful heteroscedastic regression}
\label{sec:faithful}

\begin{figure}[t]
    \centering
    \ifdefined\neurips
    \vspace{-3em}
    \fi
    \includegraphics[width=0.9\textwidth]{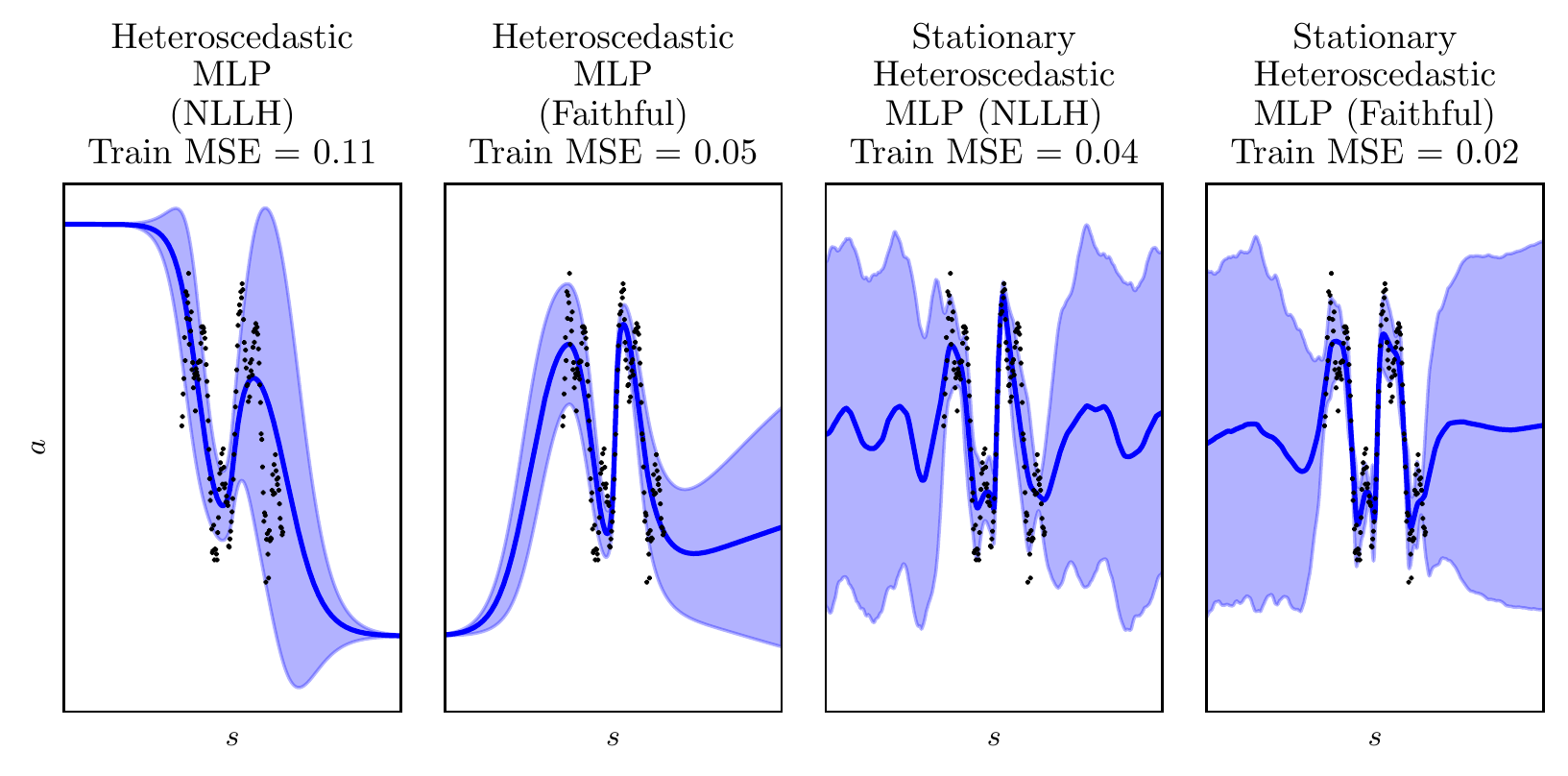}
    \caption{
    We repeat the regression problem from Figure \ref{fig:hetstat} to highlight the effect of `faithful' heteroscedastic regression.
    The \textsc{nllh} objective has a tendency to model `signal' as noise, which is most clearly seen in the heteroscedastic \mlp.
    The faithful heteroscedastic loss reduces the mean prediction error significantly without affecting uncertainty quantification for both models. 
    }
    \label{fig:faithful}
\end{figure}

This section summarizes the approach of \citet{stirn2023faithful}.
The faithful objective combines the mean squared error (\textsc{mse}) and negative log-likelihood
\textsc{nllh} loss with a careful construction of the heteroscedastic model.
The loss is
\begin{align}
    \gL(\gD, \vtheta) &=
    \E_{\vs,\va\sim\gD}
    \left
    [(\va - \vmu_\vtheta(\vs))^2
    -
    \log \tilde{q}(\va\mid\vs)
    \right],
    \quad
    \tilde{q}(\va\mid\vs) = \gN(\text{\texttt{sg}}(\vmu_\vtheta(\vs)),\mSigma_\vtheta(\vs)),
\end{align}
where $\text{\texttt{sg}}(\cdot)$ denotes the stop gradient operator.
Moreover, for the models `shared torso' (e.g., features), the gradient is stopped between the torso and the predictive variance so that the features are only trained to satisfy the MSE objective.
Figure \ref{fig:faithful} illustrates the impact of the alternative loss function and model adjustments. 
Note that the faithful objective is not strictly necessary for \bc or \csil and is mitigating potential under-fitting previously observed empirically.
The issue of modeling `signal' as noise when using heteroscedastic function approximators is poorly understood and depends on the model architecture, initialization, and the dataset considered.
While motivated by 1D toy examples, it was observed during development that the faithful objective is beneficial for \csil's downstream performance.

\section{Reward refinement and function-space relative entropy estimation}
\label{sec:kl}
In the function approximation setting, we refine the reward model with additional non-expert samples.
Motivated by Theorem \ref{th:csil}, we maximize the objective using samples from the replay buffers
\begin{align}
\gJ_r(\vtheta)
=
\E_{\vs,\va\sim\gD}\left[\alpha \log\frac{q_\vtheta(\va\mid\vs)}{p(\va\mid\vs)}\right]
-
\E_{\vs,\va\sim\rho_\pi}\left[\alpha \log\frac{q_\vtheta(\va\mid\vs)}{p(\va\mid\vs)}\right].
\label{eq:r_finetune}
\end{align}
If the behavior policy $\pi$ is $q_\vtheta(\va\mid\vs)$,
we can view the second term as a Monte Carlo estimation of the relative KL divergence between $q$ and $p$.
This estimator can suffer greatly from bias, as the KL should always be non-negative.
To counteract this, we construct an unbiased, positively-constrained estimator following \citet{schulmankl},
using $\E_{\vx\sim q(\cdot)}[R(\vx)\inv] 
=
\int p(\vx)\,\rd\vx
= 1$
and
$\log(x)\leq x-1$,
\begin{align*}
\KL[q(\vx)\mid\mid p(\vx)] = \E_{\vx\sim q(\cdot)}[\log R(\vx)]
=
\E_{\vx\sim q(\cdot)}[
R(\vx)\inv - 1 + \log R(\vx)
],
\end{align*}
This estimator results in replacing the second term of Equation 
\ref{eq:r_finetune} with
\begin{align}
\E_{\vs,\va\sim\rho_\pi}\left[r_\vtheta(\vs,\va)
\right]
\rightarrow
\E_{\vs,\va\sim\rho_\pi}\left[r_\vtheta(\vs,\va) - 1 + \exp(-r_\vtheta(\vs,\va))
\right].
\label{eq:reward_reg}
\end{align}
This objective is maximized concurrently with policy evaluation, using samples from the replay buffer. 
This reward refinement is similar to behavioral cloning via function-space variational inference (e.g., \citep{sunfunctional, watson2021neural}), enforcing the prior using the prior predictive distribution rather than
the prior weight distribution.
We found that applying the estimator to the scaled log ratio (Equation \ref{eq:reward_reg}) lead to better downstream performance empirically than scaling the estimator applied to the log ratio. 

In practice, the samples are taken from a replay buffer of online and offline data, so the action samples are taken from a different distribution to $q_\vtheta$.
However, we still found the adjusted objective effective for reward regularization.
Without it, the saddle-point refinement was less stable. 
Appendix \ref{sec:viz_refine} illustrates this refinement in a simple setting.
Figure \ref{fig:online_csil_reward_finetuning} ablates this component, where its effect is most beneficial when there are fewer demonstrations.

\section{Bounding the coherent reward}
\label{sec:bound}
The coherent reward
$\alpha (\log q(\va\mid\vs) - \log p(\va\mid\vs))$ can be upper bounded if $p(\va\mid\vs) > 0$
whenever 
$q(\va\mid\vs) > 0$.
The bound is policy dependent.
We use the bound in the continuous setting, where we use a tanh-transformed Gaussian policy. 
We add a small bias term $\sigma^2_{\text{min}}$ to the predictive variance in order to define the Gaussian likelihood upper bound.
Inconveniently, the tanh change of variables term in the log-likelihood $-\sum_{i=1}^{d_a}\log(1-\text{tanh}^2(u))$ \citep{Haarnoja2018} has no upper bound in theory, as the `latent' action $u\in[-\infty,\infty]$, but in the \texttt{Acme} implementation this log-likelihood is bounded for numerical stability due to the inverse tanh operation.
For action dimension $d_a$ and uniform prior across action range $[-1, 1]^{d_{a}}$ and 
$\alpha = d_a\inv$,
the upper bound is 
$r(\vs,\va)\leq
\frac{1}{d_a}(-0.5 d_a \log 2\pi\sigma^2_{\text{min}} + c + d_a \log 2) = -0.5 \log \pi\sigma^2_{\text{min}}/2 + \tilde{c}$,
where $c, \tilde{c}$ depends on the tanh clipping term.

\begin{figure}[t]
    \ifdefined\neurips
    \vspace{-3em}
    \fi
    \centering
    \includegraphics[width=0.9\textwidth]{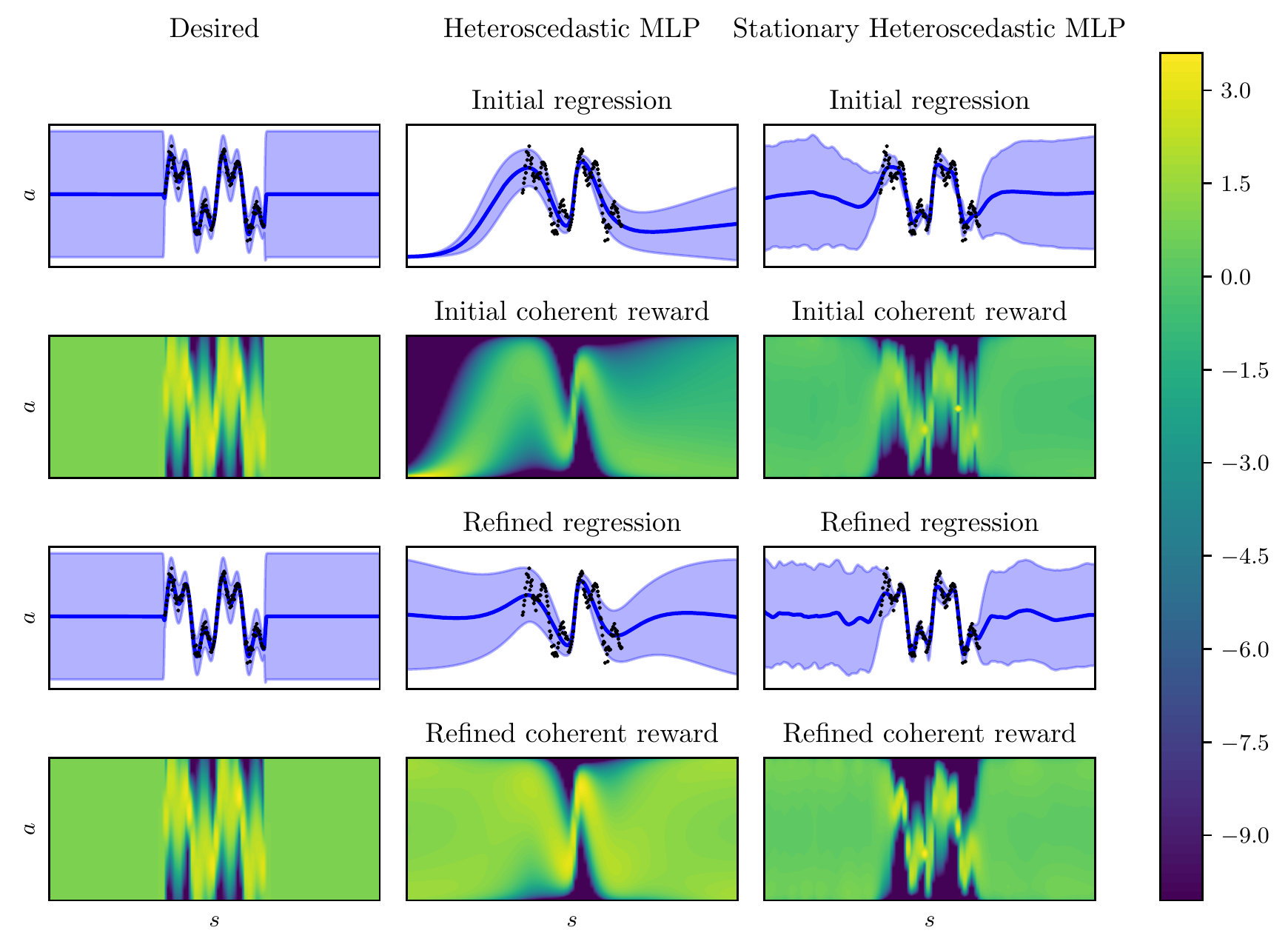}
    \caption{
    We extend the regression problem from Figure \ref{fig:hetstat} to highlight the effect of reward refinement detailed in Section \ref{sec:kl} from the \bc and \irl perspective.
    Refining the coherent reward with a minimax objective results in a \bc policy that appears more stationary, as seen in the heteroscedatic \mlp. 
    The stationary heteroscedastic policy is also refined, but to a lesser degree, as it's approximately stationary by construction. 
    The colourmap is shared between contour plots, and rewards are clipped at -10 to improve the visual colour range.
    Regression uncertainty is one standard deviation.
    }
    \label{fig:reward_refinement}
    \vspace{-1em}
\end{figure}

\section{Visualizing reward refinement}
\label{sec:viz_refine}

To provide intuition on Theorem \ref{th:csil} and connect it to reward refinement (Appendix \ref{sec:kl}), we build on the regression example from Figure \ref{fig:hetstat} by viewing it as a continuous contextual bandit problem, i.e., a single-state \mdp with no dynamics.
Outside of the demonstration state distribution, we desire that the reward is uniformly zero, as we have no data to inform action preference.
Figure \ref{fig:reward_refinement} shows the result of applying \csil and reward refinement to this \mdp, where the state distribution is uniform.
The `desired' \bc fit exhibits
a coherent reward with the desired qualities,
and the \hetstat policy approximates this desired policy.
The heteroscedastic \mlp exhibits undesirable out-of-distribution (OOD) behavior,
where arbitrary regions have strongly positive or negative rewards. 
Reward refinement improves the coherent reward of the heteroscedastic \mlp to be more uniform OOD, which in turn makes the regression fit look more like a stationary policy.
This result reinforces Theorem \ref{th:csil}, which shows that \kl-regularized \bc is connected to a lower bound on an entropy-regularized game-theoretic \irl objective that uses the coherent reward.

\section{Critic regularization}
\label{sec:critic_regularization}

\begin{figure}[!b]
    \vspace{-1em}
    \centering
    \includegraphics[width=\textwidth]{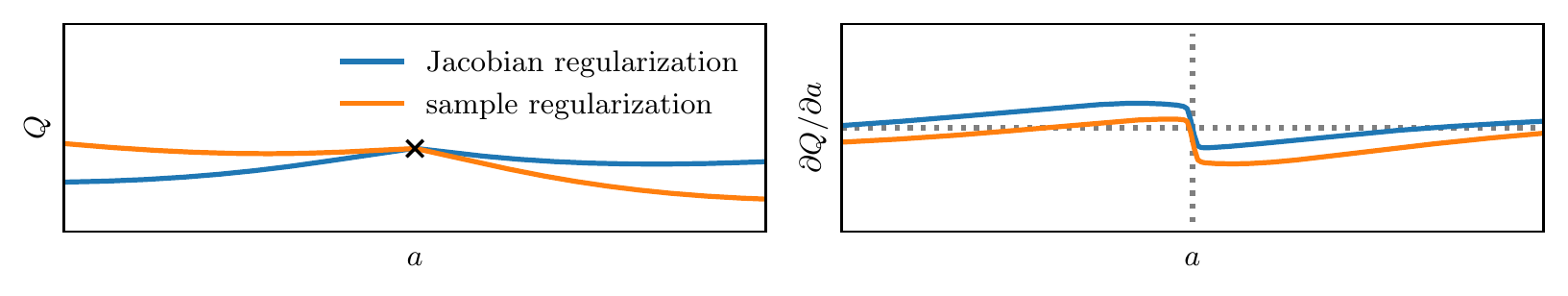}
    \vspace{-2em}
    \caption{
    Illustrating the two forms of critic regularization on a toy 1D problem for critic $\gQ(a)$ with regression target \blackcross.
    Jacobian regularization minimizes the squared gradient and corresponds to 
    \csil. 
    Sample regularization minimizes sampled nearby actions and corresponds to \iqlearn and \ppil.
    Both approaches shape the critic to have zero gradient around the target action, denoted by the dotted lines.
    A standard two-layer \mlp with 256 units and \textsc{elu} actions is used for approximation.
    }
    \label{fig:critic_reg}
    \vspace{-1em}
\end{figure}

For the function approximation setting,
\csil requires an auxiliary loss term during learning to incorporate coherency.
The full critic optimization problem is 
$\textstyle\min_\vphi \gJ_\gQ(\vphi)$
for objective
\begin{align}
    \gJ_\gQ(\vphi) = 
    \E_{\vs,\va\sim\gB}
    [(\gQ_\vphi(\vs,\va) - \gQ^*(\vs,\va))^2]
    + 
    \E_{\vs,\va\sim\gD}[|\nabla_\va\gQ_\vphi(\vs,\va)|^2],
    \label{eq:critic_objective}
\end{align}
where $\gQ^*$ denotes the target $Q$ values and $\gB$ denotes the replay buffer, that combines demonstrations $\gD$ and additional samples.
The motivation is to shape the critic such that the demonstration actions are first-order optimal by minimizing the squared Frobenius norm of the action Jacobian.

\iqlearn and \ppil adopt similar, but less explicit, regularization. 
Equation 10 of~\citet{Garg2021} is 
\begin{align*}
    \textstyle
    \max_\vphi
    \E_{\vs,\va\sim\gB}
    \left[
    f\left(
    \gQ_\vphi(\vs,\va)
    - \gamma\,\E_{\vs'\sim\gP(\cdot\mid\vs,\va)}
    [\gV_\vphi^\pi(\vs')]
    \right)\right]
    -
    (1-\gamma)
    \E_{\vs\sim\mu_0}
    [\gV_\vphi^\pi(\vs)],
\end{align*}
where
$\gV_\vphi^\pi(\vs) = \gQ(\vs,\va')$, $\va'\sim\pi(\cdot\mid\vs)$ and $f$ is the concave function defining the implicit reward regularization.  
In the implementation \citep{iqlearngithub},
the `\texttt{v0}' objective variant
replaces the initial distribution $\mu_0$ with the demonstration state distribution.
This trick is deployed to improve regularization without affecting optimality (Section 5.3, \citet{Kostrikov2020}).
If the policy matches the expert actions with its mode,  this term shapes a local maximum by minimizing the critic at action samples around the expert.
However, if the policy does not accurately match the expert actions, the effect of this term is harder to reason about.
In Figure \ref{fig:critic_reg}, we illustrate these two forms from critic regularization on a toy example, demonstrating that both can produce local maximums in the critic.

\section{Practical algorithm with function approximation}
\label{sec:practical_algorithm}
For deep imitation learning, \csil adopts \sac for the \spi step in Algorithm \ref{alg:csil}, summarized in Algorithm \ref{alg:deep_csil}.
The critic objective for both pre-training and policy evaluation is the squared Bellman error.
The policy objective $\max_\vtheta\gJ_\pi(\vtheta)$ samples both the demonstrations and replay buffer,
\begin{align}
\gJ_\pi(\vtheta)
=
E_{
\va \sim q_\vtheta(\cdot\mid\vs),\,
\vs\sim \gB\,\cup\,\gD
}
\left[
\gQ(\vs,\va)
- \beta (\log q_\vtheta(\cdot\mid\vs)
-
\log
q_{\vtheta_1}(\cdot\mid\vs))
\right].
\label{eq:policy_objective}
\end{align}
The policy and demonstration data are combined equally. 
We did not experiment with tuning this ratio. 
Compared to \sac,
we found we did not have to train multiple critics, but we did need target networks for the critics to stabilize learning.
The critic and reward objectives were also evaluated on the policy and expert data each update.
Moreover, like \iqlearn and \ppil, we kept $\beta$ constant and did not use an adaptive strategy.

\begin{figure}[!tb]
\ifdefined\neurips
\vspace{-4em}
\fi
\begin{algorithm}[H]
\KwData{Expert demonstrations $\gD$, initial temperature $\alpha$, refinement temperature, $\beta$,\\
parametric policy class $q_\vtheta(\va\mid\vs)$,
prior policy $p(\va\mid\vs)$, regression regularizer $\Psi$,
total steps $T$}
\KwResult{$q_{\vtheta_N}(\va\mid\vs)$, matching or improving the initial policy $q_{\vtheta_1}(\va\mid\vs)$}
\tcp*[f]{Pretrain coherent reward and critic}\\
Train initial policy from demonstrations, $\vtheta_1 = \argmax_{\vtheta} \E_{\vs,\va\sim\gD}[\log q_\vtheta(\va\mid\vs) - \Psi(\vtheta)]$\;
Define fixed shaped coherent reward, $\tilde{r}_{\vtheta_1}(\vs,\va) = \alpha (\log q_{\vtheta_1}(\va\mid\vs) - \log p(\va\mid\vs))$\;
Pretrain critic with SARSA on $\gD$ using $\tilde{r}_{\vtheta_1}$.\\
\For(\tcp*[f]{finetune policy with reinforcement learning}){$t = 1 \rightarrow T$}
{
    Interact with environment
    $\vs_{t+1} = \text{\texttt{Env}}(\vs_t, \va_t)$,
    $\va_t \sim q_{\vtheta_t}(\cdot\mid\vs_t)$, store in replay buffer $\gB$;\\
    Optimize reward and critic using $\gJ_\gQ(\vphi)$ and $\gJ_r(\vtheta)$ (Equations \ref{eq:critic_objective}, \ref{eq:r_finetune}) on minibatch from $\gB$; \\
    Update policy optimizing $\gJ_\pi(\vtheta)$ (Equation \ref{eq:policy_objective}) on minibatch from $\gB$;\\
}
\caption{Off-policy coherent soft imitation learning with function approximation}
\label{alg:deep_csil}
\end{algorithm}
\ifdefined\neurips
\vspace{-1.5em}
\fi
\end{figure}

\section{A divergence minimization perspective}
\label{sec:divergence}

Many imitation learning algorithms, such as \gail and \iqlearn, are derived from minimizing a divergence between the agent's state-action distribution and the expert's. 
While \csil is derived through policy inversion, it also has a divergence minimization perspective, albeit with two minimization steps rather than one. 

The first step involves recognising that maximum likelihood \bc fit minimizes the forward \kl divergence between the demonstration distribution $p_\gD(\vs,\va) = p_\gD(\va\mid\vs)\,p_\gD(\vs)$ and the policy, 
\begin{align*}
    \vtheta_1 = \textstyle\argmin_\vtheta \E_{\vs\sim p_\gD(\cdot)}[\KL[p_\gD(\va\mid\vs) \mid\mid q_\vtheta(\va\mid\vs)]]
    =
    \argmax_\vtheta \E_{\vs,\va\sim\gD}[\log q_\vtheta(\va\mid\vs)].
\end{align*}
Using the forward \kl means that the unknown log-likelihood of the data distribution is not required, and the policy can be trained using supervised learning and without environment interaction. 

The second step involves rearranging the \kl-regularized \rl objective into two \kl divergences, 
\begin{align*}
\vtheta_* &=
\textstyle\argmax_{\vtheta}
\mathbb{E}_{\va\sim q_\vtheta(\cdot\mid\vs),\, \vs\sim \mu_{\pi_\vtheta}(\cdot)}\left[\alpha \log\frac{q_{\vtheta_1}(\va\mid\vs)}{p(\va\mid\vs)}
-
\beta \log\frac{q_\vtheta(\va\mid\vs)}{p(\va\mid\vs)} \right],\\
&=
\textstyle\argmin_{q_\vtheta}
\E_{\vs\sim\mu_{q_\vtheta}(\cdot)}[
\alpha\,
\KL[q_\vtheta(\va\mid\vs)\mid\mid q_{\vtheta_1}(\va\mid\vs)]
-
(1 - \beta/\alpha)\,
\KL[q_\vtheta(\va\mid\vs)\mid\mid p(\va\mid\vs)]
].
\end{align*}
Note that since $\beta \leq \alpha$, then $(1-\beta/\alpha)\geq 0$.
The left-hand term minimizes the reverse \kl between the agent and the expert, using the \bc policy as a proxy. 
This is beneficial because the reverse conditional \kl requires evaluating policy $q_\vtheta$ on the environment. 
Moreover, the reverse \kl requires log-likelihoods of the target distribution, which is why the \bc policy is used as a proxy of the demonstration distribution. 
The right-hand term incorporates the coherent reward and encourages the policy to stay in states where the policy differs from the prior. 
As $q_\vtheta$ should be close to the \bc policy due to the left-hand term, the coherent reward shaping from Definition \ref{def:coherent_reward} should hold.

In this formulation of the objective, it is also clear to see that if $\beta = \alpha$, no \rl finetuning will occur, and $q_\vtheta$ will simply mimic the \bc policy $q_{\vtheta_1}$ due to the left-hand term.

\section{Theoretical results}
\label{sec:theory}
This section provides derivations and proofs for the theoretical results in the main text.

\subsection{Relative entropy regularization for reinforcement learning}
\label{sec:reps}
This section re-derives \kl-regularized \rl and \meirl in a shared notation for the reader's reference.
\paragraph{Reinforcement learning.}
We consider the discounted, infinite-horizon setting with a hard \kl constraint on the policy update with bound $\epsilon$,
\begin{align*}
    \textstyle\max_q
    \E_{\vs_{t+1}\sim \gP(\cdot\mid\vs_t,\va_t),\,
    \va_t\sim q(\cdot\mid\vs_t),\,
    \vs_0\sim\mu_0(\cdot)}
    [\sum_t \gamma^t r(\vs_t,\va_t)]
    \quad
    \text{s.t.}
    \quad
    \E_{\vs\sim\nu_q(\cdot)}
    [\KL[q(\va\mid\vs)\mid\mid p(\va\mid\vs)]]
    \leq \epsilon.
\end{align*}
This constrained objective is transcribed into the following
Lagrangian objective following~\citet{van2017non}, using the discounted stationary joint distribution $d(\vs,\va) = q(\va\mid\vs)\,\nu(\vs)
= (1-\gamma)\rho(\vs,\va)$,
\begin{align*}
    \gL(d, \lambda_1, \lambda_2, \lambda_3)
    &=
    (1-\gamma)\inv\int_{\gS\times\gA}
    d(\vs,\va)\,
    r(\vs,\va)
    \,\rd\vs\,\rd\va +
    \lambda_1
    \left(
    1-
    \int_{\gS\times\gA}
    d(\vs,\va)
    \,\rd\vs\,\rd\va
    \right)\\
    &+
    \int_{\gS}
    \lambda_2(\vs')
    \left(
    \int_{\gS\times\gA}
    \gamma\, \gP(\vs'\mid\vs,\va)\,
    d(\vs,\va)
    \,\rd\vs\,\rd\va
    +
    (1-\gamma)\mu_0(\vs')
    -
    \int_{\gA}
    d(\vs',\va')\,\rd\va'
    \right)\,\rd\vs'
    \\
    &+ \lambda_3
    \left(
    \epsilon -
    \int_\gS \nu(\vs)
    \int_\gA q(\va\mid\vs)
    \log\frac{q(\va\mid\vs)}{p(\va\mid\vs)}\,
    \rd\va\,
    \rd\vs
    \right),
\intertext{
where $\lambda_2$ is a function for an infinite-dimensional constraint.
Solving for $\partial\gL(q, \lambda_1, \lambda_2, \lambda_3)/\partial d = 0$,}
    q(\va\mid\vs)
    &\propto
    \exp
    (
    \alpha\inv
    (
    \underbrace{
    r(\vs,\va)
    +
    \gamma\,
    \E_{\vs'\sim \gP(\cdot\mid\vs,\va)}
    [\gV(\vs')]
    }_{\gQ(\vs,\va)}
    -\gV(\vs)
    )
    )\,
    p(\va\mid\vs),
\intertext{
with $\alpha = (1-\gamma)\lambda_3$ and $\lambda_2(\vs)=(1-\gamma)\gV(\vs)$,
where $\gV$ can be interpreted as the soft value function,}
    \gV(\vs) &=
    \alpha
    \log \int_\gA
    \exp
    \left(
    \frac{1}{\alpha}\,
    \gQ(\vs,\va)
    \right)\,
    p(\va\mid\vs)\,\rd\va
    \quad\text{as}\quad
    \int_\gA q(\va\mid\vs)\,\rd\va = 1.
\intertext{To obtain the more convenient lower bound of the soft Bellman equation, we combine importance sampling and Jensen's inequality to the soft Bellman equation}
    \gV(\vs) &=
    \alpha
    \log \int_\gA
    \exp
    \left(
    \frac{1}{\alpha}\,
    \gQ(\vs,\va)
    \right)\,
    \frac{p(\va\mid\vs)}{q(\va\mid\vs)}\,
    q(\va\mid\vs)\,
    \rd\va,\\
    &\geq 
    \E_{\va\sim q(\cdot\mid\vs)}
    \left[
    \gQ(\vs,\va)
    -
    \alpha
    \log
    \frac{q(\va\mid\vs)}{p(\va\mid\vs)}
    \right].
\end{align*}
This derivation is for the `hard' \kl constraint.
In many implementations, including ours, the constraint is `softened' where $\lambda_3$ and, therefore, $\alpha$ is a fixed hyperparameter. 

\paragraph{Inverse
reinforcement learning.}
As mentioned in Section \ref{sec:background} and Definition \ref{def:pseudo}, \meirl is closely related to \kl-regularize \rl, but with the objective and a constraint reversed,
\begin{align*}
    \textstyle\min_q
    \E_{\vs\sim\nu_q(\cdot)}
    [\KL[q(\va\mid\vs)\mid\mid p(\va\mid\vs)]]
    \quad
    \text{s.t.}
    \quad
    \E_{\vs,\va\sim\rho_q}[
    \vphi(\vs,\va)]
    =
    \E_{\vs,\va\sim \gD}[
    \vphi(\vs,\va)].
\end{align*}
The constrained optimization is, therefore, transcribed into a similar Lagrangian,
\begin{align*}
    \gL(d, \lambda_1, \lambda_2, \bm{\lambda_3})
    &=
    \int_\gS \nu(\vs)
    \int q(\va\mid\vs)
    \log\frac{q(\va\mid\vs)}{p(\va\mid\vs)}
    \,\rd\va
    \,\rd\vs
    +
    \lambda_1
    \left(
    1 - 
    \int_{\gS\times\gA}
    d(\vs,\va)
    \,\rd\vs\,\rd\va
    \right)\\
    &+
    \int_{\gS}
    \lambda_2(\vs')
    \left(
    \int_{\gS\times\gA}
    \gamma\, \gP(\vs'\mid\vs,\va)\,
    d(\vs,\va)
    \,\rd\vs\,\rd\va
    +
    (1-\gamma)\mu_0(\vs')
    -
    \int_{\gA}
    d(\vs',\va')\,\rd\va'
    \right)\,\rd\vs'
    \\
    &+
    \bm{\lambda_3}\tran
    \left(
    (1-\gamma)\inv\int_{\gS\times\gA}
    d(\vs,\va)\,
    \vphi(\vs,\va)
    \,\rd\vs\,\rd\va
    -
    \E_{\vs,\va\sim\gD}
    [
    \vphi(\vs_t,\va_t)]
    \right)
    .
\end{align*}
The consequence of this change is that the likelihood temperature is now implicit in the reward model
$r(\vs,\va) = (1-\gamma)\inv\bm{\lambda_3}\tran\vphi(\vs,\va)$
due to the Lagrange multiplier weights, this reward model is used in the soft Bellman equation instead of the true reward.
The reward model is jointly optimized to minimize the apprenticeship error by matching the feature expectation.

\subsection{Proof for Theorem \ref{th:invert}}
\label{sec:inversion_proof}
\inversion*
\begin{proof}
Equation \ref{eq:invert_reward} is derived by rearranging the posterior policy update in Equation \ref{eq:posteriorpolicy}.
Equation \ref{eq:invert_bellman} in derived by substituting the critic expression from Equation \ref{eq:invert_reward} into the \kl-regularized Bellman equation (Equation \ref{eq:soft_bellman}),
\begin{align*}
\underbrace{
\alpha\log \frac{q_\alpha(\va\mid\vs)}{p(\va\mid\vs)} + \gV_\alpha(\vs)
}_{\gQ(\vs,\va)}
=
\hspace{25em}\\
\hspace{0em}
r(\vs, \va) +
\gamma\,
\E_{\vs'\sim \gP(\cdot\mid\vs,\va),\,\va'\sim q_\alpha(\cdot\mid\vs')}
\left[
\underbrace{
\cancel{\alpha\log \frac{q_\alpha(\va'\mid\vs')}{p(\va'\mid\vs')}}
+
\gV_\alpha(\vs')
}_{\gQ(\vs',\va')}
-
\cancel{\alpha\log \frac{q_\alpha(\va'\mid\vs')}{p(\va'\mid\vs')}}
\right].
\end{align*}
\end{proof}

\subsection{Proof for Theorem \ref{th:csil}}
\label{sec:csil_proof}

Theorem \ref{th:csil} relies on the data processing inequality (Lemma \ref{lem:dpi}) to connect the function-space \kl seen in \csil with the weight-space \kl used in regularized \bc.

\begin{assumption}
\label{ass:realizability}
(Realizability)
The optimal policy can be expressed by the parametric policy $q_\vtheta(\va|\vs)$.
\end{assumption}
\begin{lemma}
\label{lem:coherent_policy}
(Coherent policy optimality).
The optimal policy for a \kl-regularized \rl problem with coherent reward $\tilde{r}(\vs,\va) = \alpha(\log q(\va\mid\vs) - \log p(\va\mid\vs))$ and temperature $\alpha$ is $q(\va\mid\vs)$.
\end{lemma}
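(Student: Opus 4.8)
The plan is to exhibit the soft $Q$-function associated with the coherent reward $\tilde r$ explicitly and then read off the induced soft-optimal policy. Recall from \secref{sec:background} that for a \kl-regularized \rl problem with reward $\tilde r$, prior $p$ and temperature $\alpha$, the optimal policy is the regularized Boltzmann policy of \eqref{eq:posteriorpolicy}, $\pi^\star(\va\mid\vs)\propto\exp(\alpha\inv\gQ(\vs,\va))\,p(\va\mid\vs)$, where $\gQ$ is the fixed point of the soft Bellman equation \eqref{eq:soft_bellman_exact}; for $\gamma<1$ this fixed point is unique, since the soft Bellman operator is a $\gamma$-contraction, and soft policy iteration converges to it (\citet{Haarnoja2018,Geist2019}). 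Hence it suffices to guess $\gQ$ and verify the fixed-point equation.

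First I would take the candidate $\gQ(\vs,\va) = \alpha\log\tfrac{q(\va\mid\vs)}{p(\va\mid\vs)}$, which is exactly the ``coherent critic'' of Theorem~\ref{th:invert} with the $\gV_\alpha$ potential stripped off. Substituting it into the soft value expression gives $\gV_\alpha(\vs) = \alpha\log\int_\gA\exp(\alpha\inv\gQ(\vs,\va))\,p(\va\mid\vs)\,\rd\va = \alpha\log\int_\gA q(\va\mid\vs)\,\rd\va = 0$, using only that $q(\cdot\mid\vs)$ is a normalized density whenever $p(\cdot\mid\vs)$ is (the absolute-continuity hypothesis $q_\alpha(\va\mid\vs)=0$ when $p(\va\mid\vs)=0$ from Theorem~\ref{th:invert}). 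Consequently the Bellman backup reads $\tilde r(\vs,\va) + \gamma\,\E_{\vs'\sim\gP(\cdot\mid\vs,\va)}[\gV_\alpha(\vs')] = \alpha\log\tfrac{q(\va\mid\vs)}{p(\va\mid\vs)} + 0 = \gQ(\vs,\va)$, so the candidate solves the soft Bellman equation and, by uniqueness, is the soft $Q$-function of $\tilde r$. The conclusion is then immediate: the optimal policy is $\pi^\star(\va\mid\vs)\propto\exp(\alpha\inv\gQ(\vs,\va))\,p(\va\mid\vs) = \exp\!\big(\log\tfrac{q(\va\mid\vs)}{p(\va\mid\vs)}\big)\,p(\va\mid\vs) = q(\va\mid\vs)$, which is already normalized.

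An essentially equivalent route, which I would mention as an alternative, is to invoke Theorem~\ref{th:invert} together with Lemma~\ref{lem:shape}: \eqref{eq:invert_bellman} shows that the reward $r(\vs,\va) = \tilde r(\vs,\va) + \gV_\alpha(\vs) - \gamma\,\E_{\vs'}[\gV_\alpha(\vs')]$ has $q$ as its $\alpha$-soft-optimal policy, and a soft/entropy-regularized version of reward shaping (Lemma~\ref{lem:shape} with potential $\Psi = \gV_\alpha$) transfers that optimality to the shaped reward $\tilde r$, leaving the critic shifted by $\gV_\alpha$ and the policy unchanged.

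I expect the only subtlety to be bookkeeping rather than genuine difficulty. Lemma~\ref{lem:shape} as stated is for ordinary \rl, so the direct fixed-point verification above is the safer presentation and avoids having to re-prove shaping invariance in the entropy-regularized setting. The standing assumptions to state explicitly are $\gamma<1$ (for the contraction/uniqueness step) and $q(\va\mid\vs)=0$ whenever $p(\va\mid\vs)=0$ (so that the log-ratio reward and the value integral are well defined, and $\gV_\alpha\equiv 0$); no heavy computation is required.
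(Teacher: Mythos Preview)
Your proposal is correct and covers essentially the same ground as the paper's proof, just with the emphasis reversed: the paper leads with the Theorem~\ref{th:invert}/Lemma~\ref{lem:shape} reference (your ``alternative route'') and then notes that the entropy-augmented reward $\tilde r_{\alpha,q}(\vs,\va)=\tilde r(\vs,\va)-\alpha(\log q-\log p)$ vanishes identically, which is precisely your $\gV_\alpha\equiv 0$ computation in different clothing. Your direct fixed-point verification is the more self-contained presentation, and your remark that Lemma~\ref{lem:shape} is stated for unregularized \rl (so the Bellman check is the safer route) is well taken.
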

\vspace{-1em}
\begin{proof}
This result arises from the origin of the coherent reward from policy inversion and reward shaping (Theorem \ref{th:invert}, Lemma \ref{lem:shape}).
Moreover, considering the entropy-augmented reward ${r_{\alpha,\,\pi}(\vs,\va) = r(\vs,\va) - \alpha (\log \pi(\va\mid\vs) - \log p(\va\mid\vs)}$, when $r$ is the coherent reward $\tilde{r}$ and $\pi = q$, $\tilde{r}_{\alpha,\,q}(\vs,\va) = 0\,\forall \vs\in\gS, \,\va\in\gA$ so $q(\va\mid\vs)$ is a solution to a soft policy iteration. 
\end{proof}
\begin{lemma}
\label{lem:dpi}
(Data processing inequality, \citet{ECE598YW}, Theorem 4.1). 
For two stochastic processes parameterized with finite random variables $\vw\sim p(\cdot)$, $\vw{\,\in\,}\gW$ and shared hypothesis spaces~${p(\vy\mid\vx,\vw)}$, the conditional $f$-divergence in function space at points $\mX\in\gX^L, L\in\sZ_+.$ is upper bounded by the $f$-divergence in parameter space,
\begin{align*}
\sD_f[q(\vw) \mid\mid p(\vw)]
=
\sD_f[q(\mY,\vw\mid\mX) \mid\mid p(\mY,\vw\mid\mX)]
&\geq
\sD_f[q(\mY\mid\mX) \mid\mid p(\mY\mid\mX)].
\end{align*}
\end{lemma}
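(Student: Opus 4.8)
The plan is to prove Lemma~\ref{lem:dpi} in two stages matching the stated equality and inequality, using only convexity of the generator $f$ (with $f(1)=0$) and the fact that both processes share the hypothesis space $p(\vy\mid\vx,\vw)$. Throughout I assume $q(\vw)\ll p(\vw)$ so all Radon--Nikodym ratios are well defined, and I use the two factorizations $q(\mY,\vw\mid\mX)=p(\mY\mid\mX,\vw)\,q(\vw)$ and $p(\mY,\vw\mid\mX)=p(\mY\mid\mX,\vw)\,p(\vw)$, which hold precisely because both joints send the same weights through the same likelihood.

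First, for the equality, I would observe that the joint likelihood ratio collapses onto the weight-space ratio: the shared factor $p(\mY\mid\mX,\vw)$ cancels, giving $q(\mY,\vw\mid\mX)/p(\mY,\vw\mid\mX)=q(\vw)/p(\vw)$. Substituting into the definition of $\sD_f$ and using Tonelli to integrate out $\mY$ first, the inner integral $\int p(\mY\mid\mX,\vw)\,\rd\mY=1$, so the $\mY$-dependence vanishes and $\sD_f[q(\mY,\vw\mid\mX)\mid\mid p(\mY,\vw\mid\mX)]$ reduces exactly to $\int p(\vw)\,f(q(\vw)/p(\vw))\,\rd\vw=\sD_f[q(\vw)\mid\mid p(\vw)]$.

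Second, for the inequality (the data-processing step proper), I would write the marginal ratio as a $p$-conditional expectation of the joint ratio. Defining the posterior $p(\vw\mid\mY,\mX)=p(\mY,\vw\mid\mX)/p(\mY\mid\mX)$, a one-line computation gives $q(\mY\mid\mX)/p(\mY\mid\mX)=\E_{p(\vw\mid\mY,\mX)}[\,q(\mY,\vw\mid\mX)/p(\mY,\vw\mid\mX)\,]$. Applying Jensen's inequality to the convex $f$ yields $f(q(\mY\mid\mX)/p(\mY\mid\mX))\le\E_{p(\vw\mid\mY,\mX)}[\,f(q(\mY,\vw\mid\mX)/p(\mY,\vw\mid\mX))\,]$ pointwise in $\mY$; multiplying by $p(\mY\mid\mX)$ and integrating over $\mY$ recombines the posterior with $p(\mY\mid\mX)$ into the joint $p(\mY,\vw\mid\mX)$ (tower property), delivering $\sD_f[q(\mY\mid\mX)\mid\mid p(\mY\mid\mX)]\le\sD_f[q(\mY,\vw\mid\mX)\mid\mid p(\mY,\vw\mid\mX)]$. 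Chaining this with the equality completes the lemma.

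The bookkeeping is routine; the one step that needs care is the Jensen application, where I must verify that the marginal ratio is genuinely the $p(\vw\mid\mY,\mX)$-average of the joint ratio (so the inequality points in the asserted direction) and that the integrals are finite under $q\ll p$. I would also flag that the equality is exactly where the \emph{shared} hypothesis space is essential: without a common $p(\vy\mid\vx,\vw)$ the joint ratio would not reduce to $q(\vw)/p(\vw)$, and the collapse of function-space divergence onto weight-space divergence would fail.
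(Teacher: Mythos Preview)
Your proof is correct and follows the standard route for the data processing inequality for $f$-divergences: the equality uses that the shared likelihood $p(\mY\mid\mX,\vw)$ cancels in the joint ratio, and the inequality is Jensen applied to the convex generator $f$ under the posterior $p(\vw\mid\mY,\mX)$, followed by the tower property. The verification that the marginal ratio equals the posterior average of the joint ratio is right, and you correctly identify where the shared hypothesis space enters.

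Note, however, that the paper does not supply its own proof of this lemma: it is stated with a citation (Theorem~4.1 of the referenced lecture notes) and used as a black box inside the proof of Theorem~\ref{th:csil}. So there is no paper proof to compare against; your argument simply fills in a standard result that the authors invoke without derivation.
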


\csiltheorem*

\begin{proof}
We begin with the \kl-regularized game-theoretic \irl objective (c.f. Equation 1 \citep{Haarnoja2018}),
\begin{align*}
    \gJ(r, \pi, \beta)
    &=
    \E_{\vs,\va\sim\gD}\left[
    r(\vs,\va)
    \right] -
    \E_{\vs,\va \sim \rho_\pi}\left[r(\vs,\va)
    -
    \beta\log\frac{\pi(\va\mid\vs)}{p(\va\mid\vs)}\right].
\intertext{Substituting the coherent reward from Theorem \ref{th:invert}, the objective becomes}
    \gJ(\vtheta, \pi, \beta) &=
    \E_{\vs,\va\sim\gD}\left[\alpha\log\frac{q_\vtheta(\va\mid\vs)}{p(\va\mid\vs)}\right] -
    \E_{\vs,\va \sim \rho_\pi}\left[\alpha
    \log\frac{q_\vtheta(\va\mid\vs)}{p(\va\mid\vs)}
    -\beta
    \log\frac{\pi(\va\mid\vs)}{p(\va\mid\vs)}\right].
\end{align*}
Due to the realizablity assumption (Assumption \ref{ass:realizability}), the optimal policy $\pi_{\vtheta_*}$ can be obtained through maximum likelihood estimation $\vtheta_* = \argmax_\vtheta \E_{\vs,\va\sim\gD}[\log q_\vtheta(\vs,\va)]$.
With this assumption, the left-hand term performs maximum likelihood estimation of the optimal policy.
Therefore, combined with the coherent reward which is derived from policy inversion, we replace the inner policy optimization of $\pi$ with the maximum likelihood solution of $q_\vtheta$, resulting in a simpler objective $\tilde{\gJ}$ in terms of only $\vtheta$, which matches the objective $\gJ$ exactly when $\beta=\alpha$ and $\pi=q_\vtheta$ (Lemma \ref{lem:coherent_policy}).
We consider $\tilde{\gJ}$ with
a small negative perturbation $\beta = \alpha - \delta\alpha$, $\delta\alpha \geq 0$ to examine the effect of the right-term term,
\begin{align*}
    \tilde{\gJ}(\vtheta)
    &=
    \gJ(\vtheta, q_\vtheta, \alpha - \delta\alpha)
    =
    \E_{\vs,\va\sim\gD}\left[\alpha\log\frac{q_\vtheta(\va\mid\vs)}{
    p(\va\mid\vs)}\right] -
    \E_{\va\sim q_{\vtheta}(\cdot\mid\vs),\,\vs \sim \mu_{q_\vtheta}}\left[
    \delta\alpha\,\log\frac{q_\vtheta(\va\mid\vs)}{p(\va\mid\vs)}\right],
\intertext{
The right-hand term contains a conditional \kl divergence, which we can replace with an upper-bound using the data processing inequality from Lemma \ref{lem:dpi} and the weight-space \kl.
This weight-space \kl is constant w.r.t. the state distribution expectation and we introduce the scale factor $\lambda = \delta\alpha/\alpha$.
The resulting lower-bound objective is proportional to the regularized behavioral cloning objective proposed in Equation \ref{eq:regression},
}
    \tilde{\gJ}(\vtheta)
    &\geq
    \alpha(
    \E_{\vs,\va\sim\gD}\left[\log q_\vtheta(\va\mid\vs)\right] -
    \lambda\,\KL[q_\vtheta(\vw)\mid\mid p(\vw)] + C),
    \quad
    C = \E_{\vs,\va\sim\gD}[-\log p(\va\mid\vs)].
\end{align*}
\end{proof}
\vspace{-1em}
The intuition from Theorem \ref{th:csil} is that the \kl regularization during \bc achieves reward shaping similar to the \irl saddle-point solution when using the coherent reward.
The lower bound's `tightness' depends on the data processing inequality and is difficult to reason about.  
Therefore, this result demonstrates that \kl-regularized \bc provides useful reward shaping from the game-theoretic perspective, rather than provide a tight lower bound,
To reinforce this result, we visualize the effect in Appendix \ref{sec:viz_refine}.

\section{Implementation and experimental details}
\label{sec:implementation}
\begin{table}[t]
  \centering
  \begin{tabular}{lll}
    \toprule
    Task & Random return & Expert return \\
    \midrule
    \texttt{HalfCheetah-v2} & -282 & 8770 \\
    \texttt{Hopper-v2} & 18 & 2798 \\ 
    \texttt{Walker2d-v2} & 1.6 & 4118 \\
    \texttt{Ant-v2} & -59 & 5637 \\
    \texttt{Humanoid-v2} & 123 & 9115 \\
    \midrule
    \texttt{door-expert-v0} & -56 & 2882 \\
    \texttt{hammer-expert-v0}  & -274 & 12794 \\
    \bottomrule
  \end{tabular}
  \vspace{0.5em}
  \caption{Expert and random policy returns used to normalize the performance for \gym and \adroit tasks.
  Taken from \citet{orsini2021matters}.}
  \label{table:expert_scores}
\end{table}

\begin{table}[!b]
\centering
\begin{tabular}{l|llllll}
Experiment & \multicolumn{6}{c}{Algorithm}\\\
& \dac & \pwil & \sqil & \iqlearn & \ppil & \csil
\\\toprule
Online \gym & $\sH(\pi)$ & $\sH(\pi)$ & $\sH(\pi)$ & 0.01 & 0.01 & 0.01 \\
Offline \gym & $\sH(\pi)$ & $\sH(\pi)$ & $\sH(\pi)$ & 0.03 & 0.03 & 0.1  \\
Online \texttt{Humanoid-v2} & $\sH(\pi)$ & $\sH(\pi)$ & $\sH(\pi)$ & 0.01 & 0.01 & 0.01 \\
Online \adroit & $\sH(\pi)$ & $\sH(\pi)$ & $\sH(\pi)$ & 0.03 & 0.03 & 0.1 \\
Online \robomimic & $\sH(\pi)$ & - & - & 0.1 & - & 0.1 \\
Offline \robomimic & - & - & - & - & - & 1.0 \\
Image-based \robomimic & - & - & - & - & - & 0.3 \\
\bottomrule    
\end{tabular}
\vspace{0.5em}
\caption{
The entropy regularization hyperparameter value used across methods and environments.
Values refer to a fixed temperature while $\sH(\pi)$ refers to the adaptive temperature scheme, which ensures the policy entropy is at least equal to $-d_a$, where $d_a$ is the dimension of the action space.
}
\label{tab:entropy}
\vspace{-1em}
\end{table}

Our \csil implementation is available at \texttt{\href{https://github.com/google-deepmind/csil}{github.com/google-deepmind/csil}}.

The tabular experiments are implemented in \texttt{jax} automatic differentiation and linear algebra library~\citep{bradbury2018jax}.
The continuous state-action experiments are implemented in \texttt{acme}, again using \texttt{jax} and its ecosystem \citep{deepmind2020jax}.
One feature of \texttt{acme} is distributed training, where we use four concurrent actors to speed up the wall clock learning time.
\iqlearn and \ppil were re-implemented based on the open-source implementations as reference \citep{iqlearngithub,ppilgithub}.
\csil, \iqlearn, and \ppil shared a combined implementation as `soft' imitation learning algorithms due to their common structure and to facilitate ablation studies.
Baselines \dac, \sqil, \pwil, \sacfd, and \cql are implemented in \texttt{acme}.
We used a stochastic actor for learning and a deterministic actor (i.e., evaluating the mean action) for evaluation.

The \demodice and \smodice offline imitation learning baselines were implemented using the original implementations \citep{demodicegithub, smodicegithub}. 
This meant that the expert demonstrations were obtained from \texttt{d4rl} rather than \citet{orsini2021matters},
but the returns were normalized to the \texttt{d4rl} returns for consistency.
We note that there is a non-negligible discrepancy between the \texttt{d4rl} and \citet{orsini2021matters} return values for the \mujoco tasks, but in the imitation setting, comparing normalized values should be a meaningful metric.

Due to the common use of \sac as the base \rl algorithm, most implementations shared common standard hyperparameters. 
The policy and critic networks were comprised of two layers with 256 units and \textsc{elu} activations.
Learning rates were 3e-4, the batch size was 256, and the target network smoothing coefficient was 0.005.
One difference to standard \rl implementations is the use of \texttt{layernorm} in the critic, which is adopted in some algorithm implementations in \texttt{acme} and was found to improve performance.
For the inverse temperature $\alpha$, for most methods, we used the adaptive strategy based on a lower bound on the policies entropy.
\iqlearn and \ppil both have a fixed temperature strategy that we tuned in the range $[1.0, 0.3, 0.1, 0.03, 0.01]$.
\csil also has a fixed temperature strategy but is applied to the \kl penalty. 
In practice, they are very similar, so we swept the same range.
We also investigated a hard \kl bound and adaptive temperature strategy, but there was no performance improvement to warrant the extra complexity. 
\csil{}'s \hetstat policies had an additional bottleneck layer of 12 units and a final layer of 256 units with stationary activations.
For the \gym tasks, triangular activations were used.
For \robomimic tasks, periodic ReLU activations were used.
\iqlearn, \ppil, and \csil all have single critics rather than dual critics, while the other baselines had dual critics to match the standard \sac implementation.
We tried dual critics to stabilize \iqlearn and \ppil, but this did not help.
\csil{}'s reward finetuning learning rate was 1e-3.
As the reward model was only being finetuned with non-expert samples, we did not find this hyperparameter to be very sensitive.
For \robomimic, larger models with two layers of 1024 units were required . 
The \hetstat policies increased accordingly with a 1024 unit final layer and 48 units for the bottleneck layer.

For \dac, we used the hyperparameters and regularizers recommended by \citet{orsini2021matters}, including spectral normalization for the single hidden layer discriminator with 64 units. 
\iqlearn has a lower actor learning rate of 3e-5, and the open-source implementation has the option of different value function regularizers per task \citep{iqlearngithub}. 
We used `\texttt{v0}' regularization for all environments, which minimizes the averaged value function at the expert samples. 
In the original \iqlearn experiments, this regularizer is tuned per environment \citep{iqlearngithub}. 
\ppil shares the same smaller actor learning rate of 3e-5 and the open-source implementation also uses 20 critic updates for every actor update, which we maintained. 
The implementation of \ppil borrows the same value function regularization as \iqlearn, so we used `\texttt{v0}' regularization for all environments \citep{ppilgithub}.
For \sacfd, we swept the demo-to-replay ratio across $[0.1, 0.01]$.
For \adroit, the ratio was $0.01$, and for \robomimic it was $0.1$.

The major implementation details of \csil are discussed in Sections \ref{sec:continuous}, \ref{sec:stationary}, \ref{sec:faithful} and \ref{sec:kl}.
The pre-training hyperparameters are listed in Table \ref{tab:csil_pthp}.
While the critic pre-training is not too important for performance (Figure \ref{fig:online_csil_critic_pt}), appropriate policy pre-training is crucial.
Instead of training a regularization hyperparameter, we opted for early stopping.
As a result, there is a trade-off between \bc accuracy and policy entropy. 
In the online setting, if the policy is trained too long, the entropy is reduced too far, and there is not enough exploration for \rl. 
As a result, the policy pre-training hyperparameters need tuning when action dimension and dataset size change significantly to ensure the \bc fit is sufficient for downstream performance.  
For \robomimic, we used negative \csil rewards scaled by 0.5 to ensure they lie around $[-1, 0]$.
For vision-based \robomimic, the CNN torso was an impala-style \citep{espeholt2018impala} \texttt{Acme} \texttt{ResNetTorso} down to a \texttt{LayerNorm} \mlp with 48 output units, \textsc{elu} activations, orthogonal weight initialization, and final tanh activation.
\citet{mandlekar2022matters} used a larger  \texttt{ResNet18}.
As in \citet{mandlekar2022matters}, image augmentation was achieved with random crops from 84 down to 76.
Pixel values were normalized to [-0.5, 0.5].
For memory reasons, the batch size was reduced to 128.
The implementations of \iqlearn, \ppil and \csil combined equally-sized mini-batches from the expert and non-expert replay buffers for learning the reward, critic and policy.
We did not investigate tuning this ratio.

Regarding computational resources, the main experiments were run using \texttt{acme}, which implements distributed learning.
As a result, our `learner` (policy evaluation and improvement) runs on a single TPU v2. 
We ran four actors to interact with the environment.
Depending on the algorithm, there were also one or more evaluators.
For vision-based tasks, we used A100 GPUs for the vision-based policies. 

\begin{table}[!tb]
\centering
\begin{tabular}{l|llll}
Experiment & \multicolumn{4}{c}{Pre-training hyperparameters}\\\
& $n$ ($\pi$) & $\lambda$ ($\pi$) & $n$ ($Q$) & $\lambda$ ($Q$)
\\\toprule
Online and offline \gym & 25000 & 1e-3 & 5000 & 1e-3 \\
Online \texttt{Humanoid-v2} & 500 & 1e-3 & 5000 & 1e-3\\
Online \adroit & 25000 & 1e-4 & 5000 & 1e-3 \\
Online and offline \robomimic & 50000 & 1e-4 & 2000 & 1e-3 \\
Image-based \robomimic & 25000 & 1e-4 & 2000 & 1e-3 \\
\bottomrule    
\end{tabular}
\vspace{0.5em}
\caption{
\csil{}'s pre-training hyperparameters for the policy ($\pi$) and critic ($Q$), listing the learning rate ($\lambda$) and number of iterations ($n$).
}
\label{tab:csil_pthp}
\vspace{-1em}
\end{table}

\newpage

\section{Experimental results}
This section includes additional experimental results from Section \ref{sec:experiments}.
Videos of the simulation studies are available at \texttt{\href{https://joemwatson.github.io/csil}{joemwatson.github.io/csil}}.

\subsection{Inverse optimal control}
\label{sec:ioc}
This section shows the converged value function, stationary distribution, and `windy' stationary distribution for the tabular setting results from Table \ref{tab:spi}.
`Oracle classifier' constructs an offline classifier-based reward separating the states and actions with- and without demonstration data, in the spirit of methods such as \sqil and \textsc{oril} \citep{zolnaoffline}.
No optimization was conducted on the windy environment, just evaluation to assess the policy robustness to disturbance. 
We use the Viridis colour map, so yellow regions have a larger value than blue regions.

\begin{figure}[!b]
    \hfill
    \begin{minipage}{.45\textwidth}
        \centering
        \includegraphics[width=\linewidth]{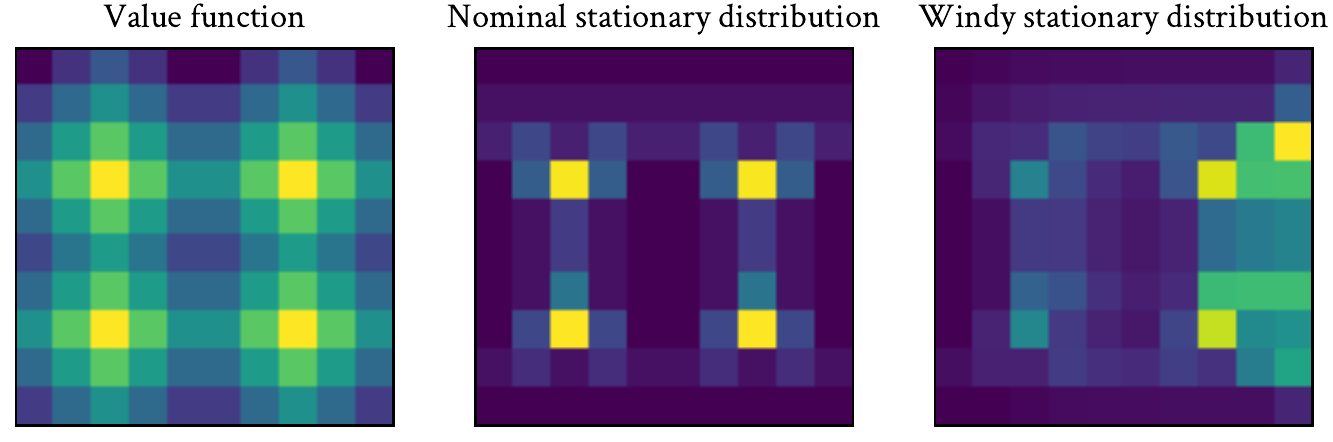}
        \caption{Expert on the dense \mdp.}
        \label{fig:expert_dense}
    \end{minipage}%
    \hfill
    \begin{minipage}{.45\textwidth}
        \centering
        \includegraphics[width=\linewidth]{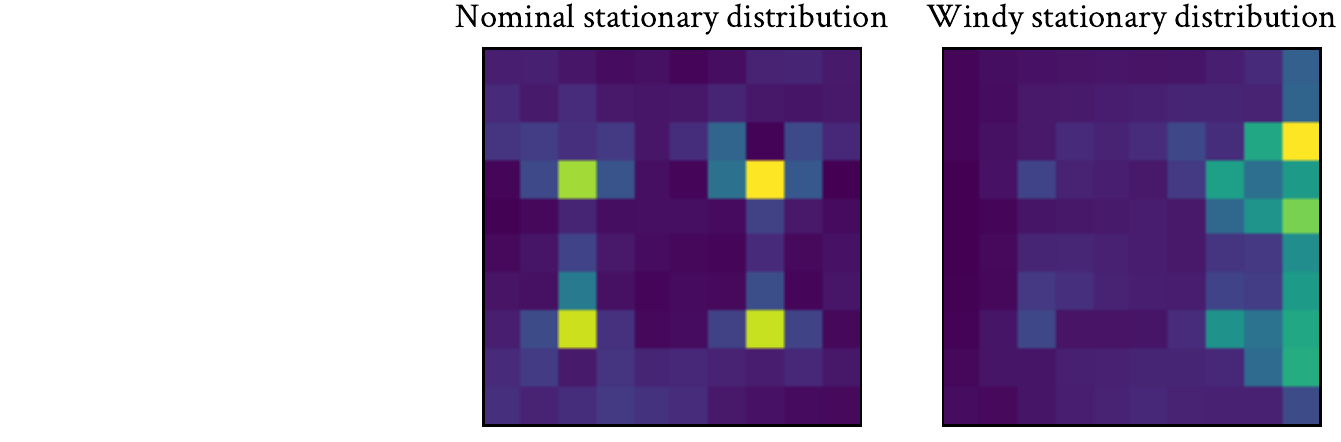}
        \caption{\bc on the dense \mdp.}
        \label{fig:bc_dense}
    \end{minipage}
    \hfill
\end{figure}

\begin{figure}[!htb]
    \hfill
    \begin{minipage}{.45\textwidth}
        \centering
        \includegraphics[width=\linewidth]{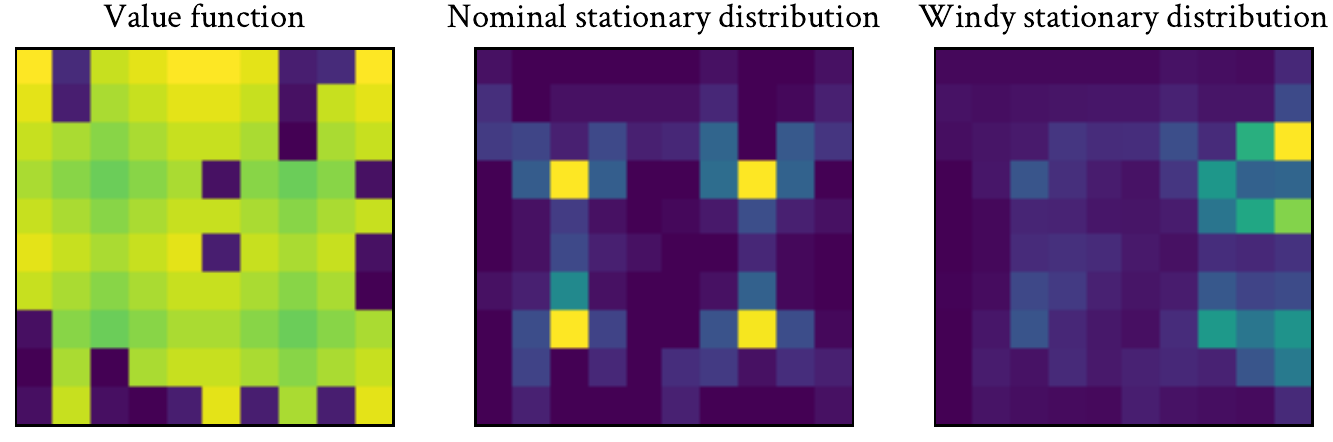}
        \caption{Oracle classifier on the dense \mdp.}
        \label{fig:classifier_dense}
    \end{minipage}%
    \hfill
    \begin{minipage}{.45\textwidth}
        \centering
        \includegraphics[width=\linewidth]{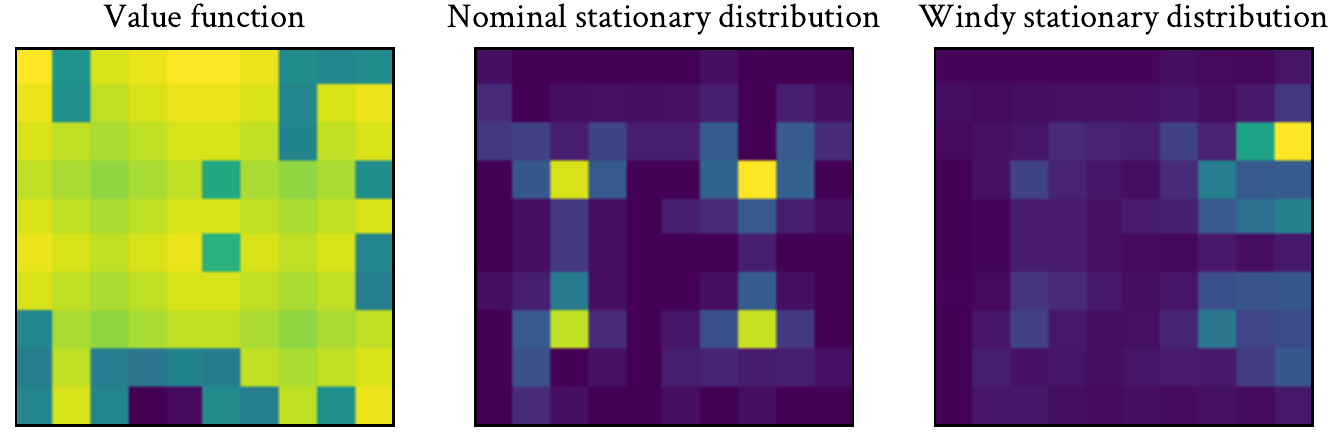}
        \caption{\gail on the dense \mdp.}
        \label{fig:gail_dense}
    \end{minipage}
    \hfill
\end{figure}

\begin{figure}[!htb]
    \hfill
    \begin{minipage}{.45\textwidth}
        \centering
        \includegraphics[width=\linewidth]{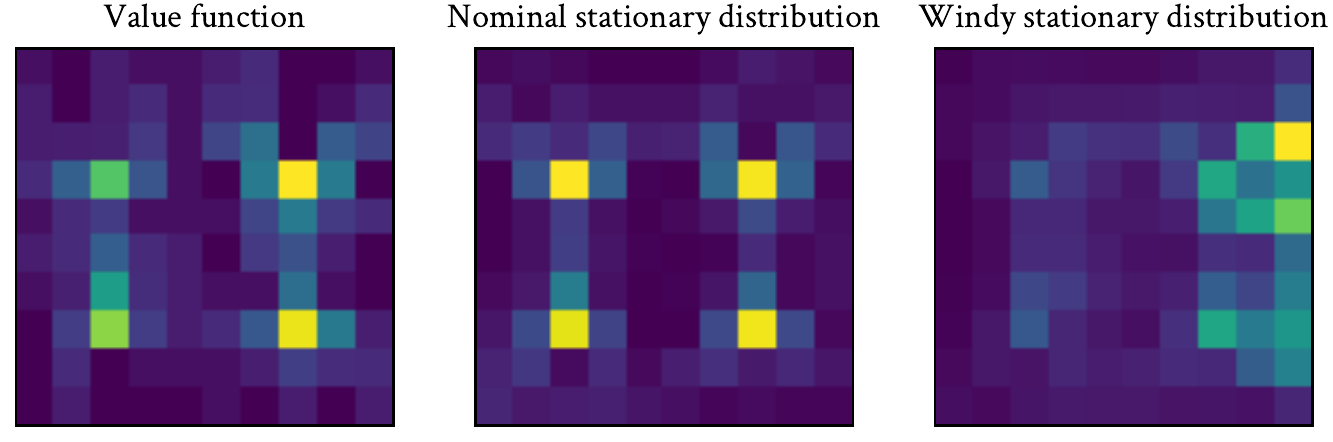}
        \caption{\iqlearn on the dense \mdp.}
        \label{fig:iqlearn_dense}
    \end{minipage}%
    \hfill
    \begin{minipage}{.45\textwidth}
        \centering
        \includegraphics[width=\linewidth]{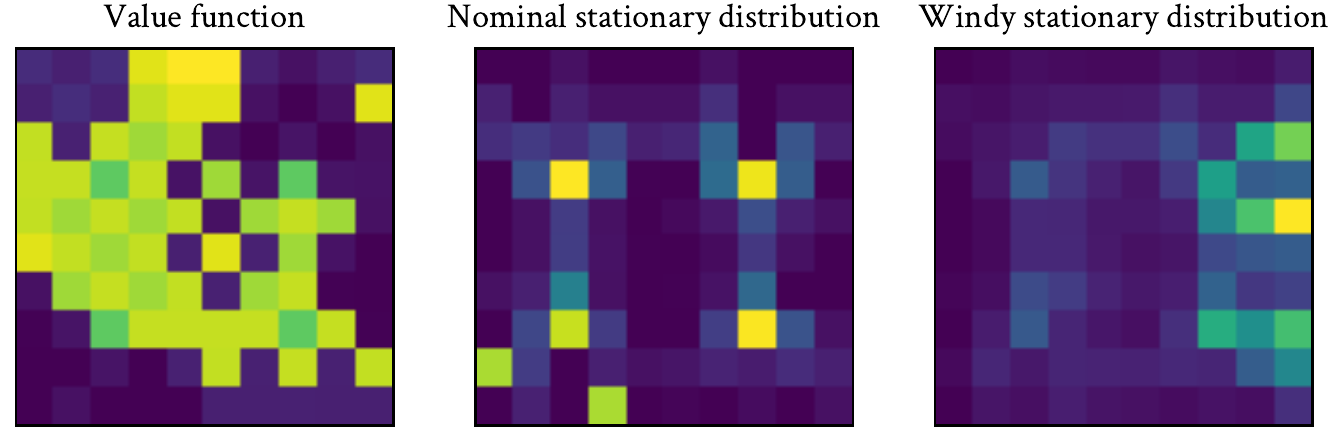}
        \caption{\ppil on the dense \mdp.}
        \label{fig:ppil_dense}
    \end{minipage}
    \hfill
\end{figure}

\begin{figure}[!htb]
    \hfill
    \begin{minipage}{.45\textwidth}
        \centering
        \includegraphics[width=\linewidth]{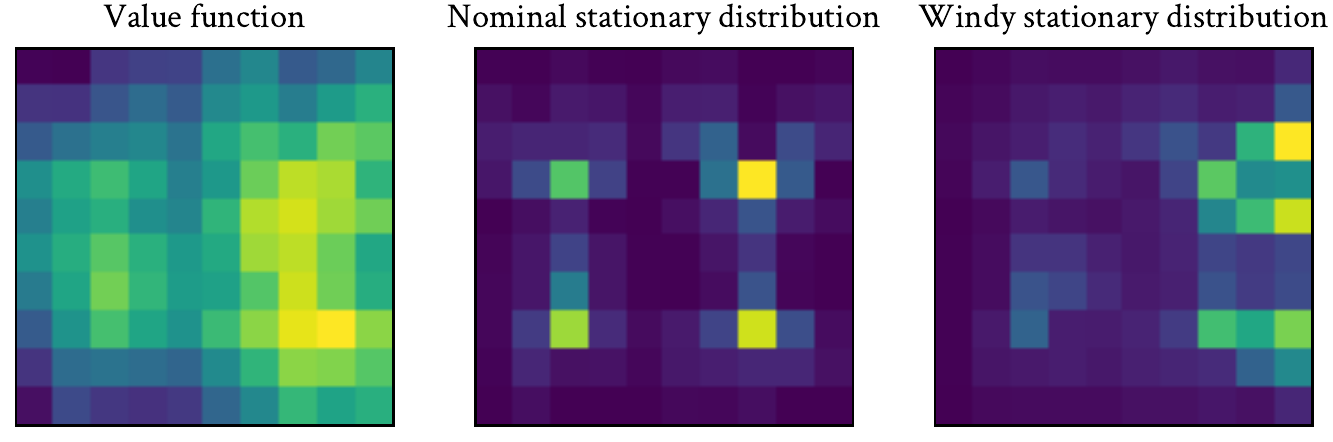}
        \caption{\meirl on the dense \mdp.}
        \label{fig:meirl_dense}
    \end{minipage}%
    \hfill
    \begin{minipage}{.45\textwidth}
        \centering
        \includegraphics[width=\linewidth]{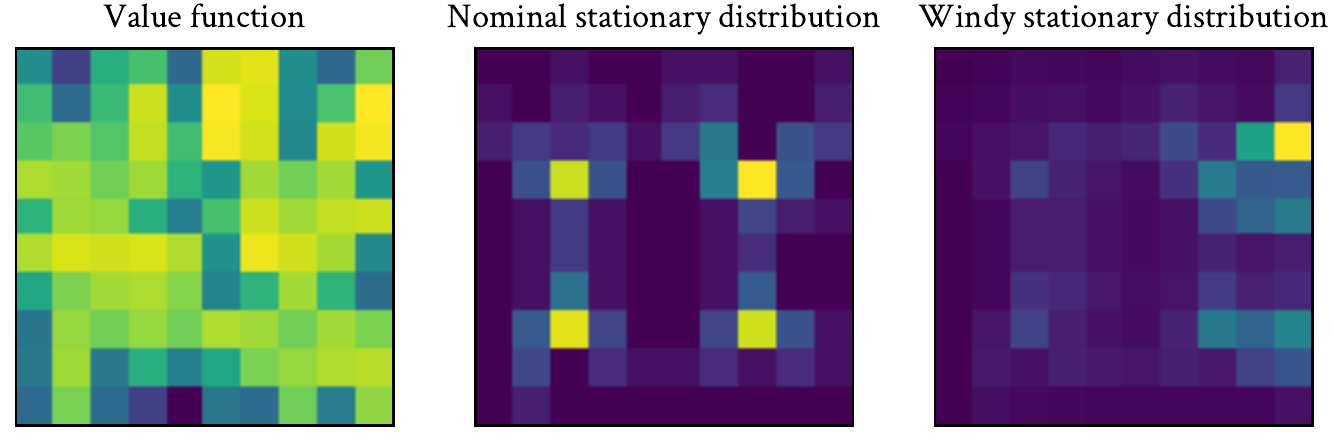}
        \caption{\csil on the dense \mdp.}
        \label{fig:csil_dense}
    \end{minipage}
    \hfill
\end{figure}

\newpage

\begin{figure}[!htb]
    \hfill
    \begin{minipage}{.45\textwidth}
        \centering
        \includegraphics[width=\linewidth]{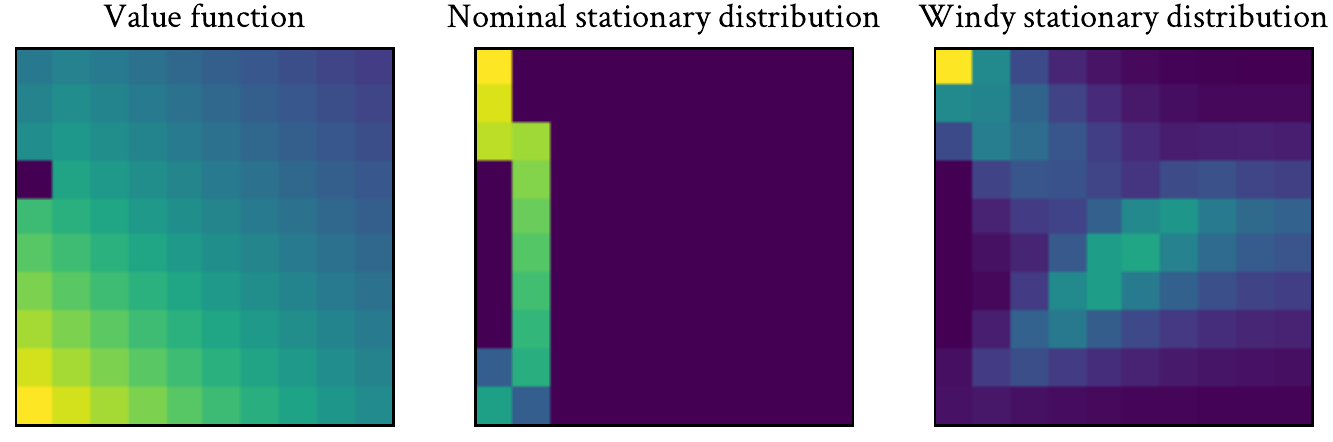}
        \caption{Expert on the sparse \mdp.}
        \label{fig:expert_sparse}
    \end{minipage}%
    \hfill
    \begin{minipage}{.45\textwidth}
        \centering
        \includegraphics[width=\linewidth]{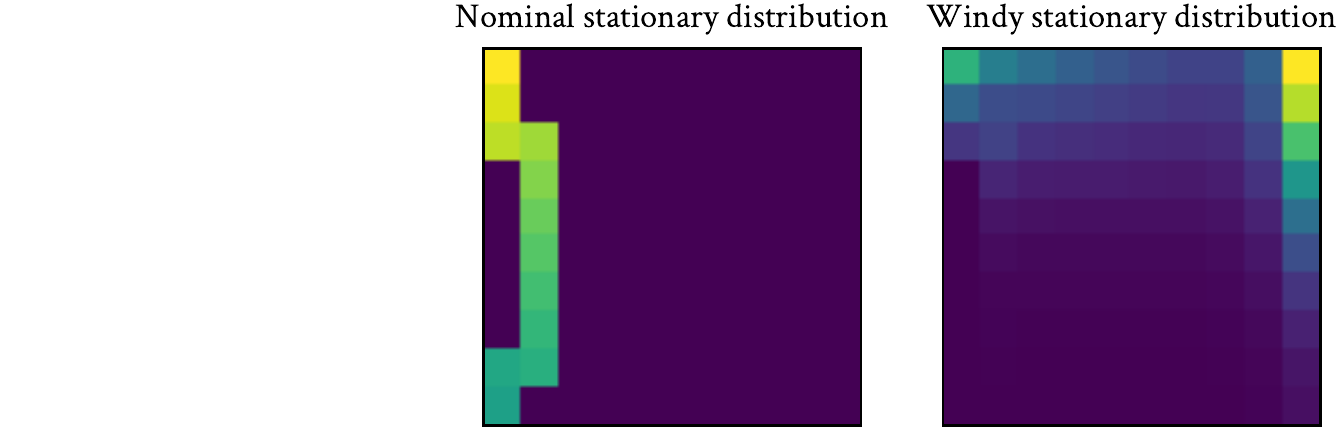}
        \caption{\bc on the sparse \mdp.}
        \label{fig:bc_sparse}
    \end{minipage}
    \hfill
\end{figure}

\begin{figure}[!htb]
    \hfill
    \begin{minipage}{.45\textwidth}
        \centering
        \includegraphics[width=\linewidth]{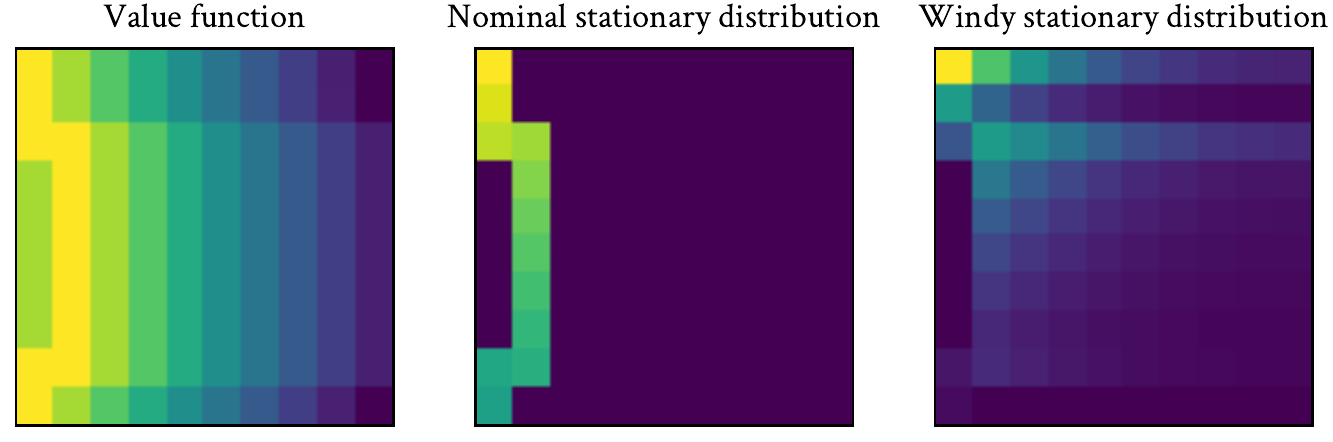}
        \caption{Oracle classifier on the sparse \mdp.}
        \label{fig:classifier_sparse}
    \end{minipage}%
    \hfill
    \begin{minipage}{.45\textwidth}
        \centering
        \includegraphics[width=\linewidth]{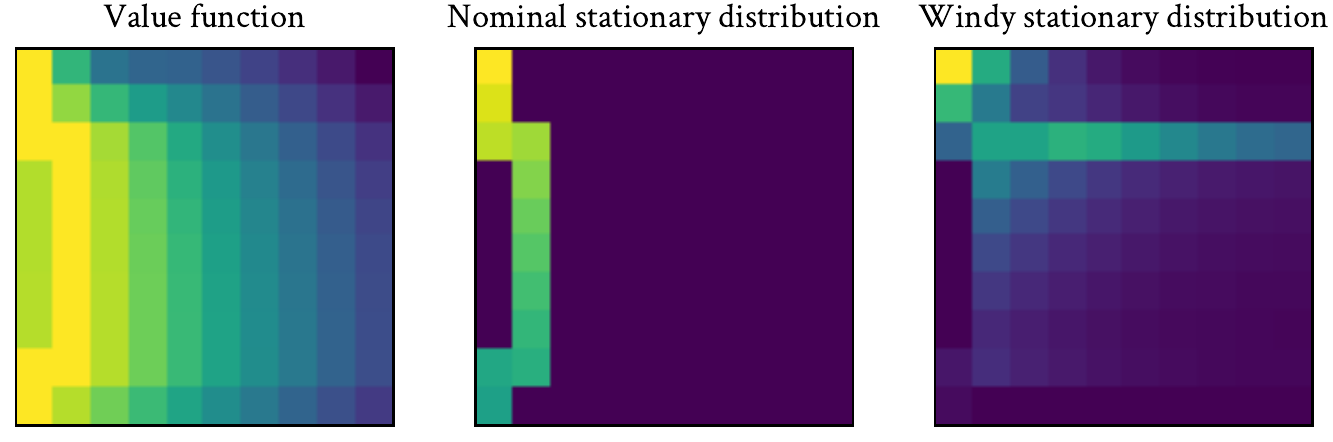}
        \caption{\gail on the sparse \mdp.}
        \label{fig:gail_sparse}
    \end{minipage}
    \hfill
\end{figure}

\begin{figure}[!htb]
    \hfill
    \begin{minipage}{.45\textwidth}
        \centering
        \includegraphics[width=\linewidth]{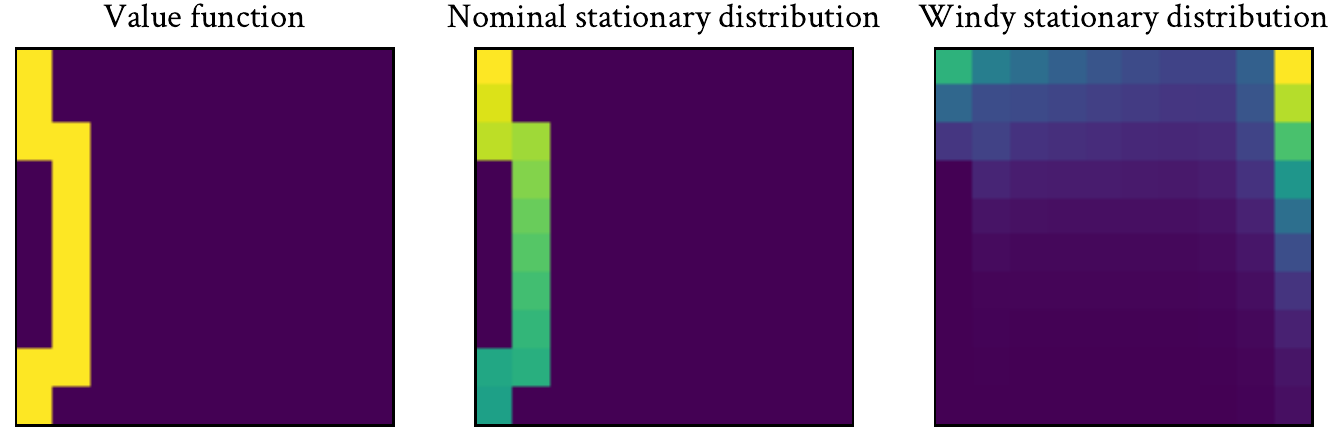}
        \caption{\iqlearn on the sparse \mdp.}
        \label{fig:iqlearn_sparse}
    \end{minipage}%
    \hfill
    \begin{minipage}{.45\textwidth}
        \centering
        \includegraphics[width=\linewidth]{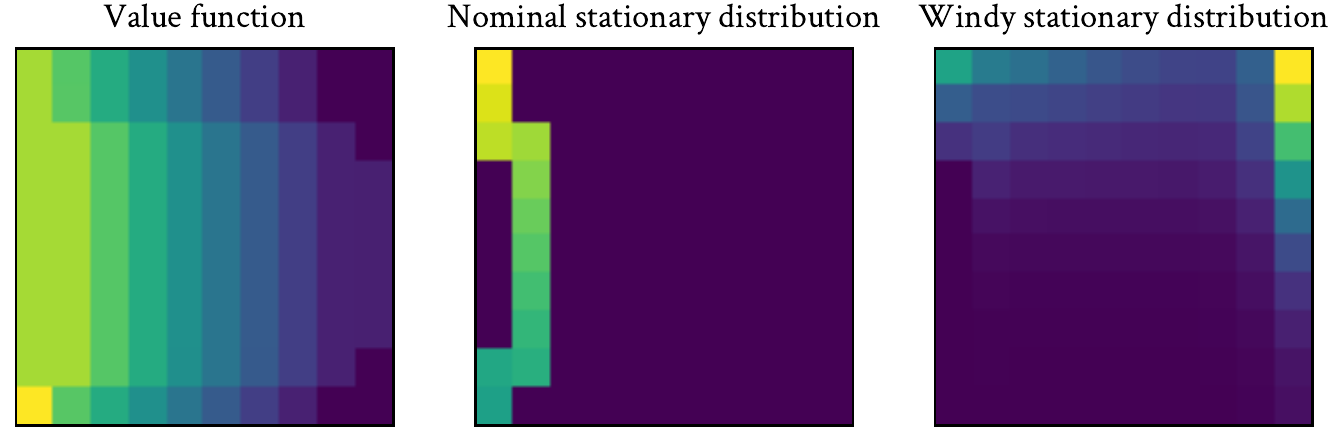}
        \caption{\ppil on the sparse \mdp.}
        \label{fig:ppil_sparse}
    \end{minipage}
    \hfill
\end{figure}

\begin{figure}[!htb]
    \hfill
    \begin{minipage}{.45\textwidth}
        \centering
        \includegraphics[width=\linewidth]{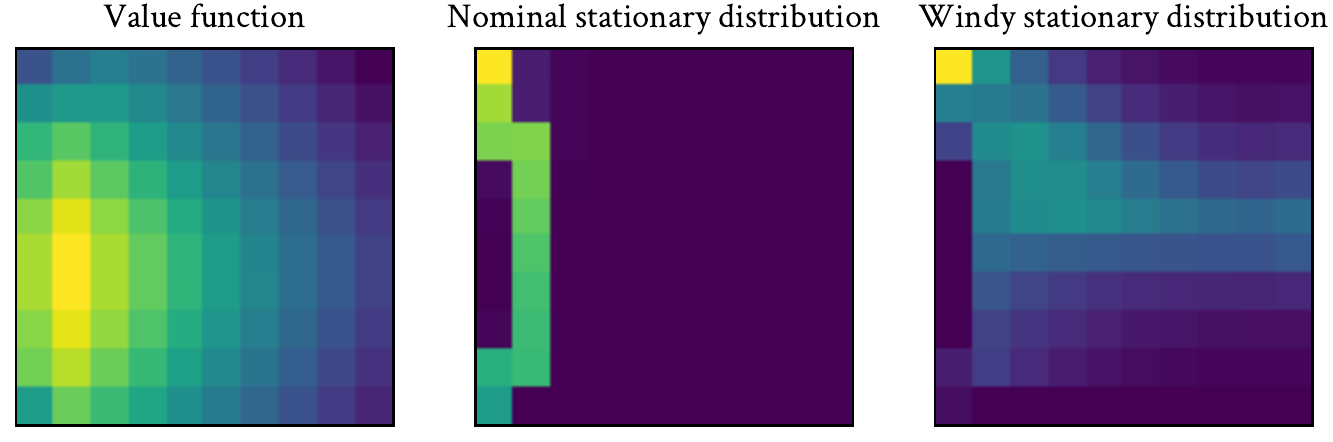}
        \caption{\meirl on the sparse \mdp.}
        \label{fig:meirl_sparse}
    \end{minipage}%
    \hfill
    \begin{minipage}{.45\textwidth}
        \centering
        \includegraphics[width=\linewidth]{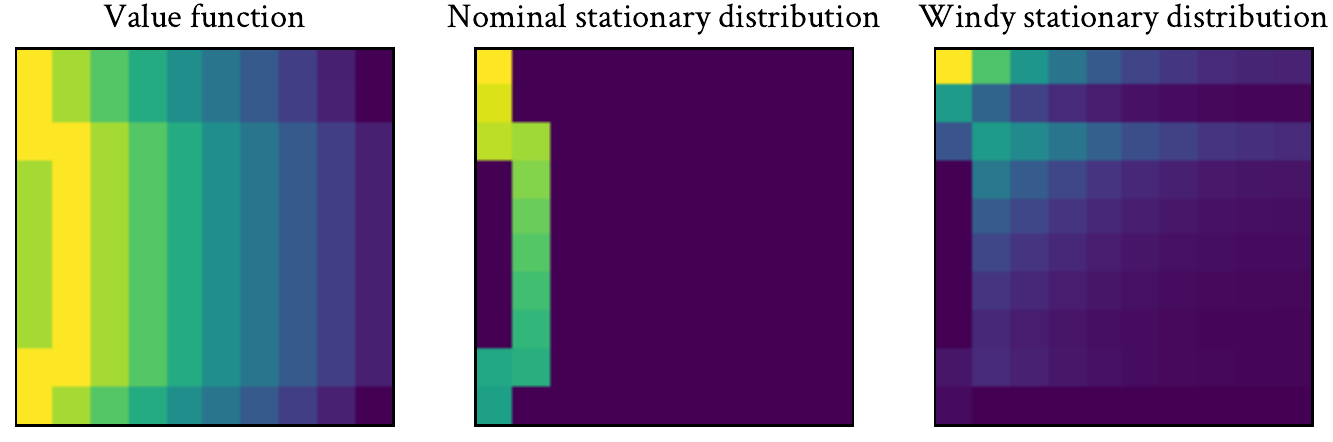}
        \caption{\csil on the sparse \mdp.}
        \label{fig:csil_sparse}
    \end{minipage}
    \hfill
\end{figure}

\newpage 

\begin{figure}[!ht]
    \vspace{-1em}
    \centering
    \includegraphics[height=0.4\textheight]{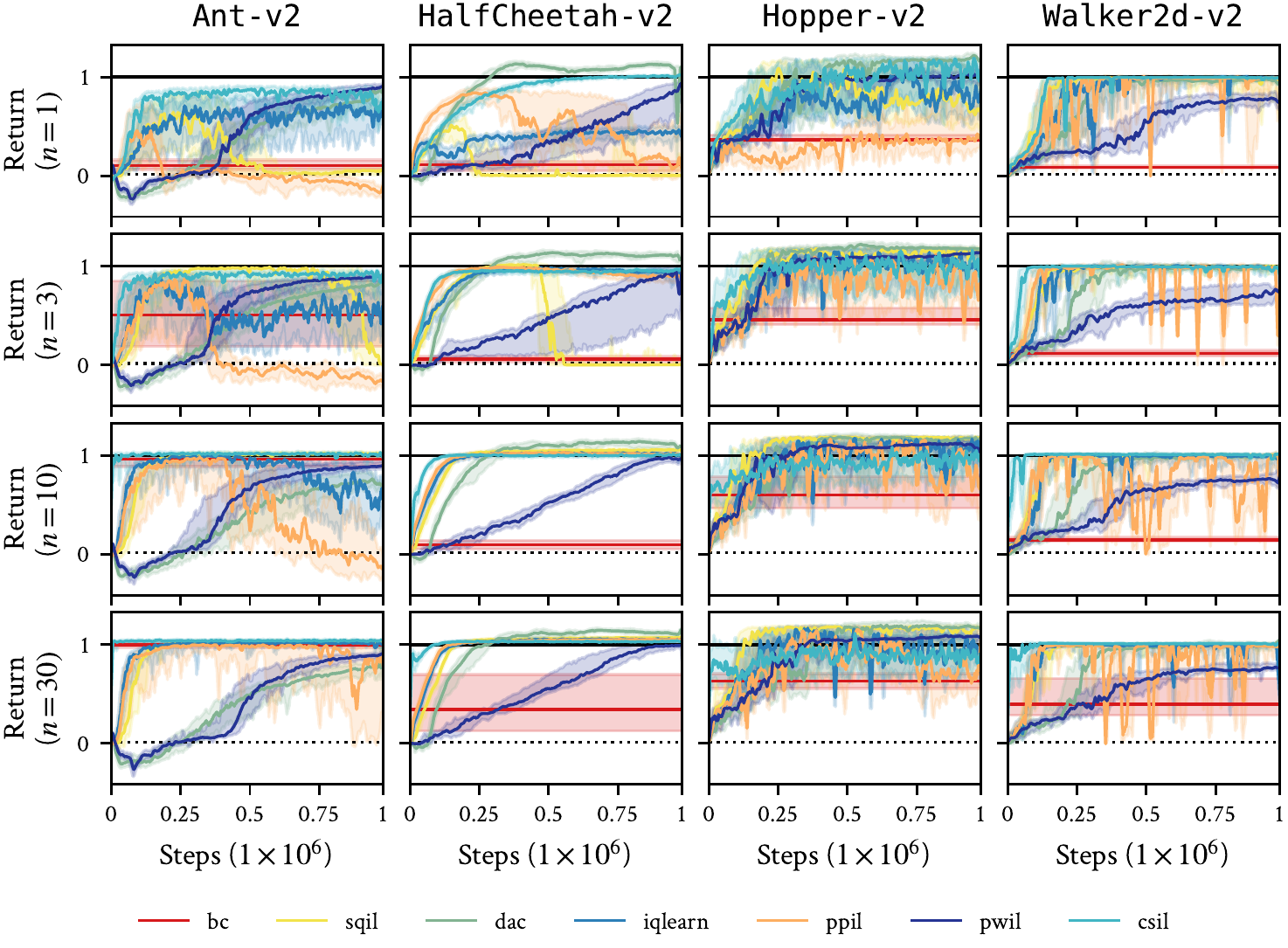}
    \vspace{-0.5em}
    \caption{Normalized performance of \textsc{csil} again baselines for online imitation learning on \texttt{MuJoCo} \texttt{Gym} tasks.
    Uncertainty intervals depict quartiles over ten seeds. \textsc{csil} exhibits stable convergence with both sample and demonstration efficiency.
    \textsc{bc} here is applied to the demonstration dataset.
    $n$ refers to demonstration trajectories.
    }
    \label{fig:mujoco_online_steps}
\end{figure}

~

\begin{figure}[!hb]
    \centering
    \includegraphics[height=0.4\textheight]{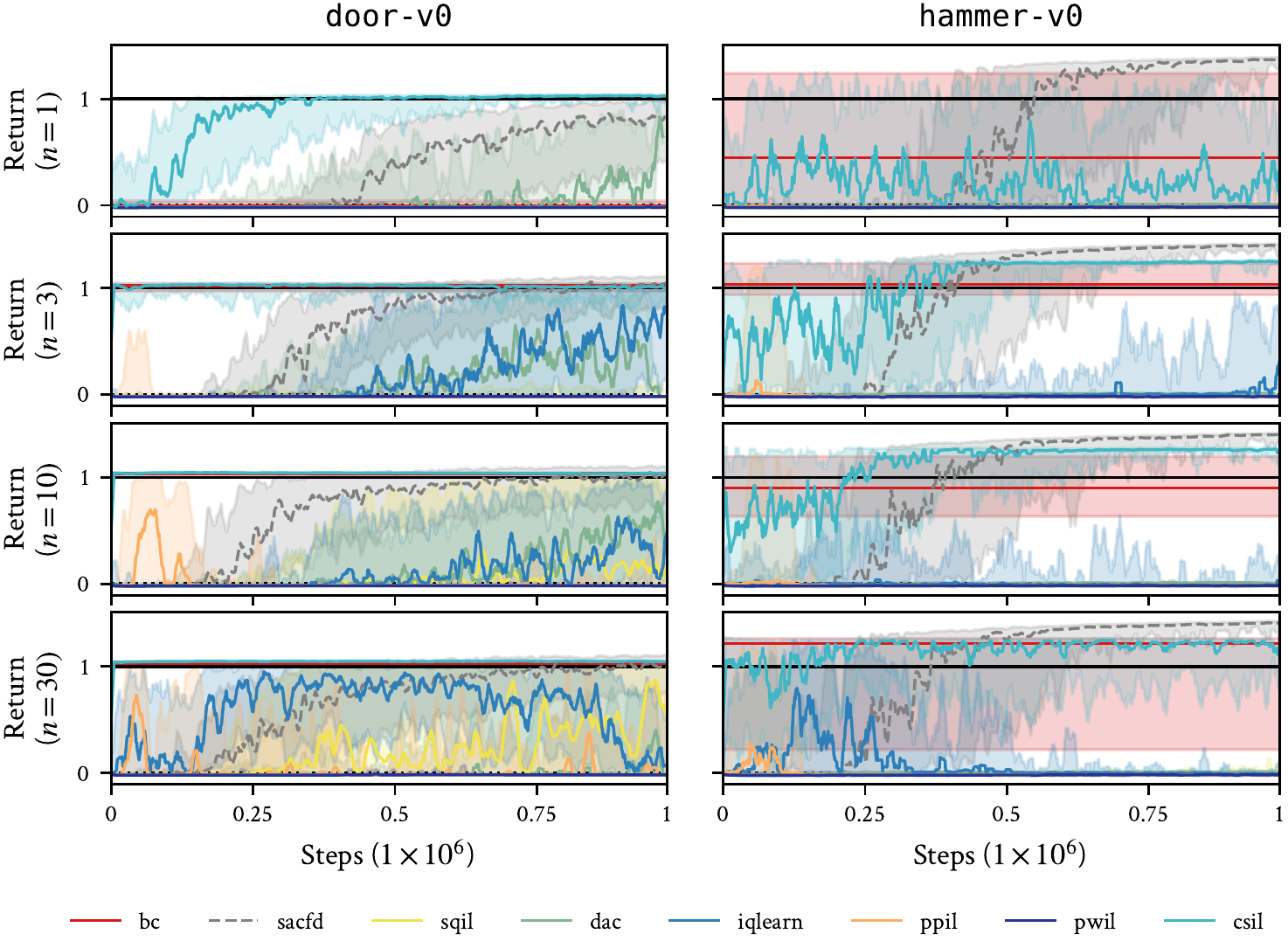}
    \vspace{-0.5em}
    \caption{Normalized performance of \textsc{csil} against baselines for online imitation learning for \texttt{Adroit} tasks.
    Uncertainty intervals depict quartiles over ten seeds. \textsc{csil} exhibits stable convergence with both sample and demonstration efficiency.
    Many baselines cannot achieve stable convergence due to the high-dimensional action space.
    \textsc{sac}fd is an oracle baseline that combines the demonstrations with the true (shaped) reward. $n$ refers to demonstration trajectories.
    }
    \label{fig:adroit_online_steps}
    \vspace{-2em}
\end{figure}

\newpage

\begin{figure}[!ht]
    \vspace{-1em}
    \centering
    \includegraphics[height=0.4\textheight]{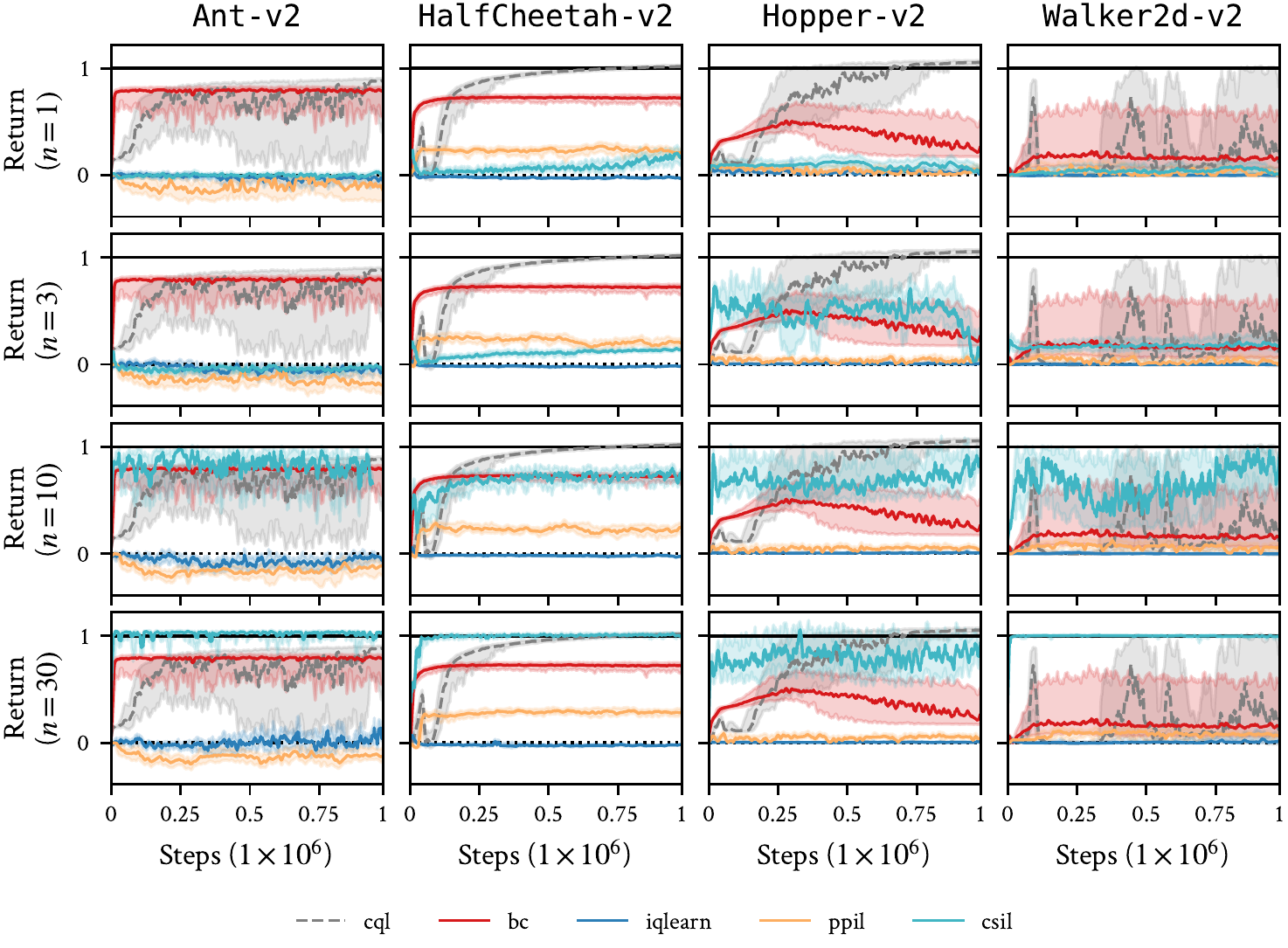}
    \caption{Normalized performance of \textsc{csil} against baselines for \emph{offline} imitation learning for \gym tasks .
    Uncertainty intervals depict quartiles over ten seeds.
    While all methods struggle in this setting compared to online learning, \textsc{csil} manages convergence with enough demonstrations. 
    \textsc{cql} is an oracle baseline using the true rewards.
    The \textsc{bc} baseline is trained on the whole offline dataset, not just demonstrations.
    $n$ refers to demonstration trajectories.
    }
    \label{fig:offline_gym_steps}
\end{figure}

~

\begin{figure}[!hb]
    \vspace{-1em}
    \centering
    \includegraphics[height=0.4\textheight]{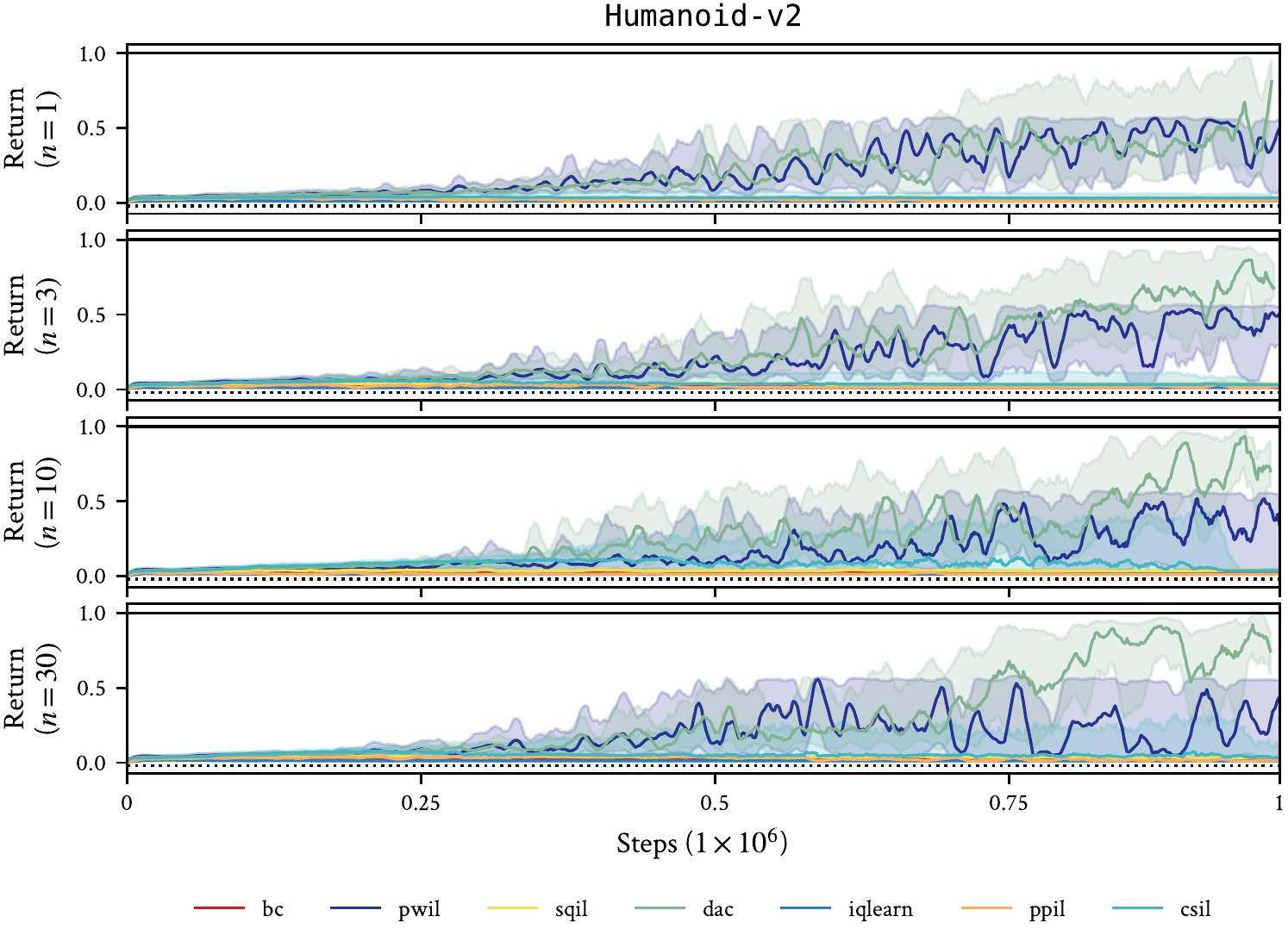}
    \caption{Normalized performance of \textsc{csil} against baselines for online imitation learning for \texttt{Humanoid-v2}.
    Uncertainty intervals depict quartiles over ten seeds.
    $n$ refers to demonstration trajectories.
    }
    \label{fig:humanoid_steps}
\end{figure}

\newpage

\subsection{Continuous control from agent demonstrations}

\paragraph{Step-wise results.}
Figure \ref{fig:mujoco_online_steps}, \ref{fig:adroit_online_steps} and \ref{fig:offline_gym_steps} show step-wise performance curves, as opposed to the demonstration-wise results in the main paper.
For online experiments, steps refers to the actor and learner, while for offline experiments, steps correspond to the learner.

\paragraph{Humanoid-v2 results.}
We evaluated \csil and baselines on the
high-dimensional \texttt{Humanoid-v2} locomotion task in Figure \ref{fig:humanoid_steps}.
We found that \csil struggled to solve this task due to the difficulty for \bc to perform well at this task, even when using all 200 of the available demonstrations in the dataset.

One aspect of 
\texttt{Humanoid-v2}
to note is that a return of around $5000$ (around 0.5 normalized) corresponds to the survival / standing bonus, rather than actual locomotion (around 9000).
While the positive nature of the coherent reward enabled learning the stabilize, this was not consistent across seeds. 
We also found it difficult to reproduce baseline results due to \iqlearn{}'s instability, which we also reproduced on the author's implementation \citep{iqlearngithub}. 
In theory, \bc should be able to approximate the original optimal agent's policy with sufficient demonstrations.
We believe the solution may be additional \bc regularization, as the environment has a 376-dimensional state space.

\paragraph{Performance improvement of coherent imitation learning}
A desired quality of \csil is its ability to match or exceed its initial \bc policy.
Figures \ref{fig:online_gym_csil_cs_bc}, \ref{fig:online_adroit_csil_cs_bc} and \ref{fig:offline_gym_csil_cs_bc} show the performance of \csil relative to its initial \bc policy. 
Empirical, \csil matches or exceeds the initial \bc policy across environments.

\begin{figure}[!b]
    \centering
    \includegraphics[height=0.4\textheight]{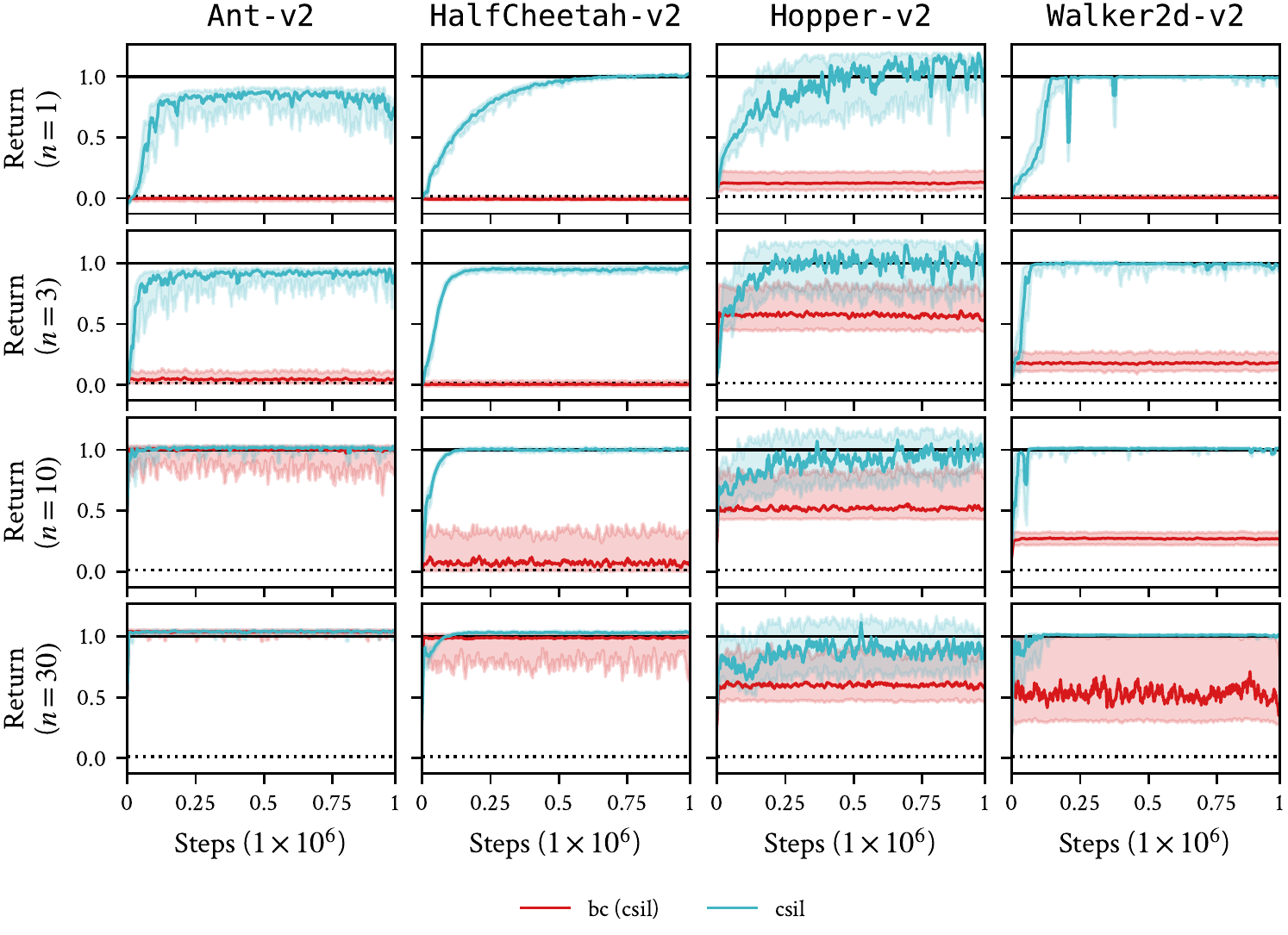}
    \caption{
    Normalized performance of \textsc{csil} for online \gym tasks against its \bc initialization.
    Uncertainty intervals depict quartiles over ten seeds.
    $n$ refers to demonstration trajectories.
}
    \label{fig:online_gym_csil_cs_bc}
\end{figure}

\newpage

\begin{figure}[!th]
    \centering
    \includegraphics[height=0.4\textheight]{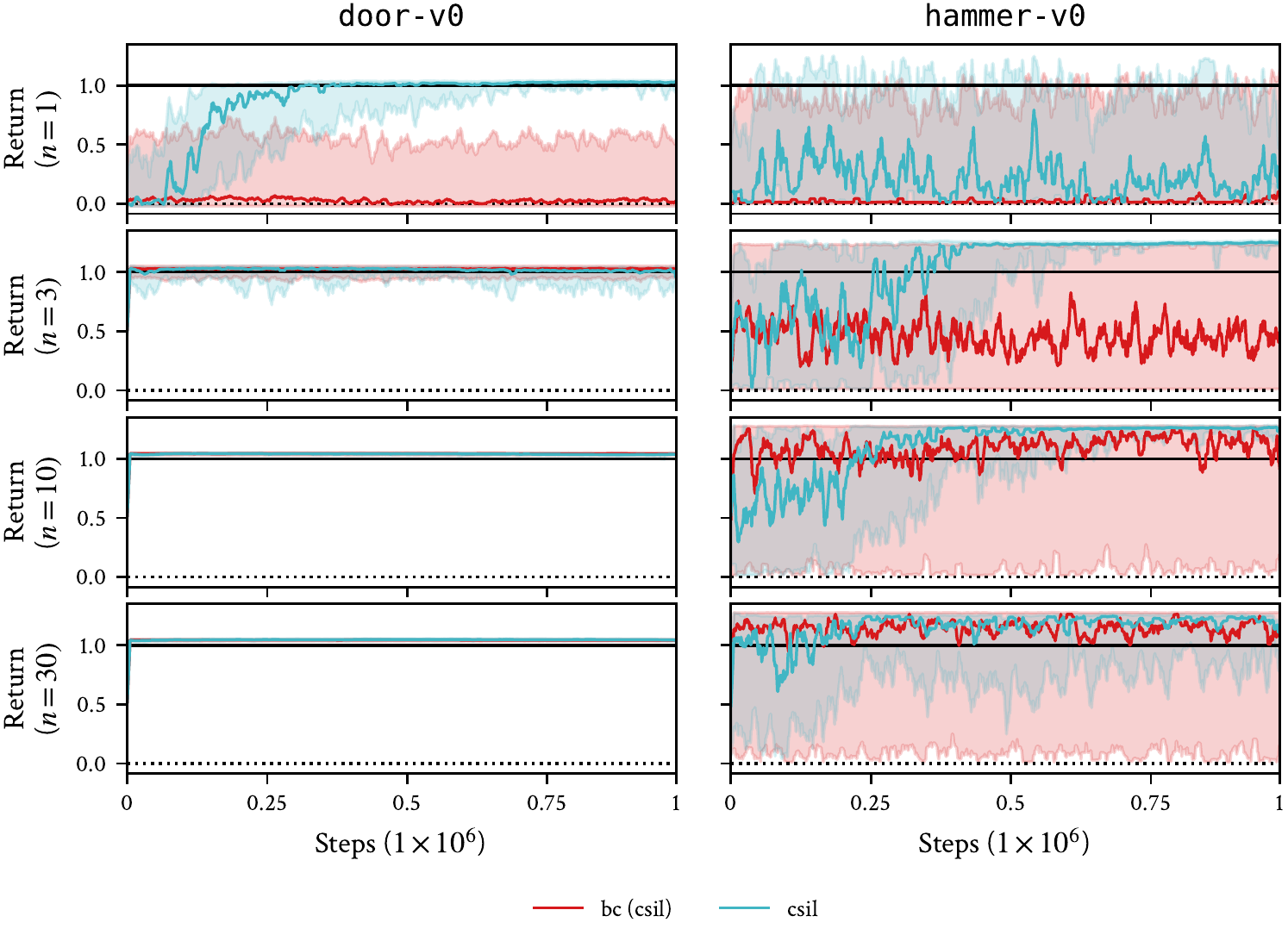}
    \caption{
    Normalized performance of \textsc{csil} for online \adroit tasks against its \bc initialization.
    Uncertainty intervals depict quartiles over ten seeds.
    $n$ refers to demonstration trajectories.
}
    \label{fig:online_adroit_csil_cs_bc}
\end{figure}

~

\begin{figure}[!hb]
    \centering
    \includegraphics[height=0.4\textheight]{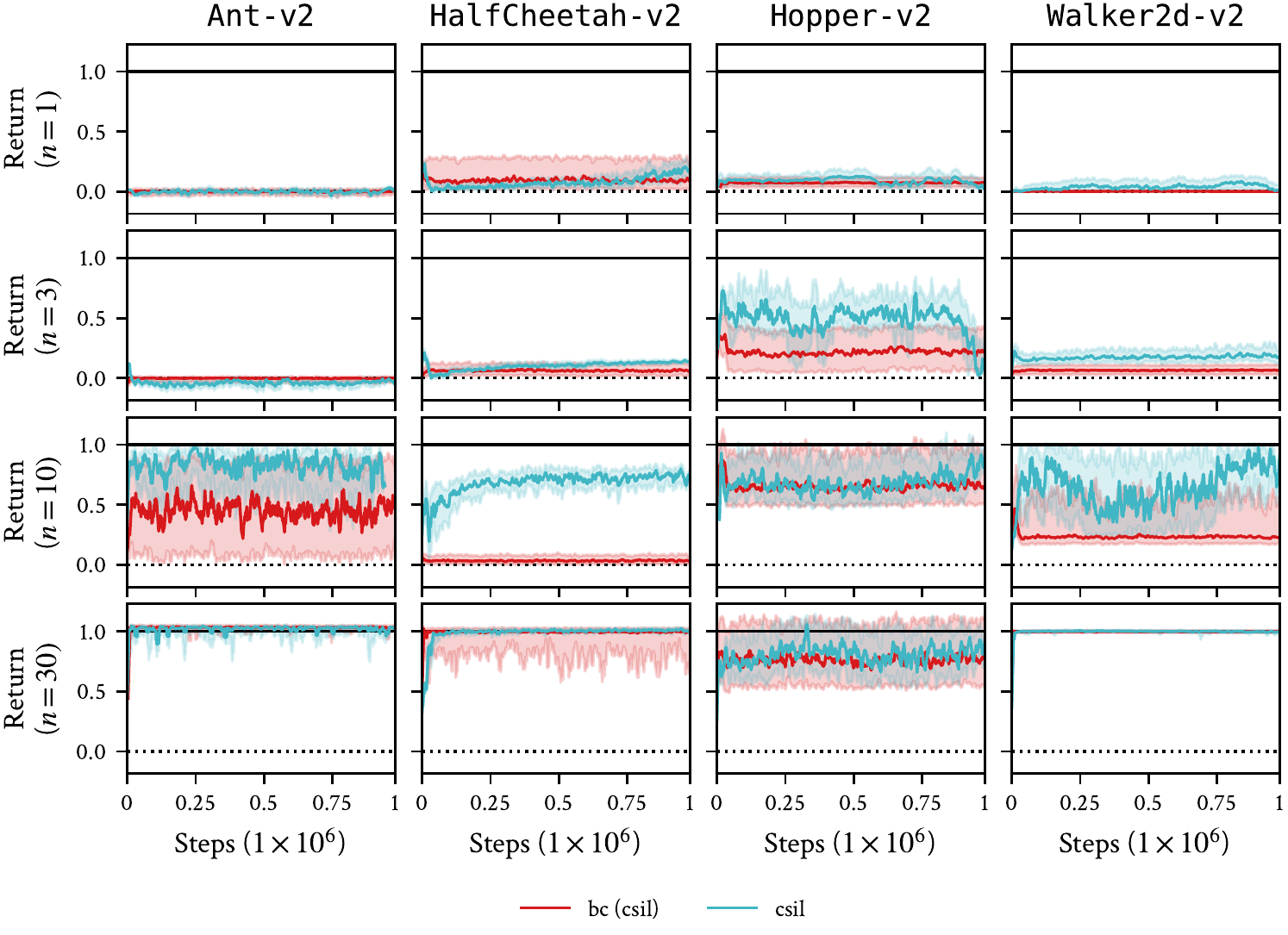}
    \caption{
    Normalized performance of \textsc{csil} for offline \gym tasks against its \bc initialization.
    Uncertainty intervals depict quartiles over ten seeds.
    $n$ refers to demonstration trajectories.
    }
    \label{fig:offline_gym_csil_cs_bc}
\end{figure}

\newpage

\paragraph{Divergence in minimax soft imitation learning}

To investigate the performance issues in \iqlearn and \ppil relative to \csil, we monitor a few key diagnostics during training for online \texttt{door-v0}
and
offline \texttt{Ant-v2} in Figure \ref{fig:sil_unstable}.
One key observation is the drift in the \bc accuracy. 
We attribute this to the instability in the reward or critic due to the saddle-point optimization. 

\begin{figure}[!t]
    \centering
    \includegraphics[height=0.4\textheight]{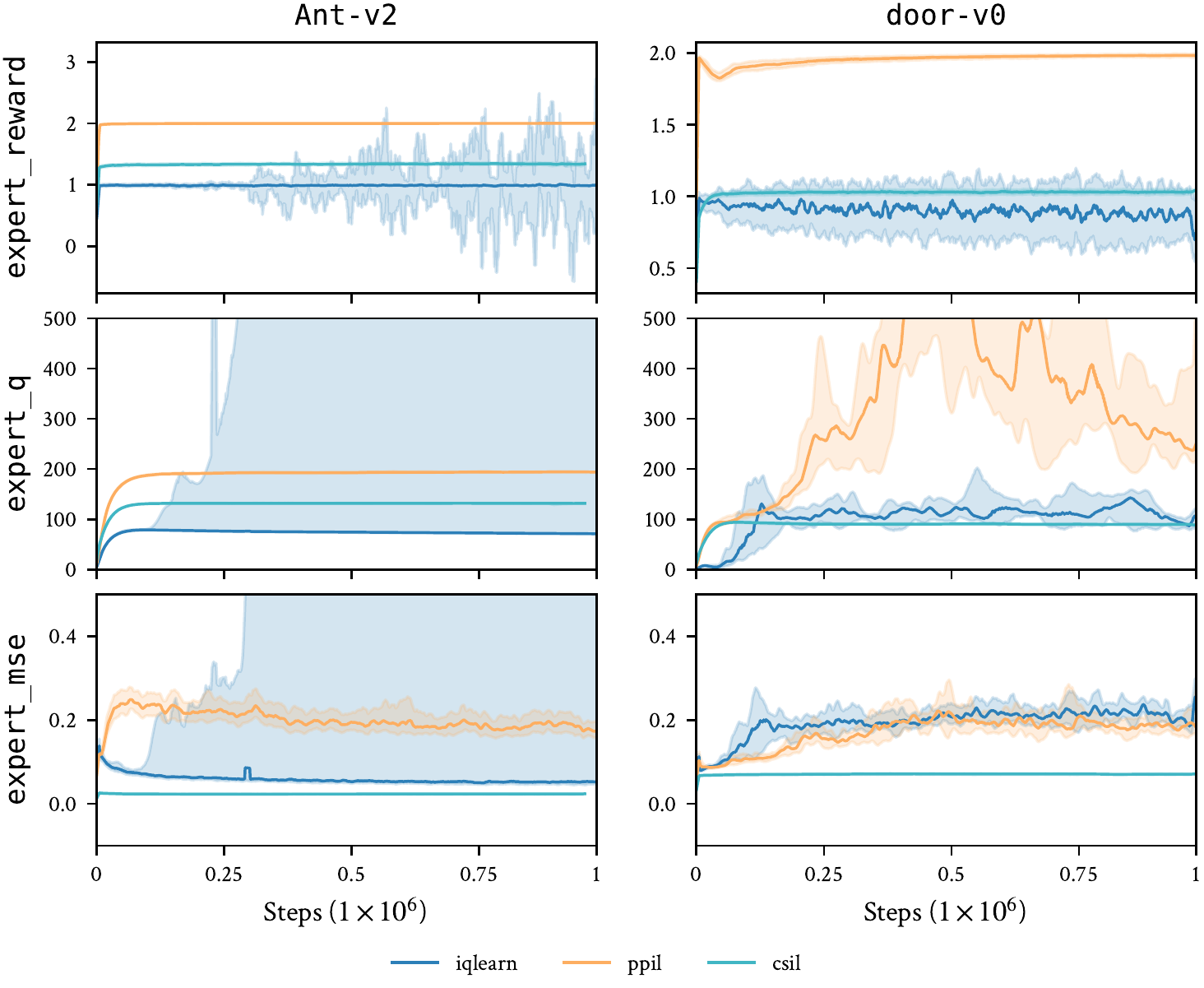}
    \caption{We investigate the weakness of \textsc{iql}earn and \textsc{ppil} for offline learning (\texttt{Ant-v2}) and high-dimensional action spaces (\texttt{door-v0}).
    Compared to \textsc{csil}, \textsc{iql}earn and \textsc{ppil} have less stable and effective $Q$ values, in particular for the expert demonstrations, which results in the policy not inaccurately predicting the demonstration actions (\texttt{expert\_mse}).}
    \label{fig:sil_unstable}
\end{figure}

\subsection{Continuous control from human demonstrations}

Figure \ref{fig:online_robomimic_steps} shows performance on the online setting w.r.t. learning steps, and Figure \ref{fig:offline_robomimic_steps} shows the offline performance w.r.t. learning steps.
For online learning, \csil is consistently more sample efficient than the oracle baseline \sacfd, presumably due to \bc pre-training and \csil{}'s shaped reward w.r.t. \robomimic's sparse reward on success. 
\csil was also able to surpass \citet{mandlekar2022matters}'s success rate on \texttt{NutAssemblySquare}.
For offline learning from suboptimal human demonstrations, it appears to be a much harder setting for \csil to improve on the initial \bc policy, especially for harder tasks.

\newpage

\begin{figure}[!th]
    \centering
    \includegraphics[width=\textwidth]{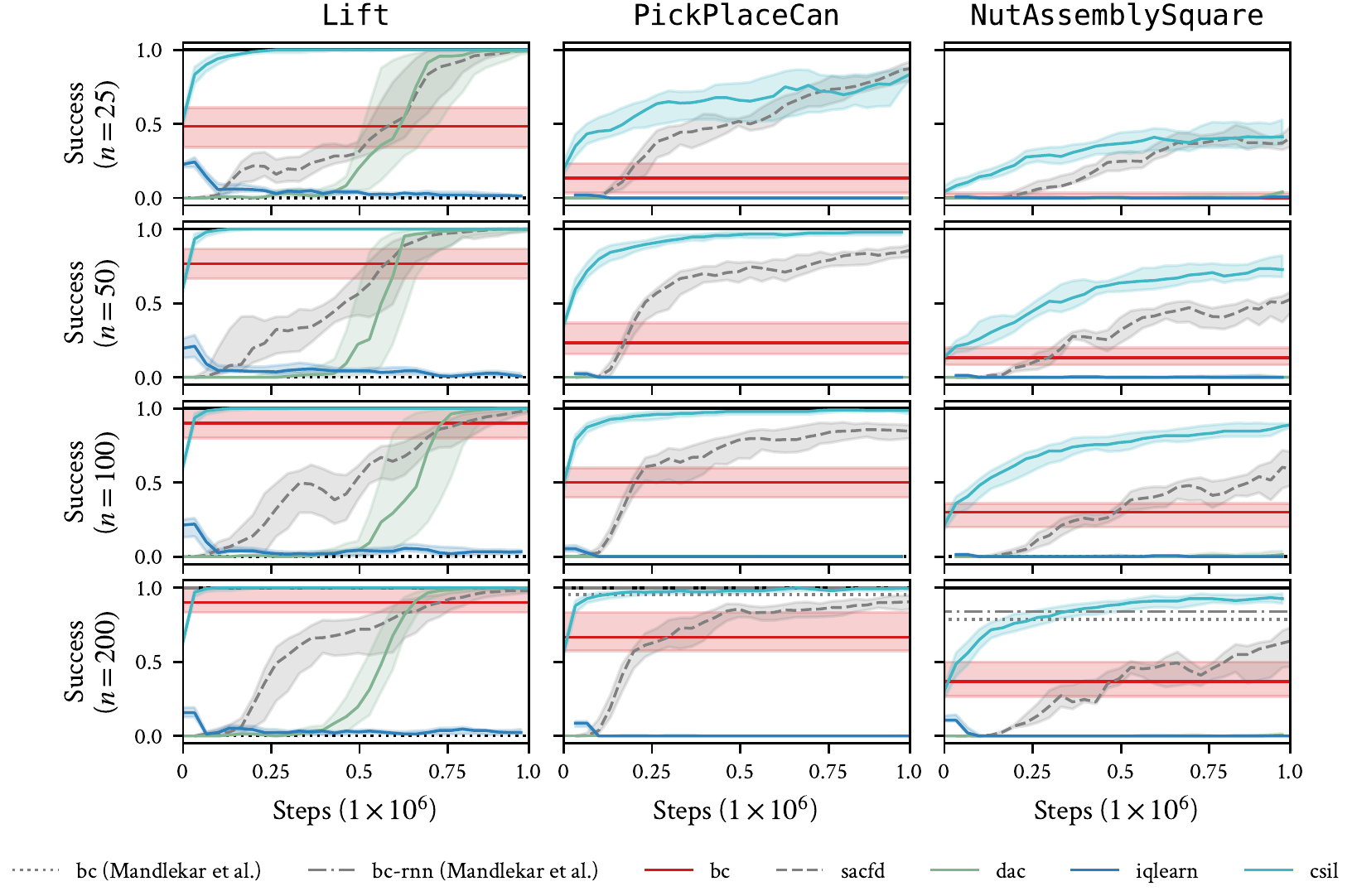}
    \caption{
    Average success rate over 50 evaluations for online imitation learning for \robomimic tasks.
    Uncertainty intervals depict quartiles over ten seeds.
    The \bc policy is trained on the demonstration data. 
    $n$ refers to demonstration trajectories.
    }
    \label{fig:online_robomimic_steps}
\end{figure}

~

\begin{figure}[!hb]
    \centering
    \includegraphics[width=\textwidth]{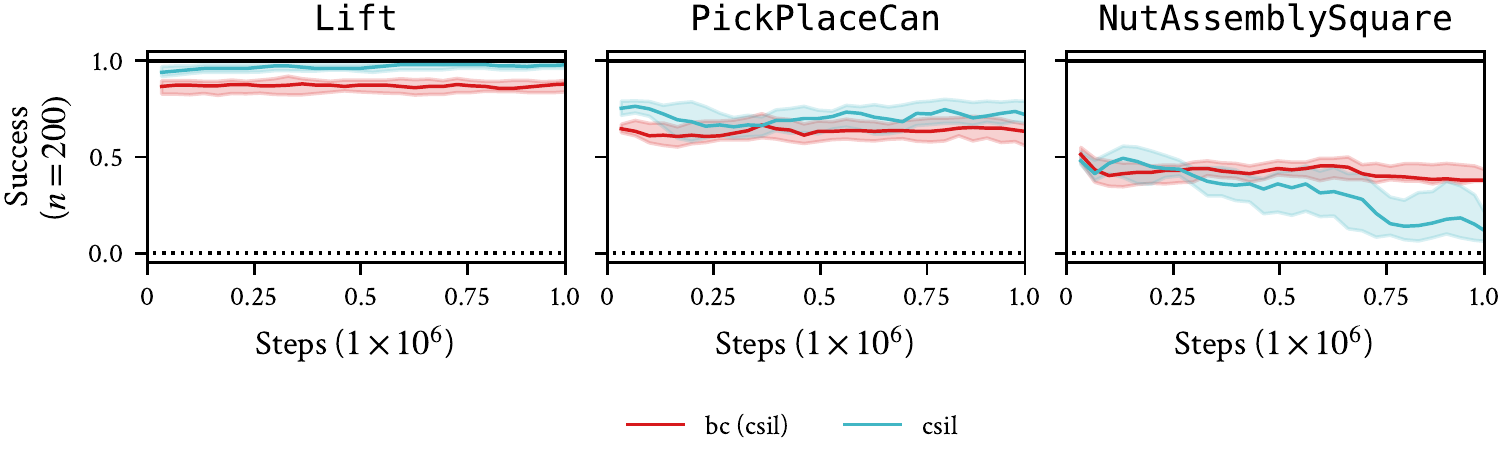}
    \caption{
    Average success rate over 50 evaluations for \emph{offline} imitation learning for \robomimic tasks.
    Uncertainty intervals depict quartiles over ten seeds.
    This experiment shows that learning from suboptimal offline demonstrations is much harder than on-policy data.
    It is also harder to use the KL regularization to ensure improvement over the initial policy.
    $n$ refers to demonstration trajectories.
    }
    \label{fig:offline_robomimic_steps}
\end{figure}

~
\newpage

\section{Ablation studies}
\label{sec:ablation}
This section includes ablation studies for the \csil algorithm and baselines. 

\subsection{Baselines with behavioral cloning pre-training}
To assess the importance of \bc pre-training, we evaluate the performance of \iqlearn and \ppil with \bc pre-training in
Figure \ref{fig:mujoco_online_bc_pretraining}.
While difficult to see across all experiments, there is a small drop in performance at the beginning of training as the policy `unlearns' as the critic and reward are randomly initialized since there is no notion of coherency. 

\begin{figure}[!b]
    \centering
    \includegraphics[height=0.4\textheight]{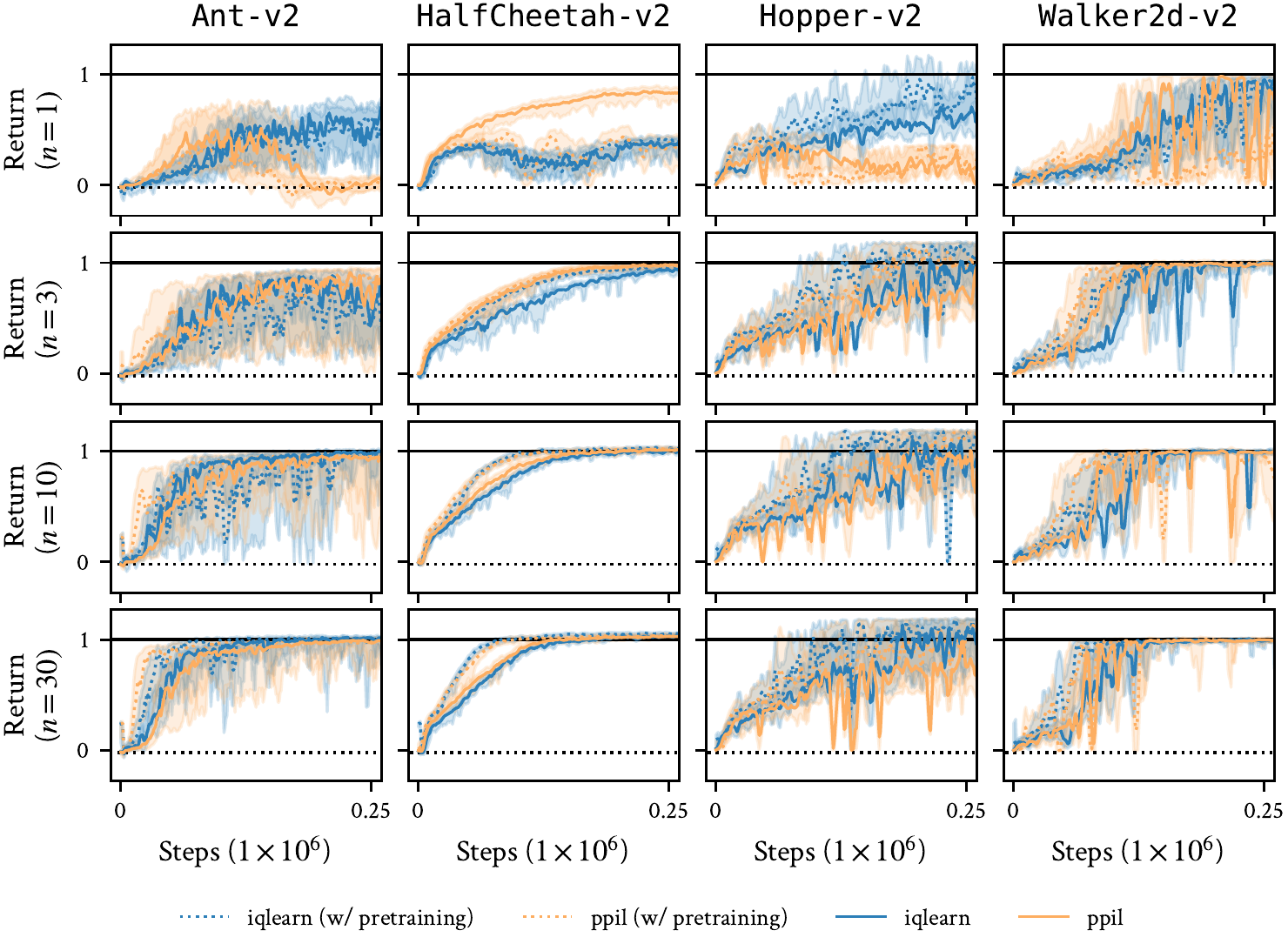}
    \caption{
    Normalized performance of \iqlearn and \ppil for online \gym tasks with \bc pre-training.
    Uncertainty intervals depict quartiles over ten seeds.
    $n$ refers to demonstration trajectories.
    }
    \label{fig:mujoco_online_bc_pretraining}
\end{figure}

\subsection{Baselines with behavioral cloning pre-training and KL regularization}
To bring \iqlearn and \ppil closer to \csil, we implement the baselines with \bc pre-training and \kl regularization, but with \sac-style policies rather than \hetstat.
We evaluate on the harder tasks, online \adroit, and offline \gym, to see if these features contribute sufficient stability to solve the task.
Both Figure \ref{fig:adroit_online_bc_prior} and Figure \ref{fig:mujoco_offline_bc_prior}
show an increase in initial performance, it is not enough to completely stabilize the algorithms. 

\newpage

\begin{figure}[!ht]
    \centering
    \includegraphics[height=0.4\textheight]{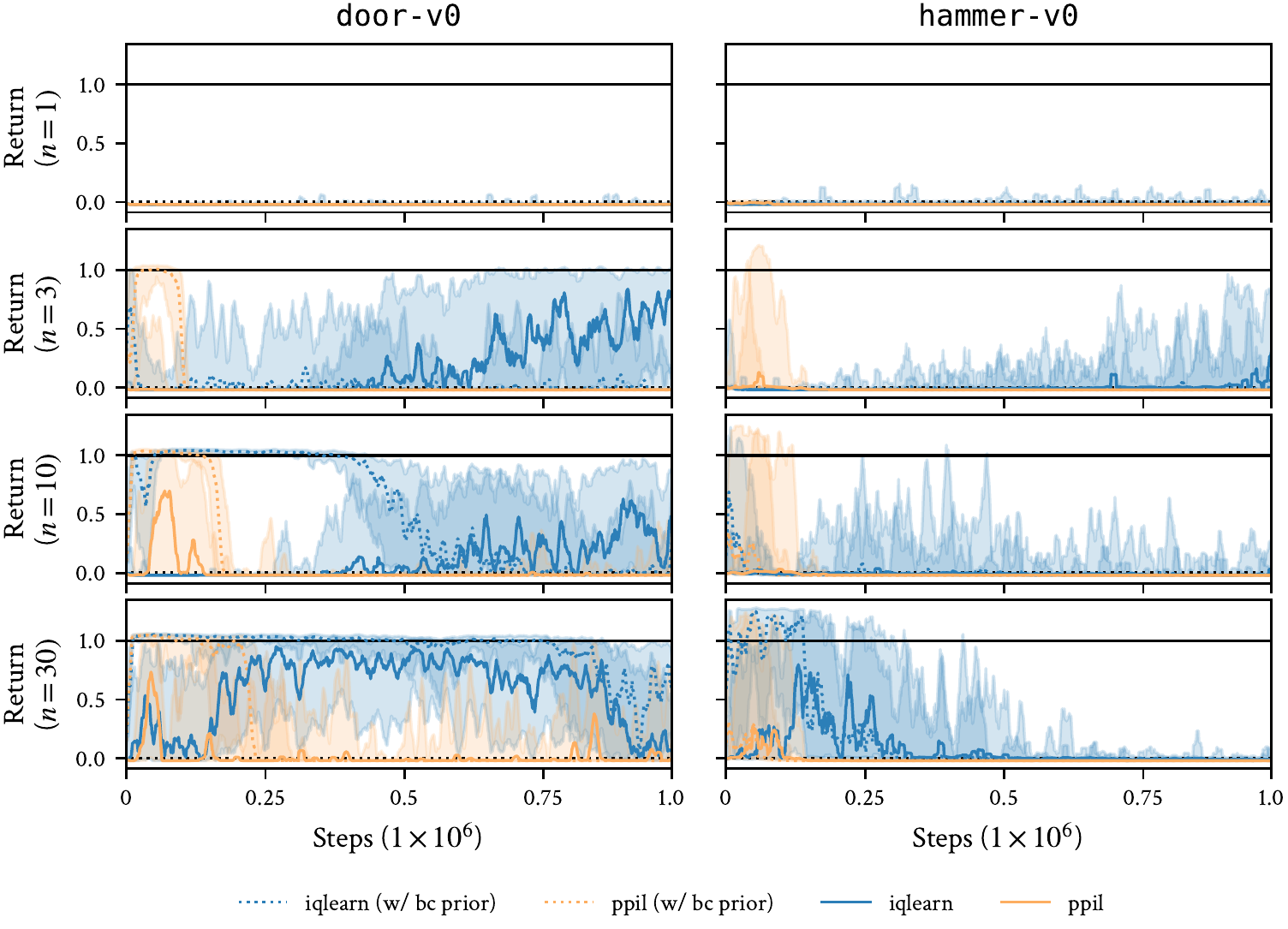}
    \caption{
    Normalized performance of \iqlearn and \ppil for online \adroit tasks with \bc pre-training.
    Uncertainty intervals depict quartiles over ten seeds.
    $n$ refers to demonstration trajectories.
    }
    \label{fig:adroit_online_bc_prior}
\end{figure}

~

\begin{figure}[!hb]
    \centering
    \includegraphics[height=0.4\textheight]{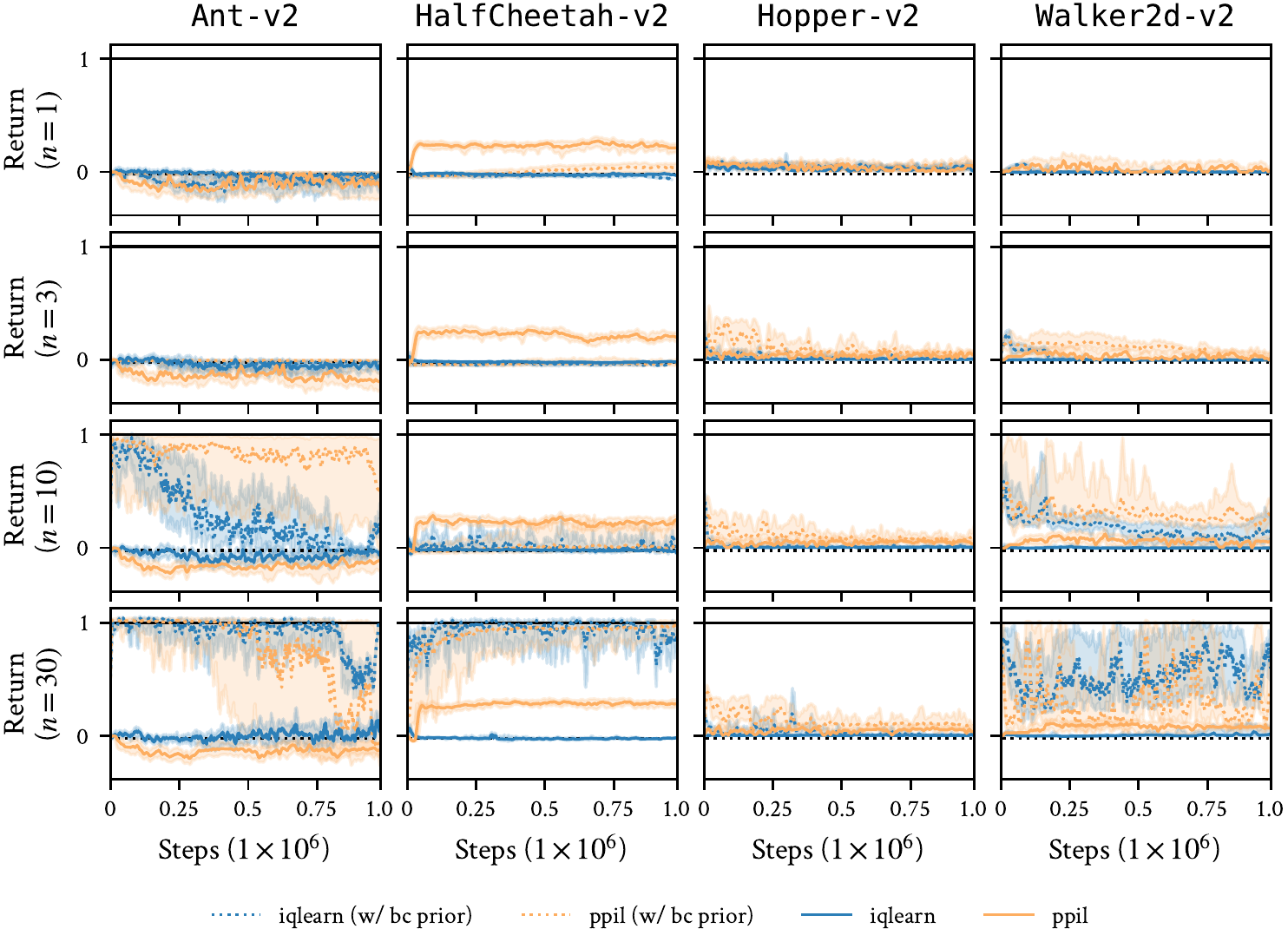}
    \caption{
    Normalized performance of \iqlearn and \ppil for offline \gym tasks with \bc pre-training.
    Uncertainty intervals depict quartiles over ten seeds.
    $n$ refers to demonstration trajectories.
    }
    \label{fig:mujoco_offline_bc_prior}
\end{figure}

\newpage

\subsection{Baselines with heteroscedastic stationary policies}

To investigate the impact of \hetstat policies alone, we evaluated \ppil and \iqlearn with \hetstat policies on the online \gym tasks in Figure \ref{fig:mujoco_online_hetstat}.
There is not a significant discrepancy between \mlp and \hetstat policies.

\begin{figure}[!tb]
    \centering
    \includegraphics[height=0.4\textheight]{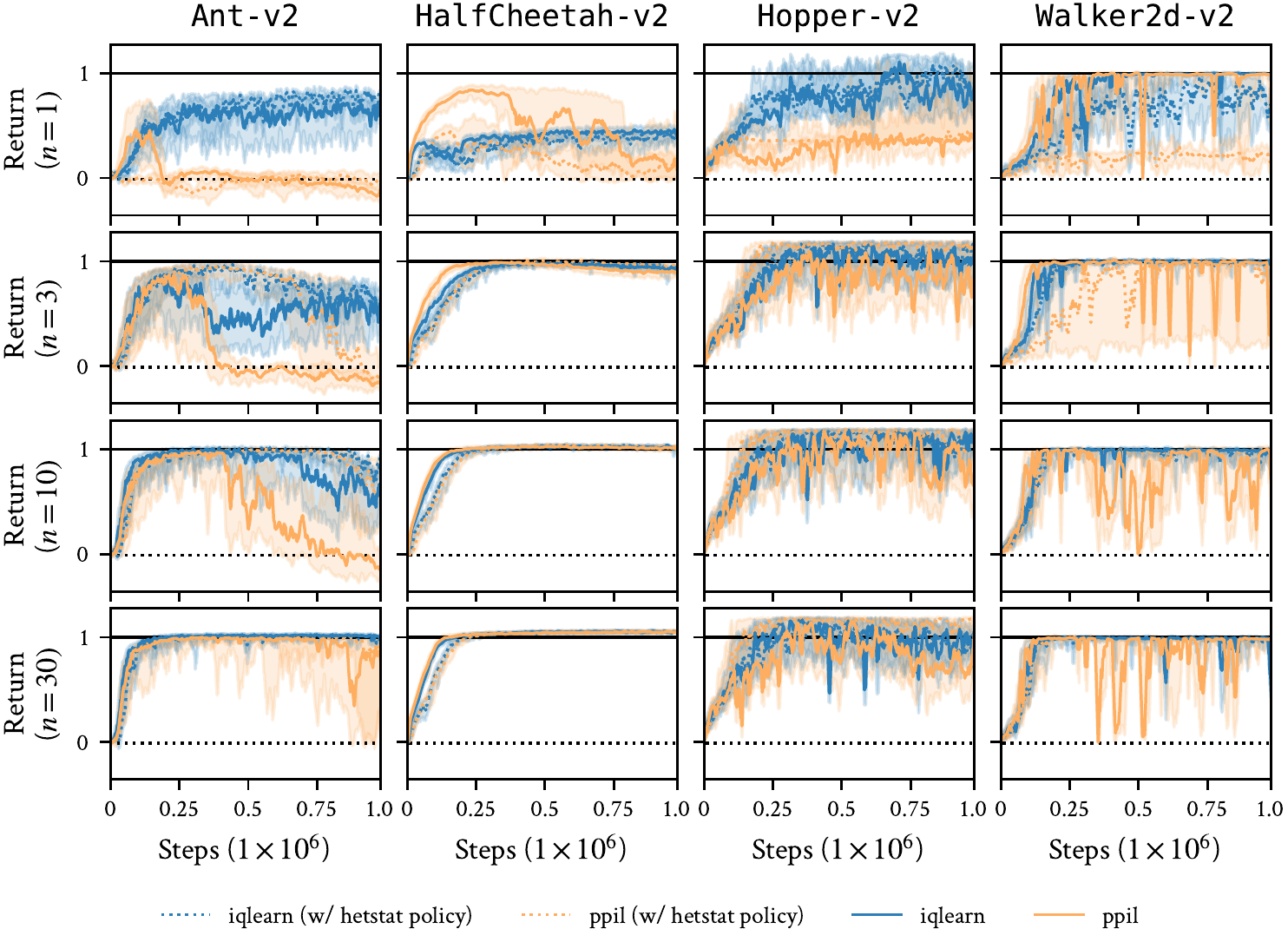}
    \caption{
    Normalized performance of \iqlearn and \ppil for online \gym tasks with \hetstat policies.
    Uncertainty intervals depict quartiles over ten seeds.
    $n$ refers to demonstration trajectories.
    }
    \label{fig:mujoco_online_hetstat}
\end{figure}

\subsection{Baselines with pre-trained heteroscedastic stationary policies as priors}

To bring \iqlearn and \ppil closer to \csil, we implement the baselines with \bc pre-training, \hetstat policies and \kl regularization with stationary policies.
We evaluate on the harder tasks, online \adroit and offline \gym, to see if these features contribute sufficient stability to solve the task.
Both Figure \ref{fig:adroit_offline_hetstat_prior} and Figure \ref{fig:mujoco_offline_hetstat_prior}
show an increase in performance, especially with increasing number of demonstrations, but it is ultimately not enough to completely stabilize the algorithms to the degree seen in \csil. 

\newpage

\begin{figure}[!th]
    \vspace{-1em}
    \centering
    \includegraphics[height=0.4\textheight]{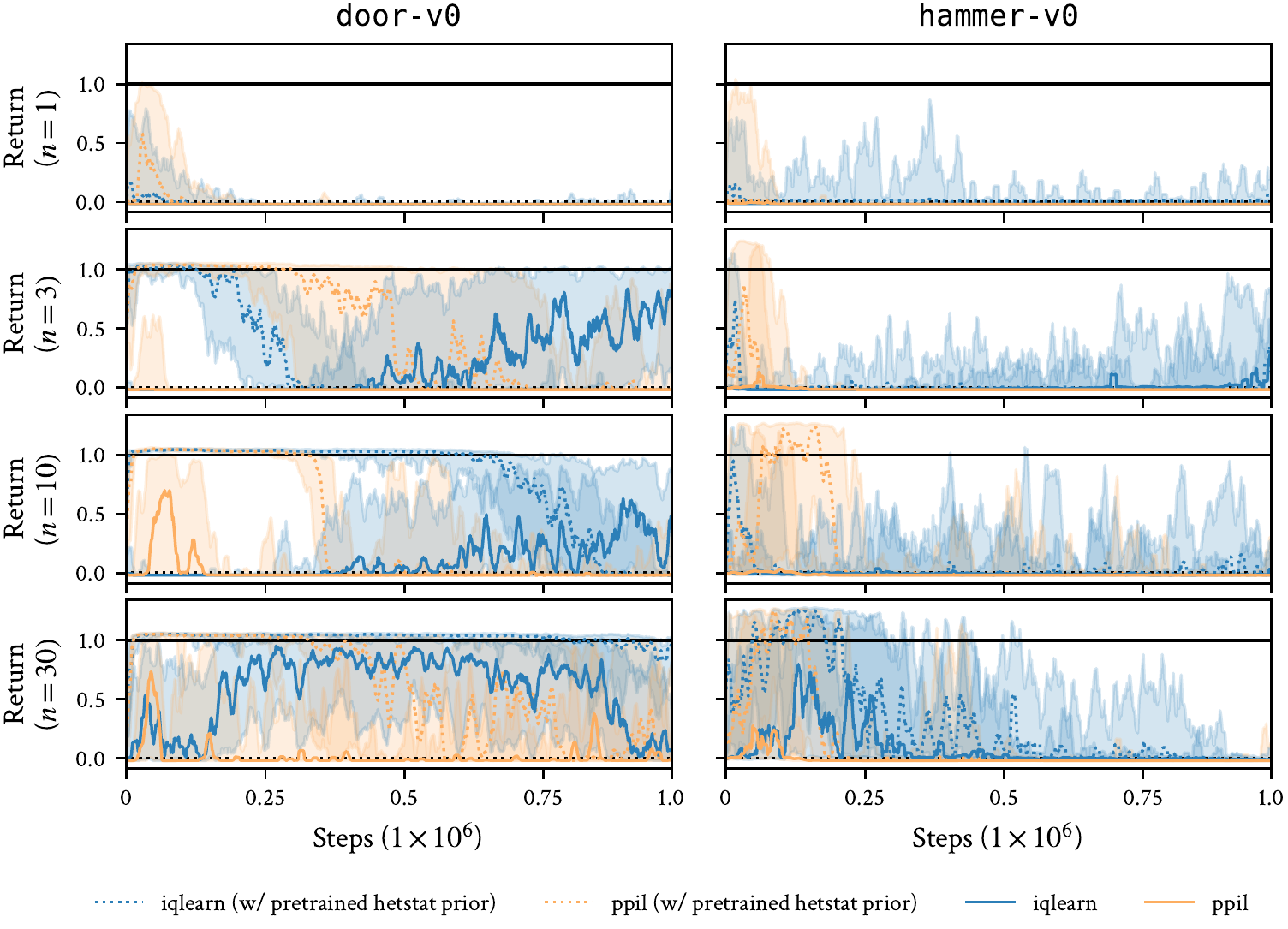}
    \caption{
    Normalized performance of \iqlearn and \ppil for online \adroit tasks with pre-trained \hetstat policies and \kl-regularization.
    Uncertainty intervals depict quartiles over ten seeds.
    $n$ refers to demonstration trajectories.
    }
    \label{fig:adroit_offline_hetstat_prior}
\end{figure}

~

\begin{figure}[!hb]
    \centering
    \includegraphics[height=0.4\textheight]{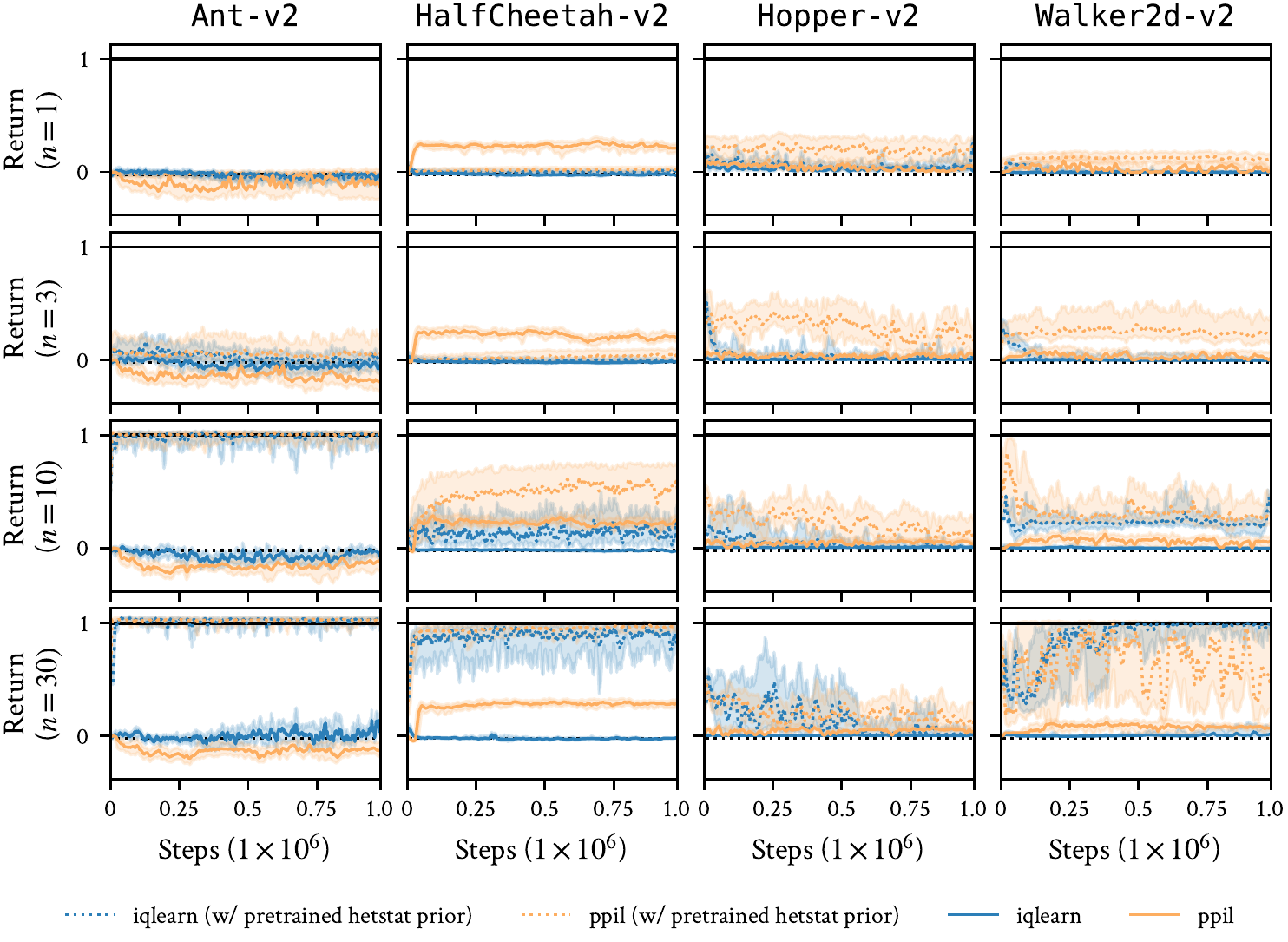}
    \caption{
    Normalized performance of \iqlearn and \ppil for offline \gym tasks with pre-trained \hetstat policies and \kl-regularization.
    Uncertainty intervals depict quartiles over ten seeds.
    $n$ refers to demonstration trajectories.
    }
    \label{fig:mujoco_offline_hetstat_prior}
\end{figure}

\newpage

\begin{figure}[!ht]
    \vspace{-0.5em}
    \centering
    \includegraphics[height=0.4\textheight]{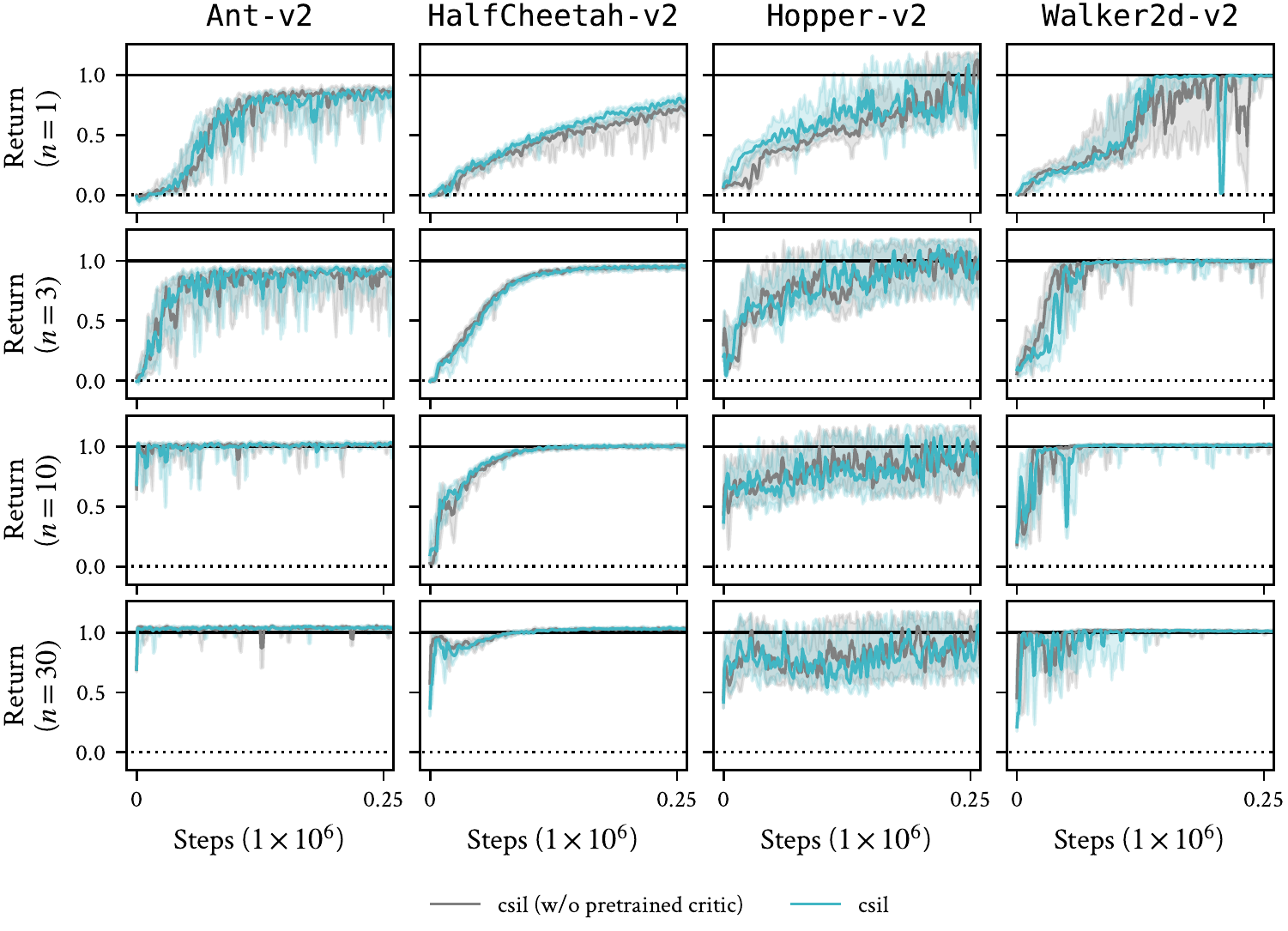}
    \caption{
    Normalized performance of \csil for online \gym tasks a critic pre-training ablation.
    Uncertainty intervals depict quartiles over ten seeds.
    $n$ refers to demonstration trajectories.
    }
    \label{fig:online_csil_critic_pt}
\end{figure}

~

\begin{figure}[!hb]
    \vspace{-0.5em}
    \centering
    \includegraphics[height=0.4\textheight]{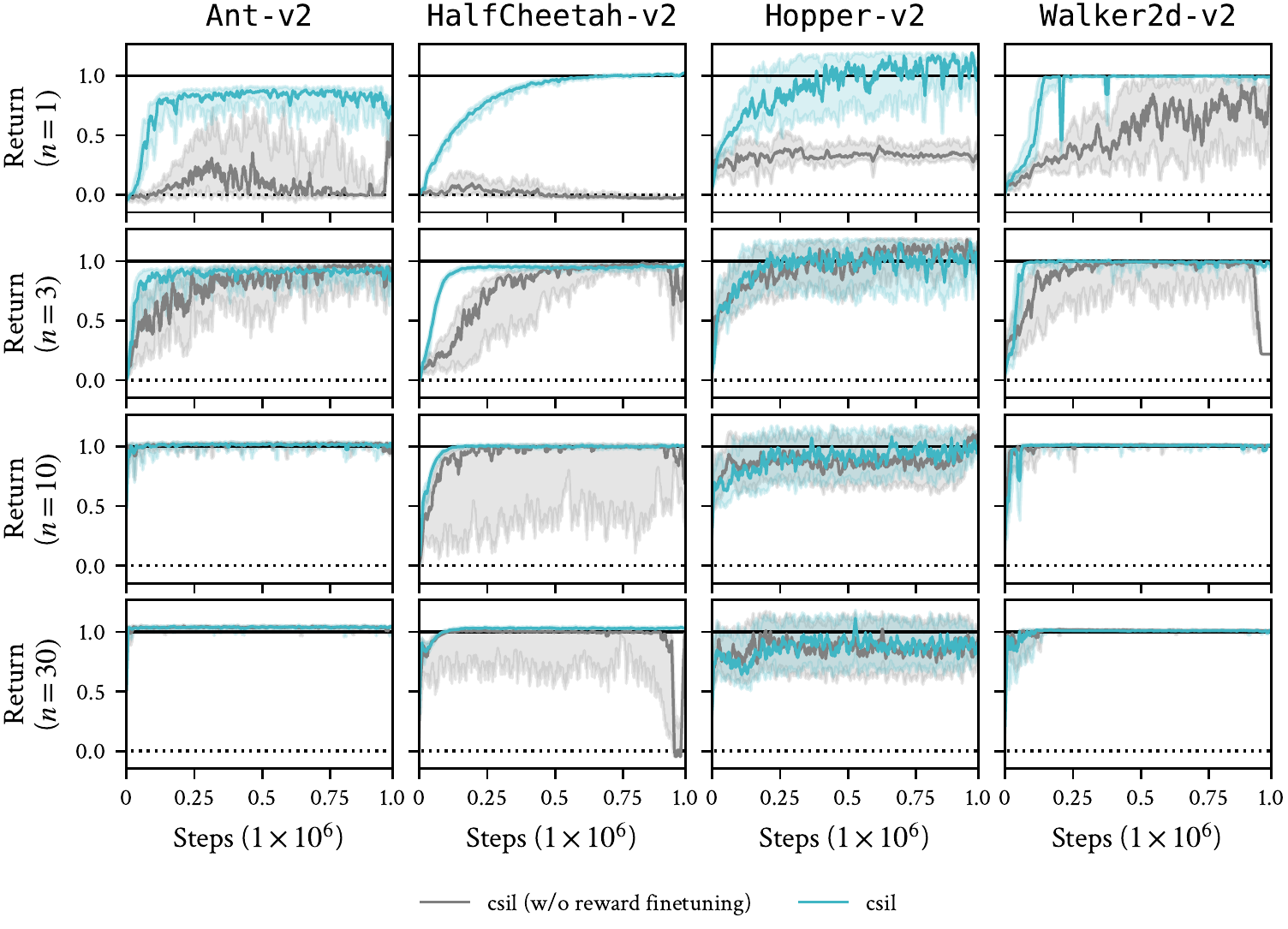}
    \vspace{-0.5em}
    \caption{
    Normalized performance of \csil for online \gym tasks with a reward finetuning ablation.
    Uncertainty intervals depict quartiles over ten seeds.
    Refinement is most important in the low demonstration setting where the initial \bc policy  and coherent reward is less defined, and therefore the initial policy is poor and the coherent reward is more susceptible to exploitation of the stationary approximation errors.
    $n$ refers to demonstration trajectories.}
    \label{fig:online_csil_reward_finetuning}
\end{figure}

\newpage

\subsection{Importance of critic pre-training}
To ablate the importance of critic pre-training, 
Figure \ref{fig:online_csil_critic_pt}
shows the performance difference on online \gym tasks. 
Critic pre-training provides only a minor performance improvement.

\subsection{Importance of reward finetuning}
\label{sec:reward_ablation}
To ablate the importance of reward finetuning, 
Figure \ref{fig:online_csil_reward_finetuning}.
shows the performance difference on online \gym tasks. 
Reward finetuning is crucial in the setting of few demonstration trajectories, as the imitation reward is sparse and therefore csil is more susceptible to approximation errors in stationarity. 
It is also important for reducing performance variance and ensuring stable convergence. 

\subsection{Importance of critic regularization}
To ablate the importance of critic Jacobian regularization, 
Figure \ref{fig:online_csil_critic_grad}.
shows the performance difference on online \gym tasks. 
The regularization has several benefits.
One is faster convergence, as it encodes the coherency inductive bias into the critic. 
A second is stability, as it encodes first-order optimality w.r.t. the expert demonstrations, mitigating errors in policy evaluation.
The inductive bias also appears to reduce performance variance. 
One observation is that the performance of \texttt{Ant-v2} improves without the regularization. 
This difference could be because the regularization encourages imitation rather than apprenticeship-style learning where the agent improves on the expert. 

\subsection{Stationary and non-stationary policies}

One important ablation for \csil is evaluating the performance with standard \mlp policies like those used by the baselines.
We found that this setting was not numerically stable enough to complete an ablation experiment. 
Due to the spurious behavior (e.g., as shown in Figure \ref{fig:hetstat}), the log-likelihood values used in the reward varied significantly in magnitude, destabilizing learning significantly, leading to policies that could no longer be numerically evaluated.
This result means that stationarity is a critical component of \csil.

\subsection{Effectiveness of constant rewards}
\label{sec:constant_reward}
Many imitation rewards typically encode an inductive bias of positivity or negativity, which encourage survival or `minimum time' strategies respectively \citep{Kostrikov2019}.
We ablate \csil with this inductive bias for survival tasks (\texttt{Hopper-v2} and \texttt{Walker-v2})
and minimum-time tasks
(\robomimic).
Figure \ref{fig:online_positive_constant_reward_ablation}
shows that positive-only rewards are sufficient for 
\texttt{Hopper-v2} but not for \texttt{Walker-v2}.
Figure \ref{fig:online_negative_constant_reward_ablation}
shows that negative-only rewards is effective for the simpler tasks (\texttt{Lift}) but not the harder tasks (\texttt{NutAssemblySquare)}.

\subsection{Importance of behavioral cloning pre-training}

Figure \ref{fig:csil_bc_ablation} shows \csil without any policy pre-training, so the reward is learned only though the finetuning objective.
\csil without pre-training does not appear to work for any of the tasks. 
This poor performance could be attributed to the lack of hyperparameter tuning, since the reward learning hyperparameters were chosen assuming a good initialization.
It could also due to the maximum likelihood fitting of the coherent reward, since the refinement does not use the `faithful' objective (Section \ref{sec:faithful}) that improves the regression fit. 

\newpage

\begin{figure}[!ht]
    \vspace{-0.5em}
    \centering
    \includegraphics[height=0.4\textheight]{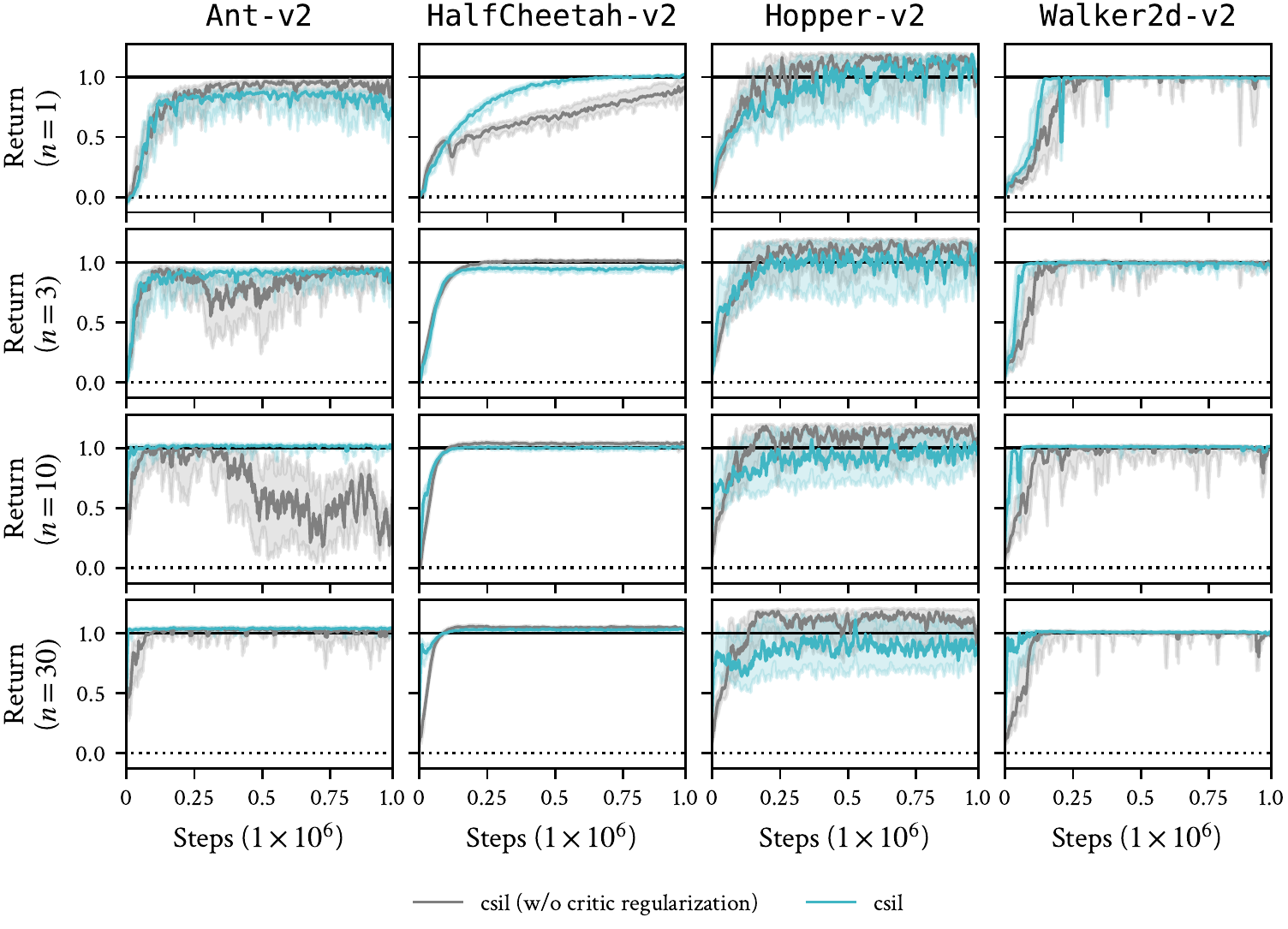}
    \vspace{-0.5em}
    \caption{
    Normalized performance of \csil for online \mujoco \gym without regularizing the critic Jacobian.
    Uncertainty intervals depict quartiles over ten seeds.
    The benefit of this regularization is somewhat environment dependent, but clearly regularizes learning to the demonstrations effectively in many cases.
    Performance reduction is likely due to underfiftting during policy evaluation due to the regularization.
    $n$ refers to demonstration trajectories.}
    \label{fig:online_csil_critic_grad}
\end{figure}

~

\begin{figure}[!hb]
    \vspace{-0.5em}
    \centering
    \includegraphics[height=0.4\textheight]{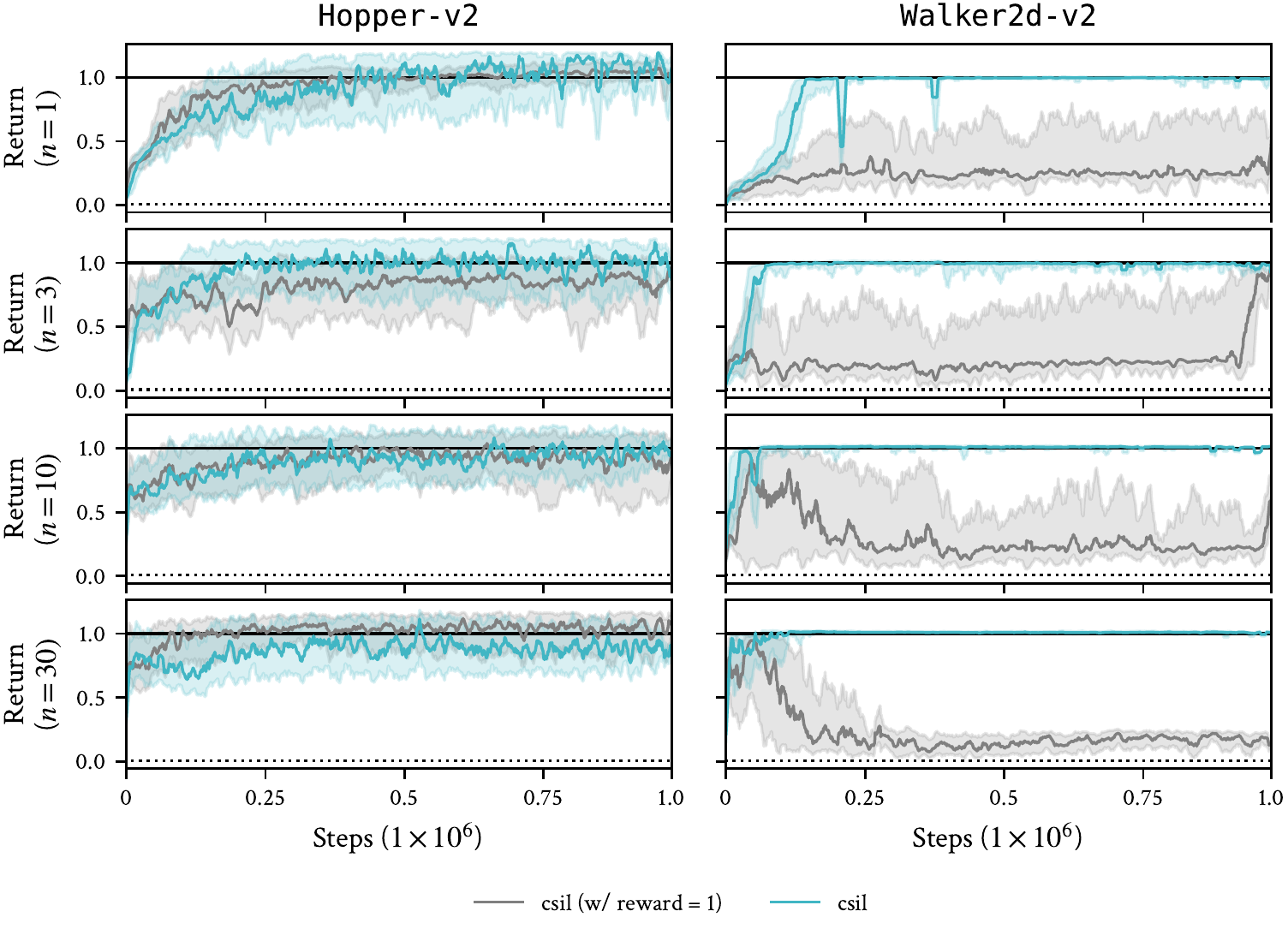}
    \vspace{-0.5em}
    \caption{Online \mujoco \gym tasks with absorbing states where \csil is given a constant +1 reward function.
    Uncertainty intervals depict quartiles over ten seeds.
    While this is sometimes enough to solve the task (due to the absorbing state), the \csil reward performs equal or better.
    $n$ refers to demonstration trajectories.
    }
    \label{fig:online_positive_constant_reward_ablation}
\end{figure}

\newpage

\begin{figure}[!ht]
    \centering
    \includegraphics[height=0.4\textheight]{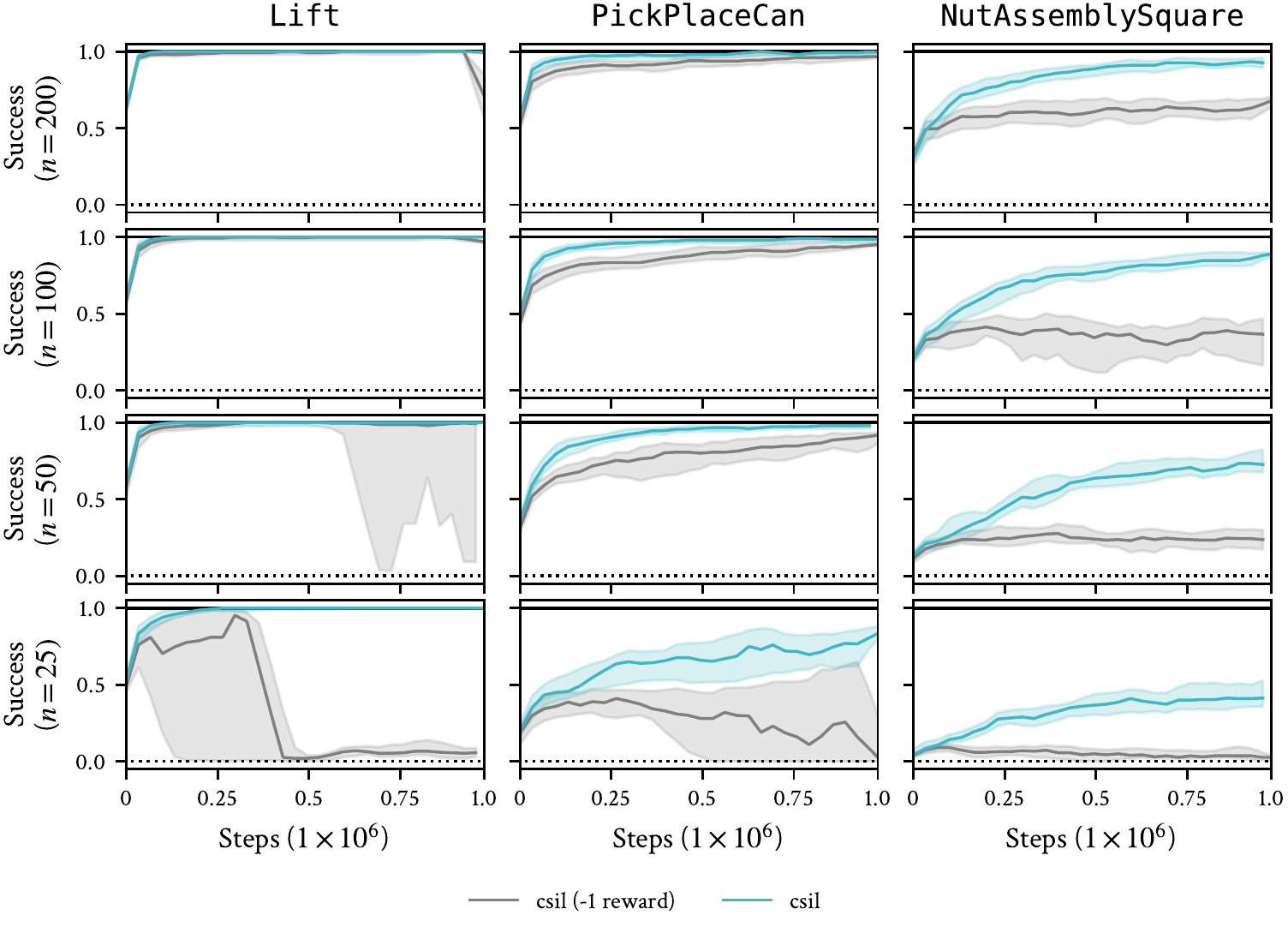}
    \caption{Online \robomimic tasks where \csil is given a constant -1 reward function.
    Uncertainty intervals depict quartiles over ten seeds.
    While a constant reward is sometimes enough to solve the task (due to the absorbing state), the \csil reward performs roughly equal or better, with lower performance variance.
    $n$ refers to demonstration trajectories.}
    \label{fig:online_negative_constant_reward_ablation}
\end{figure}

~

\begin{figure}[!hb]
    \vspace{0em}
    \centering
    \includegraphics[height=0.4\textheight]{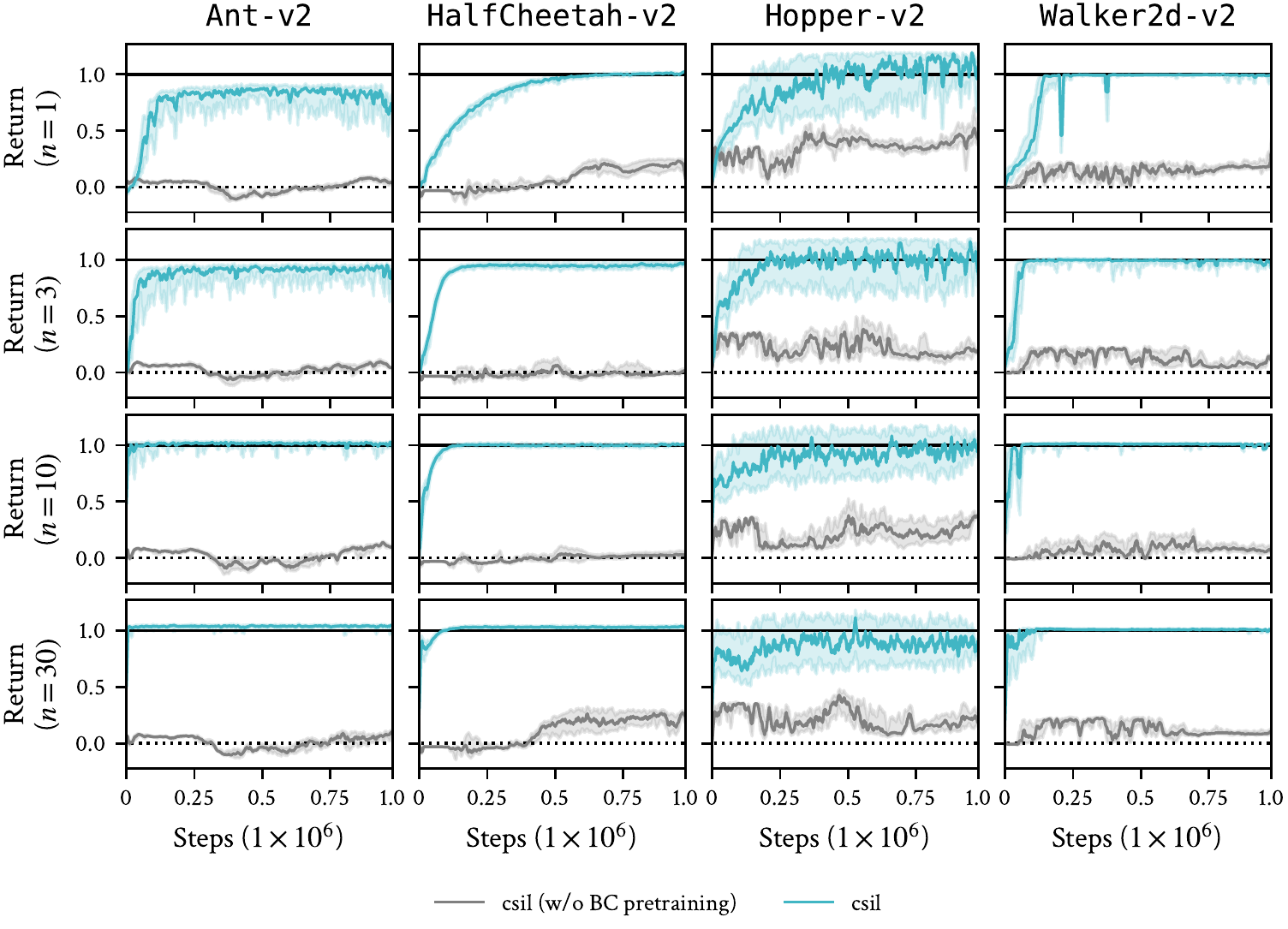}
    \caption{Online \texttt{MuJoCo} \texttt{gym} tasks where {\textsc{csil}} has no \textsc{bc} pre-training.
    Uncertainty intervals depict quartiles over 5 seeds.
    $n$ refers to demonstration trajectories.}
    \label{fig:csil_bc_ablation}
\end{figure}

\end{document}